\def\A{{\bf A}}
\def\a{{\bf a}}
\def\B{{\bf B}}
\def\bb{{\bf b}}
\def\C{{\bf C}}
\def\c{{\bf c}}
\def\D{{\bf D}}
\def\G{{\bf G}}
\def\I{{\bf I}}
\def\K{{\bf K}}
\def\k{{\bf k}}
\def\LL{{\bf L}}
\def\M{{\bf M}}
\def\N{{\bf N}}
\def\PP{{\bf P}}
\def\Q{{\bf Q}}
\def\R{{\bf R}}
\def\r{{\bf r}}
\def\U{{\bf U}}
\def\u{{\bf u}}
\def\V{{\bf V}}
\def\v{{\bf v}}
\def\W{{\bf W}}
\def\X{{\bf X}}
\def\x{{\bf x}}
\def\Y{{\bf Y}}
\def\Z{{\bf Z}}
\def\0{{\bf 0}}
\def\1{{\bf 1}}
\def\AM{{\mathcal A}}
\def\FM{{\mathcal F}}
\def\JM{{\mathcal J}}
\def\NM{{\mathcal N}}
\def\OM{{\mathcal O}}
\def\XM{{\mathcal X}}
\def\RB{{\mathbb R}}
\def\PB{{\mathbb P}}
\def\ph{\mbox{\boldmath$\phi$\unboldmath}}
\def\Ph{\mbox{\boldmath$\Phi$\unboldmath}}
\def\Si{\mbox{\boldmath$\Sigma$\unboldmath}}
\def\Lam{\mbox{\boldmath$\Lambda$\unboldmath}}
\def\Pii{\mbox{\boldmath$\Pi$\unboldmath}}
\def\Ph{\mbox{\boldmath$\Phi$\unboldmath}}
\def\argmin{\mathop{\rm argmin}}
\def\range{\mathrm{range}}
\def\nnz{\mathrm{nnz}}
\def\tr{\mathrm{tr}}
\def\rk{\mathrm{rank}}
\def\diag{\mathsf{diag}}
\newtheorem{assumption}{Assumption}
\newtheorem{defn}{Definition}
\newtheorem{rmk}{Remark}
\begin{document}
	
\title{Scalable Kernel K-Means Clustering with Nystr\"om Approximation: Relative-Error Bounds}

\author{\name Shusen Wang \email shusen.wang@stevens.edu\\ 
	\addr Department of Computer Science \\ 
	Stevens Institute of Technology\\ 
	Hoboken, NJ 07030, USA 
	\AND 
	\name Alex Gittens \email gittea@rpi.edu\\ 
	\addr Computer Science Department\\ 
	Rensselaer Polytechnic Institute\\ 
	Troy, NY 12180, USA
	\AND 
	\name Michael W.\ Mahoney \email mmahoney@stat.berkeley.edu \\ 
	\addr International Computer Science Institute and Department of Statistics\\ 
	University of California at Berkeley\\ 
	Berkeley, CA 94720, USA
	}

\editor{Sanjiv Kumar}

\maketitle

\begin{abstract}%
	Kernel $k$-means clustering can correctly identify and extract a far more varied collection of cluster structures than the linear $k$-means clustering algorithm.  However, kernel $k$-means clustering is computationally expensive when the non-linear feature map is high-dimensional and there are many input points.
	
	Kernel approximation, e.g., the Nystr\"om method, has been applied in previous works to approximately solve kernel learning problems when both of the above conditions are present. This work analyzes the application of this paradigm to kernel $k$-means clustering, and shows that applying the linear $k$-means clustering algorithm to $\frac{k}{\epsilon} (1 + o(1))$ features constructed using a so-called rank-restricted Nystr\"om approximation results in cluster assignments that satisfy a $1 + \epsilon$ approximation ratio in terms of the kernel $k$-means cost function, relative to the guarantee provided by the same algorithm without the use of the Nystr\"om method. As part of the analysis, this work establishes a novel $1 + \epsilon$ relative-error trace norm guarantee for low-rank approximation using the rank-restricted Nystr\"om approximation.
	
	Empirical evaluations on the $8.1$ million instance MNIST8M dataset demonstrate the scalability and usefulness of kernel $k$-means clustering with Nystr\"om approximation. This work argues that spectral clustering using Nystr\"om approximation---a popular and computationally efficient, but theoretically unsound approach to non-linear clustering---should be replaced with the efficient and theoretically sound combination of kernel $k$-means clustering with Nystr\"om approximation. The superior performance of the latter approach is empirically verified. 
\end{abstract}

\begin{keywords}
	kernel $k$-means clustering, the Nystr\"om method, randomized linear algebra
\end{keywords}

\section{Introduction} \label{sec:intro}

Cluster analysis divides a data set into several groups using information found
only in the data points. Clustering can be used in an exploratory manner to
discover meaningful groupings within a data set, or it can serve as the
starting point for more advanced analyses. As such, applications of clustering
abound in machine learning and data analysis, including,  \emph{inter alia}:
genetic expression analysis~\citep{sharan2002cluster}, market
segmentation~\citep{chaturvedi1997feature}, social network
analysis~\citep{handcock2007model}, image
segmentation~\citep{haralick1985segmentation}, anomaly
detection~\citep{chandola2009anomaly}, collaborative
filtering~\citep{ungar1998clustering}, and fast approximate learning of
non-linear models~\citep{si2014memory}.

Linear $k$-means clustering is a standard and well-regarded approach to cluster
analysis that partitions input vectors $\{\a_1, \ldots, \a_n\} \subset \RB^d$
into $k$ clusters, in an unsupervised manner, by assigning each vector to the
cluster with the nearest centroid. Formally, linear $k$-means clustering seeks
to partition the set $[n] = \{1, \ldots, n\}$ into $k$ disjoint sets $\JM_1,
\ldots, \JM_k$ by solving
\begin{small}
	\begin{eqnarray} \label{eq:kmeans}
	\argmin_{\JM_1, \ldots, \JM_k} \;
	\frac{1}{n} \sum_{i=1}^k \sum_{j\in \JM_i} 
	\bigg\| \a_j \: - \: \frac{1}{|\JM_i |} \sum_{l \in \JM_i} \a_l \bigg\|_2^2.
	\end{eqnarray}
\end{small}%
Lloyd's algorithm~\citep{lloyd1982least} is a standard approach for finding local minimizers of \eqref{eq:kmeans}, and is a staple in data mining and machine learning.

Despite its popularity, linear $k$-means clustering is not a universal solution
to all clustering problems. In particular, linear $k$-means clustering strongly biases
the recovered clusters towards isotropy and sphericity.  Applied to the data in
Figure~\ref{fig:kmeans_example:1}, Lloyd's algorithm is perfectly capable of
partitioning the data into three clusters which fit these assumptions.
However, the data in Figure~\ref{fig:kmeans_example:2} do not fit these
assumptions: the clusters are ring-shaped and have coincident centers, so
minimizing the linear $k$-means objective does not recover these clusters.

\begin{figure}[!ht]
	\begin{center}
		\centering
		\subfigure[\,]{\includegraphics[width=0.26\textwidth]{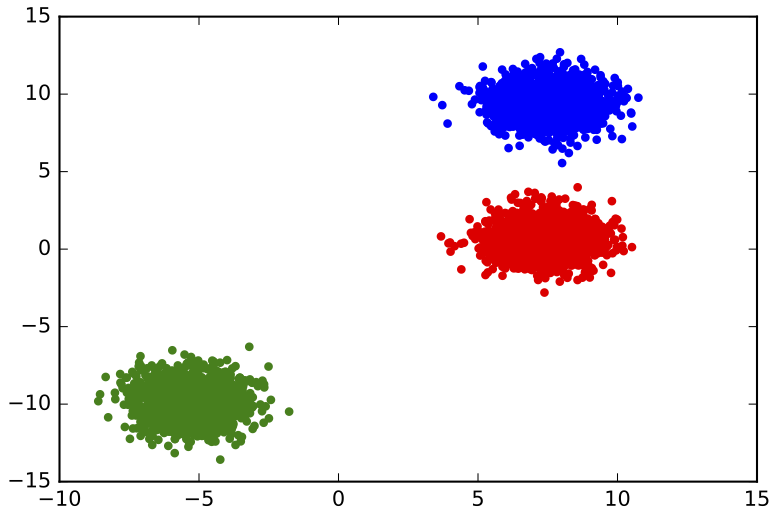} \label{fig:kmeans_example:1}}
		\subfigure[\,]{\includegraphics[width=0.26\textwidth]{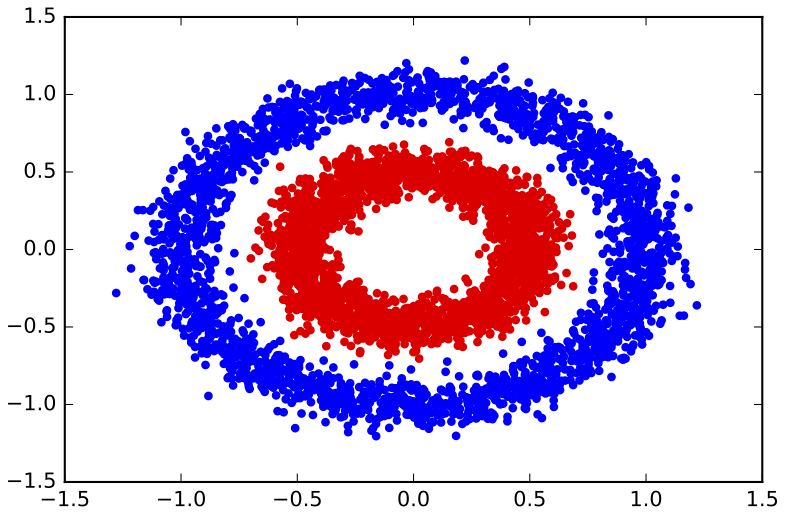} \label{fig:kmeans_example:2}}
		\subfigure[\,]{\includegraphics[width=0.26\textwidth]{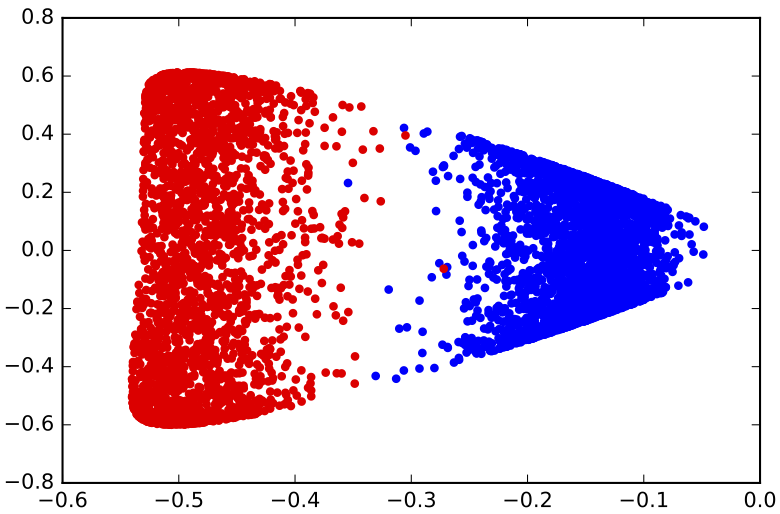} \label{fig:kmeans_example:3}}
	\end{center}
	\vspace{-3mm}
	\caption{Figures~\ref{fig:kmeans_example:1} and~\ref{fig:kmeans_example:2} show two sets
		of two-dimensional points,
		with different colors indicating different clusters.
		Figure~\ref{fig:kmeans_example:1} is separable using linear $k$-means;
		Figures~\ref{fig:kmeans_example:2} is inseparable using linear $k$-means.
		Figure~\ref{fig:kmeans_example:3} shows the first two dimensions of the feature vectors 
		$\k_1, \ldots, \k_n \in \RB^n$ derived from a kernel.}
	\label{fig:kmeans_example}
	\vspace{-3mm}
\end{figure}

To extend the scope of $k$-means clustering to include anistotropic,
non-spherical clusters such as those depicted in
Figure~\ref{fig:kmeans_example:2}, \citet{scholkopf1998nonlinear} proposed to
perform linear $k$-means clustering in a nonlinear feature space instead of the input space.
After choosing a feature function $\ph : \RB^d \mapsto \FM$ to map the input
vectors non-linearly into feature vectors, they propose minimizing
the objective function
\begin{small}
	\begin{eqnarray} \label{eq:kernel_kmeans}
	\argmin_{\JM_1, \ldots, \JM_k} \;
	\frac{1}{n}\sum_{i=1}^k \sum_{j\in \JM_i} 
	\bigg\| \ph (\a_j ) \: - \: \frac{1}{|\JM_i |} \sum_{l \in \JM_i} \ph (\a_l ) \bigg\|_2^2 ,
	\end{eqnarray}
\end{small}%
where $\{\JM_1, \ldots, \JM_k\}$ denotes a $k$-partition of $[n]$.
The ``kernel trick'' enables us to minimize this objective without explicitly
computing the potentially high-dimensional features, as inner products in
feature space can be computed implicitly by evaluating the kernel function
\[
\kappa \big(\a_i, \, \a_j \big) \; = \; \big\langle \ph (\a_i ) , \, \ph (\a_j ) \big\rangle.
\]
Thus the information required to solve the kernel $k$-means problem~\eqref{eq:kernel_kmeans}, 
is present in the kernel matrix
$\K  = [ \kappa (\a_i, \a_j ) ]_{ij} \in \RB^{n\times n}$.

Let $\K = \V \Lam \V^T$ be the full eigenvalue decomposition (EVD) of the kernel matrix
and $\k_1, \ldots, \k_n \in \RB^{n}$ be the rows of $\V \Lam^{1/2} \in \RB^{n\times n}$.
It can be shown (see Appendix~\ref{sec:proof:kernel})
that the solution of~\eqref{eq:kernel_kmeans} is identical to the solution of
\begin{small}
	\begin{eqnarray}
	\label{eqn:kernel_kmeans_equivalent}
	\argmin_{\JM_1, \ldots, \JM_k} \;
	\frac{1}{n}\sum_{i=1}^k \sum_{j\in \JM_i} 
	\bigg\| \k_j \: - \: \frac{1}{|\JM_i |} \sum_{l \in \JM_i} \k_l \bigg\|_2^2.
	\end{eqnarray}
\end{small}%
To demonstrate the power of kernel $k$-means clustering, consider the dataset in Figure~\ref{fig:kmeans_example:2}.
We use the Gaussian RBF kernel 
\[
\kappa (\a, \a' ) 
= \exp \big(- \tfrac{1}{2\sigma^2} \|\a - \a' \|_2^2 \big) 
\]
with $\sigma = 0.3$, and form the corresponding kernel matrix of the data in Figure~\ref{fig:kmeans_example:2}.
Figure~\ref{fig:kmeans_example:3} scatterplots the first two coordinates of the feature vectors $\k_1 , \ldots , \k_n$.
Clearly, the first coordinate of the feature vectors already separates the two classes well, so 
$k$-means clustering using the non-linear features $\k_1 , \cdots , \k_n$ has a better chance of separating the two classes.

Although it is more generally applicable than linear $k$-means clustering, 
kernel $k$-means clustering is computationally expensive. As a baseline, we consider the 
cost of optimizing~\eqref{eqn:kernel_kmeans_equivalent}.
The formation of the kernel matrix $\K$ given the 
input vectors $\a_1, \ldots, \a_n \in \RB^d$ costs $\OM (n^2 d)$ time.
The objective in~\eqref{eqn:kernel_kmeans_equivalent} can then be (approximately) minimized using Lloyd's algorithm at a cost of $\OM(n^2 k)$ time per iteration.
This requires the $n$-dimensional non-linear feature vectors obtained from the full EVD of $\K$; computing these feature vectors takes $\OM (n^3)$ time, because $\K$ is, in general, full-rank.
Thus, approximately solving the kernel $k$-means clustering problem by optimizing~\eqref{eqn:kernel_kmeans_equivalent} costs $\OM(n^3 + n^2 d + T n^2 k)$ time, where $T$ is the number of iterations of Lloyd's algorithm. 

Kernel approximation techniques, including the Nystr\"om method \citep{nystrom1930praktische,williams2001using,gittens2013revisiting}
and random feature maps \citep{rahimi2007random},
have been applied to decrease the cost of solving kernelized machine learning problems: the idea is to replace 
$\K$ with a low-rank approximation, which allows for more efficient computations.
\citet{chitta2011approximate,chitta2012efficient} proposed to apply 
kernel approximations to efficiently approximate kernel $k$-means clustering.
Although kernel approximations mitigate the computational challenges of kernel $k$-means clustering,
the aforementioned works do not provide guarantees on the clustering performance: 
how accurate must the low-rank approximation of $\K$ be to ensure near optimality of the approximate clustering obtained via this method?

We propose a provable approximate solution to the kernel $k$-means problem based on the Nystr\"om approximation.
Our method has three steps:
first, extract $c$ ($c \ll n$) features using the Nystr\"om method;
second, reduce the features to $s$ dimensions ($k \leq s < c$) using the truncated SVD;\footnote{This is why
	our variant is called the rank-restricted Nystr\"om approximation. 
	The rank-restriction serves two purposes.
	First, although we do not know whether the rank-restriction is necessary for the $1+\epsilon$ bound, we are unable to establish the bound without it.
	Second, the rank-restriction makes the third step, linear $k$-means clustering, much less costly.
	For the computational benefit, previous works \citep{boutisdis2009unsupervised,boutsidis2010random,boutsidis2015randomized,cohen2015dimensionality,feldman2013turning} 
	have considered dimensionality reduction for linear $k$-means clustering.\cite{}
}
third, apply any off-the-shelf linear $k$-means clustering algorithm upon the $s$-dimensional features
to obtain the final clusters. 
The total time complexity of the first two steps is $\OM (ndc + nc^2)$.
The time complexity of the third step depends on the specific linear $k$-means algorithm;
for example, using Lloyd's algorithm, the per-iteration complexity is $\OM (n s k)$, and the number of iterations may depend on $s$.\footnote{Without the rank-restriction, the per-iteration cost would be $\OM (n c k)$, and the number of iterations may depend on $c$.}

Our method comes with a strong approximation ratio guarantee.
Suppose we set $s = {k}/{\epsilon}$ and $c = \tilde{\OM} (\mu s / \epsilon )$ for any $\epsilon \in (0, 1)$,
where $\mu \in [1, \tfrac{n}{s}]$ is the coherence parameter of the dominant $s$-dimensional singular space of the kernel matrix $\K$.
Also suppose the standard kernel $k$-means and our approximate method use
the same linear $k$-means clustering algorithm, e.g., Lloyd's algorithm or some other algorithm that comes with different provable approximation guarantees.
As guaranteed by Theorem~\ref{thm:kkmeans:nystrom2},
when the quality of the clustering is measured by the cost function defined in \eqref{eq:kernel_kmeans}, with probability at least $0.9$,
our algorithm returns a clustering that is at most $ 1 + \OM(\epsilon)$ times worse than the standard kernel $k$-means clustering.
Our theory makes explicit the trade-off between accuracy and computation:
larger $s$ and $c$ lead to high accuracy and also high computational cost.

Spectral clustering \citep{shi2000normalized,ng2002spectral}
is a popular alternative to kernel $k$-means clustering
that can also partition non-linearly separable data such as those in Figure~\ref{fig:kmeans_example:2}.
Unfortunately, because it requires computing an $n\times n$ affinity matrix 
and the top $k$ eigenvectors of the corresponding graph Laplacian,
spectral clustering is inefficient for large $n$.
\citet{fowlkes2004spectral} applied the
Nystr\"om approximation to increase the scalability of spectral clustering.  Since then, spectral clustering with Nystr\"om approximation
has been used in many works, e.g.,
\citep{arikan2006compression,berg2004s,chen2011parallel,wang2015towards,weiss2009spectral,zhang2010clustered}.
Despite its popularity in practice, this approach does not come with guarantees on the approximation ratio for the obtained clustering.
Our algorithm, which combines kernel $k$-means with Nystr\"om approximation, is an
equally computationally efficient alternative that comes with strong bounds on the approximation ratio, and 
can be used wherever spectral clustering is applied.

\subsection{Contributions}

Using tools developed
in~\citep{boutsidis2015randomized,cohen2015dimensionality,feldman2013turning},
we rigorously analyze the performance of approximate kernel
$k$-means clustering with the Nystr\"om approximation, and show that a rank-$\tfrac{k}{\epsilon}$
Nystr\"om approximation delivers a $1 + \OM(\epsilon)$ approximation ratio guarantee, relative to the guarantee provided by the same algorithm without the use of the Nystr\"om method.

As part of the analysis of kernel $k$-means with Nystr\"om approximation,
we establish the first relative-error bound for rank-restricted Nystr\"om approximation,\footnote{Similar 
	relative-error bounds were independently developed by contemporaneous work of~\citet{tropp2017fixed}, 
	in service of the analysis of a novel streaming algorithm for fixed-rank approximation of positive semidefinite
	matrices. Preliminary versions of this work and theirs were simultaneously submitted to arXiv in June 2017.}
which has independent interest.

Kernel $k$-means clustering and spectral clustering are competing solutions to 
the nonlinear clustering problem, neither of which scales well with $n$.
\citet{fowlkes2004spectral} introduced the use of Nystr\"om approximations
to make spectral clustering scalable;
this approach has become popular in machine learning.
We identify fundamental mathematical problems with this heuristic.
These concerns and an empirical comparison establish that our
proposed combination of kernel $k$-means with rank-restricted Nystr\"om approximation is a 
theoretically well-founded and empirically competitive alternative to 
spectral clustering with Nystr\"om approximation.

Finally, we demonstrate the scalability of this approach 
by measuring the performance of an Apache Spark
implementation of a distributed version of our approximate kernel $k$-means
clustering algorithm using the MNIST8M data set, which has $8.1$ million
instances and $10$ classes. 

\subsection{Relation to Prior Work}

The key to our analysis of the proposed approximate kernel $k$-means clustering algorithm is 
a novel relative-error trace norm bound for a {\it rank-restricted} Nystr\"om approximation.
We restrict the rank of the Nystr\"om approximation in a non-standard manner
(see Remark~\ref{remark:rank_restrict}).
Our relative-error trace norm bound is not a simple consequence
of the existing bounds for non-rank-restricted Nystr\"om approximation
such as the ones provided by~\citet{gittens2013revisiting}. 
The relative-error bound which we provide for the
rank-restricted Nystr\"om approximation is potentially useful in other applications
involving the Nystr\"om method.

The projection-cost preservation (PCP) property
\citep{cohen2015dimensionality,feldman2013turning} is an important tool for
analyzing approximate linear $k$-means clustering.
We apply our novel relative-error trace norm bound as well as 
existing tools in \citep{cohen2015dimensionality}
to prove that the rank-restricted Nystr\"om approximation enjoys 
the PCP property. We do not rule out the possibility
that the non-rank-restricted (rank-$c$) Nystr\"om
approximation satisfies the PCP property and/or also enjoys a $1 +\epsilon$
approximation ratio guarantee when applied to kernel $k$-means clustering. However, the
cost of the linear $k$-means clustering step in the algorithm is proportional to the dimensionality of
the feature vectors, so the rank-restricted Nystr\"om approximation, which
produces $s$-dimensional feature vectors, where $s < c$, is more
computationally desirable.

\citet{musco2016recursive} similarly establishes a $1 + \epsilon$ approximation ratio for the kernel $k$-means
objective when a Nystr\"om approximation is used in place of the full kernel matrix. 
Specifically, \citet{musco2016recursive} shows that when
$c = \OM (\tfrac{k}{\epsilon} \log \tfrac{k}{\epsilon})$ columns of $\K$ are sampled using
{\it ridge leverage score (RLS) sampling} \citep{alaoui2015fast,cohen2017input,musco2016recursive}
and are used to form a Nystr\"om approximation, then
applying linear $k$-means clustering to the $c$-dimensional Nystr\"om features returns
a clustering that has objective value at most $1 + \epsilon$ times as large as the
objective value of the best clustering. 
Our theory is independent of that in \citet{musco2016recursive}, and differs in that
(1) \citet{musco2016recursive} applies specifically to Nystr\"om approximations formed using RLS sampling,
whereas our guarantees apply to any sketching method that satisfies the ``subspace embedding''
and ``matrix multiplication'' properties (see Lemma~\ref{lem:property} for definitions of these two properties);
(2) \citet{musco2016recursive} establishes a $1 + \epsilon$ approximation ratio for the non-rank-restricted RLS-Nystr\"om approximation,
whereas we establish a $1 + \epsilon$ approximation ratio for the (more computationally efficient) rank-restricted Nystr\"om approximation.

\subsection{Paper Organization}

In Section~\ref{sec:notation}, we start with a definition of the notation used throughout this paper as well as a background on matrix sketching methods.
Then, in Section~\ref{sec:main}, we present our main theoretical results:
Section~\ref{sec:main:nystrom} presents an improved relative-error rank-restricted Nystr\"om approximation;
Section~\ref{sec:main:kkmeans} presents the main theoretical results on kernel $k$-means with Nystr\"om approximation;
and Section~\ref{sec:main:kpca} studies kernel $k$-means with kernel principal component analysis.
Section~\ref{sec:sc} discusses and evaluates the theoretical and empirical merits of kernel $k$-means clustering versus spectral clustering, when each is approximated using Nystr\"om approximation.
Section~\ref{sec:medium} empirically compares the Nystr\"om method and the random feature maps for the kernel $k$-means clustering on medium-scale data.
Section~\ref{sxn:implementation} presents a large-scale distributed implementation in Apache Spark and its empirical evaluation on a data set with $8.1$ million points.
Section~\ref{snx:conclusion} provides a brief conclusion.
Proofs are provided in the Appendices.

\section{Notation}
\label{sec:notation}

This section defines the notation used throughout this paper.
A set of commonly used parameters is summarized in Table~\ref{tab:notation}.

\begin{table}[t]\setlength{\tabcolsep}{0.3pt}
	\def\arraystretch{1.1}
	\caption{Commonly used parameters. It holds that $k \leq s \leq c \leq n$.}
	\label{tab:notation}
	\begin{center}
		\begin{small}
			\begin{tabular}{c l}
				\hline
				~~~{\bf Notation}~~~&~~~{\bf Definition}~~~\\
				\hline
				~~~$n$~~~ & ~~~number of samples~~~\\
				~~~$d$~~~ & ~~~number of features (attributes)~~~\\
				~~~$k$~~~ & ~~~number of clusters~~~\\
				~~~$s$~~~ & ~~~target rank of the Nystr\"om approximation~~~\\
				~~~$c$~~~ & ~~~sketch size of the Nystr\"om approximation~~~ \\
				\hline
			\end{tabular}
		\end{small}
	\end{center}
\end{table}

\paragraph{Matrices and vectors.}
We take $\I_n$ to be the $n\times n$ identity matrix, 
$\0$ to be a vector or matrix of all zeros of the appropriate size,
and $\1_n$ to be the $n$-dimensional vector of all ones.
\paragraph{Sets.}
The set $\{1, 2, \cdots, n\}$ is written as $[n]$.
We call $\{\JM_1 , \cdots , \JM_k \}$ a $k$-partition of $[n]$ 
if $\JM_1 \cup \cdots \cup \JM_k = [n]$ and $\JM_p \cap \JM_q = \emptyset$ when $p \neq q$. 
Let $|\JM |$ denote the cardinality of the set $\JM$.
\paragraph{Singular value decomposition (SVD).}
Let $\A \in \RB^{n\times d}$ and $\rho = \rk (\A)$.
A (compact) singular value decomposition (SVD) is defined by 
\begin{eqnarray*}
	\textstyle{\A \; = \; \U \Si \V^T
		\; = \; \sum_{i=1}^\rho  \sigma_i (\A) \u_i \v_i^T ,}
\end{eqnarray*}
where 
$\U$, $\Si$, $\V$ are an $n\times \rho$ column-orthogonal matrix,
a $\rho\times \rho$ diagonal matrix with nonnegative entries, and a $d\times \rho$ column-orthogonal matrix, respectively.
If $\A$ is symmetric positive semi-definite (SPSD), then $\U = \V$, and this decomposition is also called the (reduced) eigenvalue decomposition (EVD). By convention,
we take $\sigma_1 (\A) \geq \cdots \geq \sigma_\rho (\A) $.
\paragraph{Truncated SVD.}
The matrix $\A_s = \sum_{i=1}^s \sigma_i (\A) \u_i \v_i^T$ is a rank-$s$ truncated SVD of $\A$, and is an optimal rank-$s$ approximation to $\A$ when
the approximation error is measured in a unitarily invariant norm.
\paragraph{The Moore-Penrose inverse} of $\A$ is defined by
$\A^\dag = \V \Si^{-1} \U^T$.
\paragraph{Leverage score and coherence.}
Let $\U \in \RB^{n\times \rho}$ be defined in the above
and $\u_i$ be the $i$-th row of $\U$.
The row leverage scores of $\A$ are $l_i = \|\u_{i}\|_2^2$ for $i \in [n]$.
The row coherence of $\A$ is $\mu (\A) = \frac{n}{\rho} \max_{i } \|\u_{i}\|_2^2$. 
The leverage scores for a matrix $\A$ can be computed exactly in the time it takes to compute the matrix $\U$; and
the leverage scores can be approximated (in theory~\citep{drineas2012fast} and in practice~\citep{gittens2013revisiting}) in roughly the time it takes to apply a random projection matrix to the matrix $\A$.
\paragraph{Matrix norms.}
We use three matrix norms in this paper:
\begin{small}
	\begin{align*}
	\textrm{Frobenius Norm: } \quad &
	\|\A \|_F \; = \; \sqrt{ {\textstyle \sum_{i,j}} a_{ij}^2 }
	\; = \; \sqrt{ {\textstyle \sum_{i} } \sigma_i^2 (\A )  };\\
	\textrm{Spectral Norm: } \quad &
	\|\A \|_2 \; = \; \max_{\|\x\|_2=1} \big\| \A \x \big\|_2
	\; = \; \sigma_1 (\A ) ;\\
	\textrm{Trace Norm: } \quad &
	\|\A \|_* \; = \; {\textstyle \sum_{i} } \sigma_i (\A )  .
	\end{align*}
\end{small}%
Any square matrix satisfies $\tr (\A ) \leq \|\A \|_*$. If additionally $\A$ is SPSD, 
then $\tr (\A ) = \|\A \|_*$.

\paragraph{Matrix sketching}
Here, we briefly review matrix sketching methods that are commonly used within randomized linear algebra (RLA)~\citep{mahoney2011ramdomized}.

Given a matrix $\A \in \RB^{m \times n}$, we call $\C = \A \PP \in \RB^{m \times c}$ (typically $c \ll n$) a \emph{sketch} of $\A$ and $\PP \in \RB^{n\times c}$ a \emph{sketching matrix}.
Within RLA, sketching has emerged as a powerful primitive, where one is primarily interested in using random projections and random sampling to construct randomzed sketches~\citep{mahoney2011ramdomized,drineas2016randnla}.
In particular, sketching is useful as it allows large matrices to be replaced with smaller matrices which are more amenable to efficient computation, but provably retain almost optimal accuracy in many computations~\citep{mahoney2011ramdomized,woodruff2014sketching}.
The columns of $\C$ typically comprise a rescaled subset of the columns of $\A$, or random linear combinations of the columns of $\A$;
the former type of sketching is called \emph{column selection} or \emph{random sampling}, and the latter is referred to as \emph{random projection}.

\emph{Column selection} forms $\C \in \RB^{m \times c}$ using a randomly sampled and rescaled subset of the columns of $\A \in \RB^{m \times n}$.
Let $p_1 , \cdots , p_n \in (0, 1)$ be the sampling probabilities associated with the columns of $\A$ (so that, in particular,
$\sum_{i=1}^n p_i = 1$). The columns of the sketch are selected identically and independently as follows:
each column of $\C$ is randomly sampled from the columns of
$\A$ according to the sampling probabilities and rescaled by $\frac{1}{\sqrt{c p_i}},$ where $i$ is the index of the column of $\A$ that was selected.
In our matrix multiplication formulation for sketching, column selection corresponds to a sketching matrix $\PP \in \RB^{n \times c}$
that has exactly one non-zero entry in each column, whose position and magnitude correspond to the index of the column
selected from $\A$. \emph{Uniform sampling} is column sampling with $p_1 = \cdots = p_n = \frac{1}{n}$, and \emph{leverage score sampling}
takes $p_i = \tfrac{l_i}{\sum_j l_j}$ for $i\in[n]$, where $l_i$ is the $i$-th leverage score of some matrix (typically $\A$, $\A_k$, or
an randomized approximation thereto)~\citep{drineas2012fast}.

\emph{Gaussian projection} is a type of random projection where the sketching matrix is taken to be $\PP = \frac{1}{\sqrt{c}} \G \in \RB^{n\times c}$; here the
entries of $\G$ are i.i.d.\ $\NM (0, 1)$ random variables.
Gaussian projection is inefficient relative to column sampling:
the formation of a Gaussian sketch of a dense $m \times n$ matrix requires $\OM (mnc)$ time.
The \emph{Subsampled Randomized Hadamard Transform (SRHT)} is a more efficient alternative
that enjoys similar properties to the Gaussian projection~\citep{drineas2011faster,lu2013faster,tropp2011improved},
and can be applied to a dense $m\times n$ matrix in only $\OM (mn \log c)$ time.
The \emph{CountSketch} is even more efficient: it can be applied to any matrix $\A$ in $\OM (\nnz (\A))$ time~\citep{clarkson2013low,meng2013low,nelson2013osnap}, where $\nnz (\cdot )$ denotes the number of nonzero entries in a matrix.

\section{Our Main Results: Improved SPSD Matrix Approximation and Kernel $k$-means Approximation}
\label{sec:main}

In this section, we present our main theoretical results.
We start, in Section~\ref{sec:main:nystrom}, by presenting Theorem~\ref{thm:nystrom}, a novel result on SPSD matrix approximation with the rank-restricted Nystr\"om method.
This result is of independent interest, and so we present it in detail, but in this paper we will use it to establish our main result.
Then, in Section~\ref{sec:main:kkmeans}, we present Theorem~\ref{thm:kkmeans:nystrom2}, which is our main result for approximate kernel $k$-means with the Nystr\"om approximation.
In Section~\ref{sec:main:kpca}, we establish novel guarantees on kernel $k$-means with dimensionality reduction.

\subsection{The Nystr\"om Method}
\label{sec:main:nystrom}

The Nystr\"om method~\citep{nystrom1930praktische} is the most popular kernel approximation method in the machine learning community.
Let $\K \in \RB^{n\times n}$ be an SPSD matrix and $\PP \in \RB^{n\times c}$ be a sketching matrix.
The Nystr\"om method approximates $\K$ with $\C \W^\dag \C^T$, 
where $\C = \K \PP$ and $\W = \PP^T \K \PP$.
The Nystr\"om method was introduced to the machine learning community by~\citet{williams2001using};
since then, numerous works have studied its theoretical properties, e.g.,~\citep{drineas2005nystrom,gittens2013revisiting,jin2012improved,kumar2012sampling,wang2013improving,yang2012nystrom}.

Empirical results in~\citep{gittens2013revisiting,wang2015towards,yang2012nystrom}
demonstrated that the accuracy of the Nystr\"om method significantly increases
when the spectrum of $\K$ decays fast. This suggests that the Nystr\"om approximation
captures the dominant eigenspaces of $\K$, and that error bounds comparing the
accuracy of the Nystr\"om approximation of $\K$ to that of the best rank-$s$
approximation $\K_s$ (for $s < c$) would provide a meaningful measure of the
performance of the Nystr\"om kernel approximation.
\citet{gittens2013revisiting}~established the first relative-error bounds
showing that for sufficiently large $c$,
the trace norm error $\|\K - \C \W^\dag \C^T\|_*$ is
comparable to $\|\K - \K_s\|_*$.
Such results quantify the benefits of spectral decay
to the performance of the Nystr\"om method, and are sufficient to analyze the performance of 
Nystr\"om approximations in applications such as kernel ridge regression~\citep{alaoui2015fast,bach2013sharp} and kernel support vector machines~\citep{cortes2010impact}.

However, \citet{gittens2013revisiting} did not analyze the performance of rank-restricted Nystr\"om approximations;
they compared the approximation accuracies of the rank-$c$ matrix $\C \W^\dag \C^T$ and the rank-$s$ matrix $\K_s$ (recall $s < c$).
In our application to approximate kernel $k$-means clustering, 
it is the rank-$s$ matrix $(\C \W^{\dagger} \C^T )_s$ that is of relevance.
Given $\C$ and $\W$, the truncated SVD 
$\big((\W^\dag)^{1/2} \C^T \big)_s = \tilde{\U}_s \tilde{\Si}_s \tilde{\V}_s^T$ can be found using $\OM (n c^2)$ time.
Then the rank-$s$ Nystr\"om approximation can be written as
\begin{small}
	\begin{eqnarray}
	\label{eqn:rrnystrom}
	(\C \W^\dag \C^T )_s
	\; = \; \big(\C (\W^\dag)^{1/2} \big)_s \big(\C (\W^\dag)^{1/2} \big)_s^T 
	\; = \; \tilde{\V}_s \tilde{\Si}_s^2 \tilde{\V}_s^T.
	\end{eqnarray}
\end{small}%
Theorem~\ref{thm:nystrom} provides a relative-error trace norm approximation guarantee for the sketch \eqref{eqn:rrnystrom} and is novel; a proof is provided in Appendix~\ref{sec:proof:nystrom}.

\begin{theorem} [Relative-Error Rank-Restricted Nystr\"om Approximation] \label{thm:nystrom}
	Let $\K \in \RB^{n\times n}$ be an SPSD matrix,
	$s$ be the target rank, and $\epsilon \in (0, 1)$ be an error parameter.
	Let $\PP \in \RB^{n\times c}$ be a sketching matrix corresponding to one of the sketching methods listed in Table~\ref{tab:sketch}.
	Let $\C = \K \PP$ and $\W = \PP^T \K \PP$.
	Then
	\begin{eqnarray*}
		\big\| \K - ( \C \W^\dag \C^T )_s \big\|_*
		& \leq & (1+ \epsilon ) \, \big\| \K - \K_s \big\|_*
	\end{eqnarray*}
	holds with probability at least $0.9$.
	In addition, there exists an $n\times s$ column orthogonal matrix $\Q$
	such that $(\C \W^\dag \C^T)_s = \K^{1/2} \Q \Q^T \K^{1/2}$.
\end{theorem}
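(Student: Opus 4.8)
The plan is to exploit the structure of the sketch in \eqref{eqn:rrnystrom}, namely that $(\C \W^\dag \C^T)_s = \tilde\V_s \tilde\Si_s^2 \tilde\V_s^T$ where $\tilde\V_s \tilde\Si_s \tilde\U_s^T$ is the rank-$s$ truncated SVD of $(\W^\dag)^{1/2}\C^T$. First I would reduce the trace-norm guarantee to a statement about a projection. Since $\K$ is SPSD we may write $\K = \B^T \B$ for $\B = \K^{1/2}$; the columns of $\C = \K\PP$ lie in the column space of $\K^{1/2}$, so one can write $\C(\W^\dag)^{1/2} = \K^{1/2} \M$ for an appropriate matrix $\M$. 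Taking the best rank-$s$ approximation of $\C(\W^\dag)^{1/2}$ corresponds, via the standard ``projection onto the top-$s$ left singular subspace'' characterization, to left-multiplying $\K^{1/2}$ (after the change of basis given by $\M$) by an orthogonal projector of rank at most $s$. Concretely, if $\Q$ is the $n \times s$ matrix of top-$s$ left singular vectors of $C(\W^\dag)^{1/2}$ (equivalently $\tilde\V_s$), then $\big(\C(\W^\dag)^{1/2}\big)_s = \Q\Q^T \C(\W^\dag)^{1/2}$, and hence
\[
(\C \W^\dag \C^T)_s \;=\; \Q\Q^T\,\C\W^\dag\C^T\,\Q\Q^T.
\]
The remaining task for the ``in addition'' clause is to show this equals $\K^{1/2}\Q\Q^T\K^{1/2}$ for the \emph{same} $\Q$ (possibly a different but still $n\times s$ column-orthogonal matrix). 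This follows because $\C\W^\dag\C^T \preceq \K$ in the Loewner order (a standard fact about Nystr\"om approximations, since $\K - \C\W^\dag\C^T$ is SPSD), so $\C\W^\dag\C^T = \K^{1/2}\PP(\PP^T\K\PP)^\dag\PP^T\K^{1/2}\cdot\K^{1/2}$... more cleanly: write $\C\W^\dag\C^T = \K^{1/2}\Pi\K^{1/2}$ where $\Pi = \K^{1/2}\PP(\PP^T\K\PP)^\dag\PP^T\K^{1/2}$ is the orthogonal projector onto $\range(\K^{1/2}\PP)$. Then $(\C\W^\dag\C^T)_s$ is the best rank-$s$ approximation of $\K^{1/2}\Pi\K^{1/2}$; since this matrix is SPSD, its best rank-$s$ approximation is $\Q\Q^T \K^{1/2}\Pi\K^{1/2}\Q\Q^T$ where $\Q$ holds its top-$s$ eigenvectors, and because $\range(\K^{1/2}\Pi\K^{1/2}) \subseteq \range(\K^{1/2})$ one checks $\Q\Q^T\K^{1/2}\Pi\K^{1/2}\Q\Q^T = \K^{1/2}\Q\Q^T\K^{1/2}$ by absorbing $\Pi$ (it acts as the identity on the relevant subspace). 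This gives the orthogonal-matrix representation.

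For the relative-error bound, the key inequality is
\[
\big\|\K - (\C\W^\dag\C^T)_s\big\|_* \;=\; \big\|\K^{1/2}(\I_n - \Q\Q^T)\K^{1/2}\big\|_* \;=\; \big\|(\I_n-\Q\Q^T)\K(\I_n - \Q\Q^T)\big\|_*,
\]
using that the middle matrix is SPSD and that the trace norm of an SPSD matrix is its trace, which is invariant under the cyclic symmetry here. So it suffices to find \emph{some} rank-$s$ orthogonal projector $\Q\Q^T$ with $\tr\big((\I-\Q\Q^T)\K(\I-\Q\Q^T)\big) \le (1+\epsilon)\|\K - \K_s\|_*$, and then observe that the $\Q$ coming from the truncated SVD of $\C(\W^\dag)^{1/2}$ is \emph{optimal} among rank-$\le s$ projectors of the form $\Q\Q^T$ with $\range(\Q) \subseteq \range(\K^{1/2}\PP)$ — because the best rank-$s$ approximation of $\K^{1/2}\Pi\K^{1/2}$ in trace norm is exactly the truncated SVD. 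Thus I only need to exhibit one good candidate inside $\range(\K^{1/2}\PP)$. Here I would invoke the subspace-embedding and matrix-multiplication properties of the sketching matrix $\PP$ (referenced via Lemma~\ref{lem:property} and Table~\ref{tab:sketch}): let $\U_s$ be the top-$s$ left singular vectors of $\K^{1/2}$ (so $\K_s = \K^{1/2}\U_s\U_s^T\K^{1/2}$-style), and show that $\PP$ restricted to this subspace is well-conditioned, so that $\range(\K^{1/2}\PP)$ contains a rank-$s$ subspace that approximates $\range(\U_s)$ up to the $\epsilon$ error measured against the tail $\|\K-\K_s\|_*$. This is the standard ``sketch-and-solve'' argument for relative-error low-rank approximation (as in \citet{boutsidis2015randomized,cohen2015dimensionality}), adapted from the Frobenius norm to the trace norm: one writes $\K^{1/2} = \K^{1/2}\U_s\U_s^T + \K^{1/2}(\I-\U_s\U_s^T)$, argues the projection of the first term into $\range(\K^{1/2}\PP)$ is nearly exact by the subspace embedding property, and controls the cross term via the matrix-multiplication (approximate orthogonality) property.

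The main obstacle I anticipate is the passage from the Frobenius-norm machinery, in which the PCP and sketching literature is phrased, to the \emph{trace-norm} (Schatten-1) guarantee demanded here. The structural reduction $\K = \K^{1/2}(\cdot)\K^{1/2}$ is what makes this possible — it converts a trace-norm statement about $\K$ into a squared-Frobenius-type statement about $\K^{1/2}$, since $\|\K - \K^{1/2}\Q\Q^T\K^{1/2}\|_* = \|\K^{1/2} - \Q\Q^T\K^{1/2}\|_F^2$ — but care is needed to ensure the $(1+\epsilon)$ factor is not degraded to $(1+\epsilon)^2$ or similar when moving between $\K^{1/2}$-level and $\K$-level quantities, and that $\|\K - \K_s\|_* = \|\K^{1/2} - (\K^{1/2})_s\|_F^2$ is correctly matched up. Managing the sketch-size bound $c$ so that the conditioning constants coming from Table~\ref{tab:sketch} yield exactly the claimed $0.9$ success probability and the clean $(1+\epsilon)$ (rather than a messier $1 + c'\epsilon$) is the other place where the bookkeeping must be done carefully, likely by rescaling $\epsilon$ by an absolute constant at the outset.
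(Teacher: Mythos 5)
Your overall route is the same as the paper's: factor through $\K^{1/2}$, represent the rank-restricted Nystr\"om approximation as $\K^{1/2}\Q\Q^T\K^{1/2}$ with $\range(\Q)\subseteq\range(\K^{1/2}\PP)$, convert the trace-norm error into a squared-Frobenius quantity for $\K^{1/2}$, and then finish by exhibiting a good candidate in the sketch's range using the subspace-embedding and matrix-multiplication properties (this last part matches the paper's Lemma~\ref{lem:proof:power} with $t=1$ combined with Lemma~\ref{lem:property}). However, the structural step on which everything rests is not correctly established. You take $\Q$ to be the top-$s$ eigenvectors of $\C\W^\dag\C^T=\K^{1/2}\Pi\K^{1/2}$ (with $\Pi$ the projector onto $\range(\K^{1/2}\PP)$) and claim $(\C\W^\dag\C^T)_s=\Q\Q^T\K^{1/2}\Pi\K^{1/2}\Q\Q^T=\K^{1/2}\Q\Q^T\K^{1/2}$ ``by absorbing $\Pi$.'' That absorption fails: $\Pi$ acts as the identity on $\range(\K^{1/2}\PP)$, but the top eigenvectors of $\K^{1/2}\Pi\K^{1/2}$ do not lie in that subspace in general. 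Concretely, take $\K=\diag(4,1)$, $\PP=(1,1)^T$, $s=c=1$: then $\C\W^\dag\C^T=\tfrac15\bigl(\begin{smallmatrix}16&4\\4&1\end{smallmatrix}\bigr)$ is already rank one with top eigenvector proportional to $(4,1)^T$, while $\K^{1/2}\Q\Q^T\K^{1/2}=\tfrac1{17}\bigl(\begin{smallmatrix}64&8\\8&1\end{smallmatrix}\bigr)\neq(\C\W^\dag\C^T)_1$. So with your choice of $\Q$ the ``in addition'' clause, and the identity $\|\K-(\C\W^\dag\C^T)_s\|_*=\|\K^{1/2}(\I_n-\Q\Q^T)\K^{1/2}\|_*$ that your error bound uses, are both unproven.

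The correct projector is different: with $\U$ an orthonormal basis of $\range(\K^{1/2}\PP)$, one must take $\Q$ spanning $\U(\U^T\K^{1/2})_s$ (equivalently $\Q=\U\Y$ with $\Y$ the top-$s$ eigenvectors of $\U^T\K\U$), and then prove two nontrivial facts: (i) this left projection reproduces the truncation, $\Q\Q^T\K^{1/2}=\U(\U^T\K^{1/2})_s$, hence $(\C\W^\dag\C^T)_s=\K^{1/2}\Q\Q^T\K^{1/2}$; and (ii) the resulting error equals $\min_{\rk(\Z)\le s}\|\K^{1/2}-(\K^{1/2}\PP)\Z\|_F^2$, which is what licenses ``it suffices to exhibit one good candidate.'' Your justification of (ii) --- ``the best rank-$s$ approximation of $\K^{1/2}\Pi\K^{1/2}$ is the truncated SVD'' --- conflates optimally approximating the Nystr\"om sketch with optimally approximating $\K$ over range-restricted projectors; these agree, but showing so is exactly the content of the paper's Lemma~\ref{lem:proof:rank_restrict_fro} (together with Lemma~\ref{lem:proof:opt_fro}), packaged as Lemma~\ref{lem:proof:nystrom_decompose}. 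Equivalently, a variational argument maximizing $\tr(\Q^T\K\Q)$ over $\Q=\U\Y$ would do. Once that structural lemma is in place, the remainder of your sketch (candidate $\U(\U^T\K^{1/2})$ restricted to the top-$s$ directions of $\K$, subspace embedding to control $(\V_s^T\PP\PP^T\V_s)^{-1}$, matrix multiplication to control the cross term, constant-rescaling of $\epsilon$ for the $0.9$ probability) is sound and mirrors the paper.
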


\begin{table}[t]\setlength{\tabcolsep}{0.3pt}
	\caption{Sufficient sketch sizes for several sketching methods used to construct Nystr\"om approximations to 
		the matrix $\K$. Let $\K_s = \V_s \Si_s \V_s^T$ be the truncated SVD of $\K$, then $\mu$ denotes the row coherence of $\V_s \in \RB^{n\times s}$ and 
		the leverage score sampling is done using the row leverage scores of $\V_s$.
	}
	\label{tab:sketch}
	\vspace{-4mm}
	\begin{center}
		\begin{small}
			\begin{tabular}{c c c}
				\hline
				~~~{\bf sketching}~~~&~~~{\bf sketch size ($c$)}~~~
				&~~~{\bf time complexity ($T$)}~~\\
				\hline
				~~~uniform sampling~~~
				& ~~~$\OM (\mu {s} / {\epsilon} + \mu s \log  s)$~~~ 
				& ~~~$\OM (nc)$~~~ \\
				~~~leverage sampling~~~
				& ~~~$\OM ({s} / {\epsilon} + s \log s) $~~~ 
				& ~~~$\tilde\OM (n^2 s)$\\
				~~~Gaussian projection~~~
				& ~~~$\OM ({s} / {\epsilon})  $~~~
				& ~~~$\OM (n^2 c)$ \\
				~~~SRHT~~~
				& ~~~$\OM ((s + \log n) (\epsilon^{-1} + \log s))  $~~~ 
				& ~~~$\OM (n^2 \log c)$ \\
				~~~CountSketch~~~
				& ~~~$\OM (s/\epsilon + s^2)  $~~~
				& ~~~$\OM (n^2)$ \\
				\hline
			\end{tabular}
		\end{small}
	\end{center}
	\vspace{-5mm}
\end{table}

\begin{rmk}[Rank Restrictions] \label{remark:rank_restrict}
	The traditional rank-restricted Nystr\"om approximation, $\C (\W_s)^\dagger \C^T$,
	\citep{drineas2005nystrom,fowlkes2004spectral,gittens2013revisiting,li2015large}
	is not known to satisfy a relative-error bound of the form guaranteed in Theorem~\ref{thm:nystrom}.
	\citet{pourkamali-anaraki2016randomized}~pointed out the drawbacks of the traditional rank-restricted
	Nystr\"om approximation and proposed the use of the rank-restricted Nystr\"om approximation $(\C \W^\dagger \C^T)_s$ in applications
	requiring kernel approximations, but provided
	only empirical evidence of its performance.
	This work provides guarantees on the approximation error of 
	the rank-restricted Nystr\"om approximation $(\C \W^\dagger \C^T)_s$, and applies this 
	approximation to the kernel $k$-means clustering problem.
	The contemporaneous work~\citet{tropp2017fixed} provides similar guarantees on the approximation error of 
	$(\C \W^\dagger \C^T)_s$, and uses this Nystr\"om approximation as the basis of a streaming algorithm for 
	fixed-rank approximation of positive-semidefinite matrices.
\end{rmk}


\subsection{Main Result for Approximate Kernel $k$-means}
\label{sec:main:kkmeans}

In this section we establish the approximation ratio guarantees for the objective function of kernel $k$-means clustering.
We first define $\gamma$-approximate $k$-means algorithms (where $\gamma \geq 1$),
then present our main result in Theorem~\ref{thm:kkmeans:nystrom2}.

Let $\A$ be a matrix with $n$ rows $\a_1 , \cdots , \a_n$.
The objective function for linear $k$-means clustering over the rows of $\A$ is
\begin{small}
	\begin{eqnarray*}
		f \big(\JM_1 , \cdots , \JM_k \, ; \, \A \big)
		\; = \; \frac{1}{n} \sum_{i=1}^k \sum_{j\in \JM_i} 
		\bigg\| \a_j \: - \: \frac{1}{|\JM_i |} \sum_{l \in \JM_i} \a_l \bigg\|_2^2 .
	\end{eqnarray*}
\end{small}%
The minimization of $f$ w.r.t.\ the $k$-partition $\{\JM_1 , \cdots , \JM_k\}$ is NP-hard \citep{garey1982complexity,aloise2009np,dasgupta2009random,mahajan2009planar,awasthi2015hardness}, 
but approximate solutions can be obtained in polynomial time.
$\gamma$-approximate algorithms capture one useful notion of approximation.

\begin{defn}[$\gamma$-Approximate Algorithms] \label{def:gamma0}
	A linear $k$-means clustering algorithm $\AM_\gamma$ takes as input a matrix $\Z$ with $n$ rows and 
	outputs $\{ {\JM}_1', \cdots , {\JM}_k' \}$.
	We call $\AM_\gamma$ a {$\gamma$-approximate algorithm} if, for any such matrix $\Z$,
	\begin{small}
		\begin{eqnarray*}
			f \big(\JM_1' , \cdots , \JM_k' \, ; \, \Z \big)
			& \leq & \gamma \, \cdot \,
			\min_{\JM_1, \cdots , \JM_k} \;
			f \big(\JM_1 , \cdots , \JM_k \, ; \, \Z \big) .
		\end{eqnarray*}
	\end{small}%
	Here $\{ {\JM}_1, \cdots , {\JM}_k \}$ and $\{ {\JM}_1', \cdots , {\JM}_k' \}$
	are $k$-partitions of $[n]$.
\end{defn}

Many $(1+\epsilon)$-approximation algorithms have been proposed, 
but they are computationally expensive
\citep{chen2009coresets,har2004coresets,kumar2004simple,matouvsek2000approximate}.
There are also relatively efficient constant factor approximate algorithms,
e.g., \citep{arthur2007kmeans,kanungo2002local,song2010fast}.

Let $\ph$ be a feature map, $\Ph$ be the matrix with rows $\phi(\a_1), \ldots, \phi(\a_n)$,
and $\K = \Ph \Ph^T \in \R^{n \times n}$ be the associated kernel matrix.
Analogously, we denote the objective function for kernel $k$-means clustering by
\begin{small}
	\begin{eqnarray*}
		f \big(\JM_1 , \cdots , \JM_k \; ; \; \Ph \big)
		\; = \; \frac{1}{n} \sum_{i=1}^k \sum_{j\in \JM_i} 
		\bigg\| \ph (\a_j ) \: - \: \frac{1}{|\JM_i |} \sum_{l \in \JM_i} \ph (\a_l ) \bigg\|_2^2 ,
	\end{eqnarray*}
\end{small}%
where $\{\JM_1 , \cdots , \JM_k \}$ is a $k$-partition of $[n]$.

\begin{theorem} [Kernel $k$-Means with Nystr\"om Approximation] \label{thm:kkmeans:nystrom2}
	Choose a sketching matrix $\PP \in \RB^{n\times c}$ and sketch size $c$ consistent with Table~\ref{tab:sketch}.
	Let $\C \W^\dag \C^T$ be the previously defined Nystr\"om approximation of $\K$.
	Let $\B \in \RB^{n\times s}$ be any matrix satisfying $\B \B^T = (\C \W^\dag \C^T )_s$.
	Let the $k$-partition $\{ \tilde{\JM}_1, \cdots , \tilde{\JM}_k \}$
	be the output of a $\gamma$-approximate algorithm applied to the rows of $\B$. 
	With probability at least $0.9$, 
	\begin{small}
		\begin{eqnarray*}
			f \big(\tilde{\JM}_1, \cdots , \tilde{\JM}_k \; ; \; \Ph \big)
			& \leq & \gamma \big( 1 + \epsilon + \tfrac{k}{s} \big)  \, \cdot \,
			\min_{\JM_1, \cdots , \JM_k} \;
			f \big(\JM_1 , \cdots , \JM_k \; ; \; \Ph \big) .
		\end{eqnarray*}
	\end{small}%
\end{theorem}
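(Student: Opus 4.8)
The plan is to route through a projection-cost-preservation (PCP) statement for the rank-restricted Nystr\"om sketch and then apply the standard reduction from PCP to $k$-means approximation ratios. First I would put both objectives in trace form. For a $k$-partition $\{\JM_1,\dots,\JM_k\}$, let $\X\in\RB^{n\times k}$ be the normalized cluster-indicator matrix, so $\X$ has orthonormal columns and $\X\X^T$ is the associated rank-$\le k$ orthogonal projection; since the rows of $\X\X^T\Ph$ are exactly the cluster centroids, $n\, f(\JM_1,\dots,\JM_k;\Ph)=\tr\big((\I_n-\X\X^T)\K\big)$, and since $\B\B^T=(\C\W^\dag\C^T)_s$ is the Gram matrix of the rows of $\B$, also $n\, f(\JM_1,\dots,\JM_k;\B)=\tr\big((\I_n-\X\X^T)\B\B^T\big)$. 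Hence everything reduces to comparing $\tr((\I_n-P)\K)$ with $\tr((\I_n-P)\B\B^T)$ uniformly over rank-$\le k$ orthogonal projections $P$ (it suffices, and is convenient, to treat all such $P$, not only cluster projections).

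The core of the proof is to show that $\B$ is a PCP sketch of $\Ph$: there is a constant $c\ge 0$, not depending on $P$, with
\[
\tr\big((\I_n-P)\K\big)\ \le\ \tr\big((\I_n-P)\B\B^T\big)+c\ \le\ \Big(1+\epsilon+\tfrac{k}{s}\Big)\,\tr\big((\I_n-P)\K\big)
\]
for every rank-$\le k$ orthogonal projection $P$. Theorem~\ref{thm:nystrom} enters twice here. Its structural conclusion $\B\B^T=(\C\W^\dag\C^T)_s=\K^{1/2}\Q\Q^T\K^{1/2}$ with $\Q$ column-orthogonal shows that the residual $\De:=\K-\B\B^T=\K^{1/2}(\I_n-\Q\Q^T)\K^{1/2}$ is positive semidefinite (equivalently, $\B$ amounts to projecting the feature representation onto an $s$-dimensional subspace of feature space). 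Because $\De\succeq\0$, writing $\tr((\I_n-P)\B\B^T)=\tr((\I_n-P)\K)-\tr(\De)+\tr(P\De)$ shows that $c:=\tr(\De)$ settles the left inequality and that the PCP error is governed by $\sup_{\rk(P)\le k}\tr(P\De)=\sum_{i=1}^{k}\sigma_i(\De)$. Since $\tr((\I_n-P)\K)\ge\|\K-\K_k\|_*$ for every rank-$\le k$ projection $P$, it then suffices to prove the purely spectral bound $\sum_{i=1}^{k}\sigma_i(\De)\le(\epsilon+\tfrac{k}{s})\,\|\K-\K_k\|_*$.

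I expect this last inequality to be the main obstacle. The trace-norm guarantee $\tr(\De)=\|\K-\B\B^T\|_*\le(1+\epsilon)\|\K-\K_s\|_*$ of Theorem~\ref{thm:nystrom} is by itself insufficient, because $\|\K-\K_s\|_*$ can be comparable to $\|\K-\K_k\|_*$ when the spectrum decays slowly. The extra ingredient is that being a $(1+\epsilon)$-optimal rank-$s$ trace-norm approximation forces $\range(\Q)$, hence $\range(\B)$, to capture the dominant eigendirections of $\K$ accurately, so that the leading eigenvalues of $\De$ end up being controlled by the tail $\sigma_{s+1}(\K),\sigma_{s+2}(\K),\dots$ of the spectrum; combining this with $\sum_{i=1}^{k}\sigma_{s+i}(\K)\le k\,\sigma_{s+1}(\K)\le\tfrac{k}{s-k}\|\K-\K_k\|_*$ (a quantity of order $\tfrac{k}{s}\|\K-\K_k\|_*$) yields the $\tfrac{k}{s}$ term, while the $\epsilon$ slack in near-optimality contributes the $\epsilon$ term. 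Making this rigorous is precisely where I would invoke the PCP machinery of~\citet{cohen2015dimensionality} (their analysis of dimension reduction to $\OM(k/\epsilon)$ dimensions via near-optimal rank-$s$ subspaces) together with Theorem~\ref{thm:nystrom}. Once the PCP statement is in hand the conclusion follows routinely: with $\tilde P$ the projection of the output clustering, $\hat P$ an optimal clustering of $\B$, and $P^\star$ an optimal clustering of $\Ph$, chaining the left PCP inequality at $\tilde P$, the $\gamma$-approximation bound $\tr((\I_n-\tilde P)\B\B^T)\le\gamma\,\tr((\I_n-\hat P)\B\B^T)\le\gamma\,\tr((\I_n-P^\star)\B\B^T)$, and the right PCP inequality at $P^\star$ gives $\tr((\I_n-\tilde P)\K)\le\gamma(1+\epsilon+\tfrac{k}{s})\,\tr((\I_n-P^\star)\K)-(\gamma-1)c\le\gamma(1+\epsilon+\tfrac{k}{s})\,\tr((\I_n-P^\star)\K)$ by $\gamma\ge1$ and $c\ge0$; dividing by $n$ yields the stated bound, all on the probability-$0.9$ event of Theorem~\ref{thm:nystrom}.
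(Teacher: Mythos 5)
Your outline is essentially the paper's own route: the trace reformulation of both objectives is the cluster-indicator/kernel-trick step (Section~\ref{sec:proof:kmeans} and Lemma~\ref{lem:kernel_kmeans}); your PCP statement with additive constant $c=\tr(\De)$, where $\De=\K-\B\B^T=\K^{1/2}(\I_n-\Q\Q^T)\K^{1/2}\succeq \0$ via the structural half of Theorem~\ref{thm:nystrom}, is exactly Lemma~\ref{lem:proj:nuclear} applied with $\A=\K^{1/2}$ and $\M=\Q\Q^T$; and your final chaining is the proof of Lemma~\ref{lem:PCP_kmeans} (including the correct handling of $\alpha$ and $\gamma\ge 1$). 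Your reduction of the right-hand PCP inequality to the uniform bound $\tr(\Pii\,\De\,\Pii)\le(\epsilon+\tfrac{k}{s})\,\|\K-\K_k\|_*$ over rank-$k$ projections $\Pii$, via $\tr((\I_n-\Pii)\K)\ge\|\K-\K_k\|_*$, is also correct and is what the paper proves.

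The one place you stop is exactly the crux: you defer that bound to ``the PCP machinery of \citet{cohen2015dimensionality}'' with a heuristic about $\range(\Q)$ aligning with the dominant eigenspace. The paper does build on those tools but supplies the argument itself (Lemma~\ref{lem:proj:nuclear2}), and its mechanism is more direct than your intuition suggests: write $\tr(\Pii\,\De\,\Pii)=\tr(\De)-\tr\big((\I_n-\Pii)\De(\I_n-\Pii)\big)$; use the structure $\De=\K^{1/2}(\I_n-\Q\Q^T)\K^{1/2}$ to see that $\tr\big((\I_n-\Pii)\De(\I_n-\Pii)\big)=\big\|\K^{1/2}-\Pii\K^{1/2}-(\I_n-\Pii)\K^{1/2}\Q\Q^T\big\|_F^2\ge\|\K-\K_{s+k}\|_*$, since the subtracted matrix has rank at most $s+k$; then $(1+\epsilon)\|\K-\K_s\|_*-\|\K-\K_{s+k}\|_*=\epsilon\|\K-\K_s\|_*+\sum_{i=s+1}^{s+k}\sigma_i(\K)\le\big(\epsilon+\tfrac{k}{s}\big)\|\K-\K_k\|_*$, the last step because the $k$ values $\sigma_{s+1},\dots,\sigma_{s+k}$ are the smallest among the $s$ values $\sigma_{k+1},\dots,\sigma_{s+k}$ (this averaging gives exactly $\tfrac{k}{s}$ rather than your $\tfrac{k}{s-k}$). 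No eigenspace-alignment argument is needed: near-optimality enters only through the trace bound, and the projection structure enters only through the rank-$(s+k)$ lower bound. So your proposal is correct in structure and matches the paper's proof, but this deferred lemma is where all the remaining work lives, and as written it is asserted rather than proved.
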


\begin{rmk}
	Kernel $k$-means clustering is an NP-hard problem.
	Therefore, instead of comparing with $\min f$,
	we compare with clusterings obtained using $\gamma$-approximate algorithms.
	Theorem~\ref{thm:kkmeans:nystrom2} shows that, when uniform sampling to form the Nystr\"om approximation, if $s = \OM(\frac{k}{\epsilon})$
	and $c = \tilde{\OM} (\frac{\mu s}{\epsilon} )$, then
	the returned clustering has an objective value that is at most a factor of $\epsilon$ larger
	than the objective value of the kernel $k$-means clustering returned by the $\gamma$-approximate algorithm.
\end{rmk}

\begin{rmk}
	\label{remark:twoerrors}
	Assume we are in a practical setting where $c$, the budget of column samples one can use to form a Nystr\"om approximation, and $k$, the number of desired cluster centers, are fixed. 
	The pertinent question is how to choose $s$ to produce a high-quality approximate clustering. 
	Theorem~\ref{thm:kkmeans:nystrom2} shows that
	for uniform sampling, the error ratio is
	\[
	1 + \epsilon + \tfrac{k}{s}
	\; = \;
	1 + \tilde{\OM} (\tfrac{s \mu}{c}) + \tfrac{k}{s} .
	\]
	To balance the two sources of error, $s$ must be larger than $k$, but not too large a fraction of $c$. 
	To minimize the above error ratio, $s$ should be selected on the order of $\sqrt{{k c} / {\mu }}$.
	Since the matrix coherence $\mu $ ($\geq 1$) is unknown, it can be heuristically treated as a constant.
\end{rmk}

We empirically study the effect of the values of $c$ and $s$ using a data set comprising $8.1$ million samples.
Note that computing the kernel $k$-means clustering objective function requires the formation of the entire kernel matrix $\K $, which is infeasible for a
data set of this size;
instead, we use normalized mutual information (NMI)~\citep{strehl2002cluster}---a standard measure of the performance of clustering algorithms---
to measure the quality of the clustering obtained by approximating kernel $k$-means clustering using Nystr\"om approximations formed through uniform sampling.
NMI scores range from zero to one, with a larger score indicating better performance.
We report the results in Figure~\ref{fig:spark}.
The complete details of the experiments, including the experimental setting and time costs, are given in Section~\ref{sxn:implementation}.

\begin{figure}[!ht]
	\begin{center}
		\centering
		\subfigure[\textsf{Fix $s=20$ and vary $c$.}]{\includegraphics[width=0.46\textwidth]{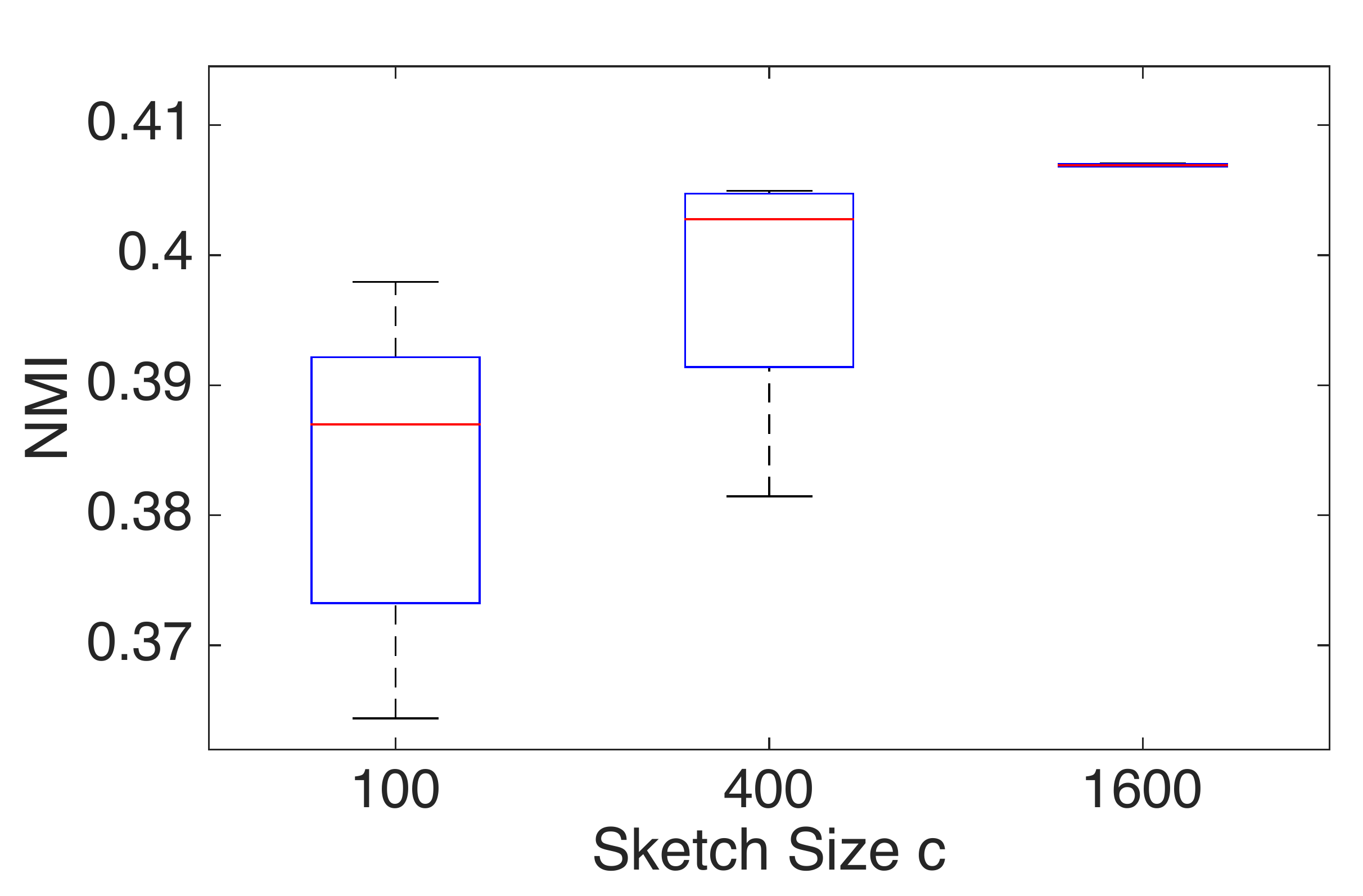} 
			\label{fig:spark:c}}
		~~~
		\subfigure[\textsf{Fix $c=1600$ and vary $s$.}]{\includegraphics[width=0.46\textwidth]{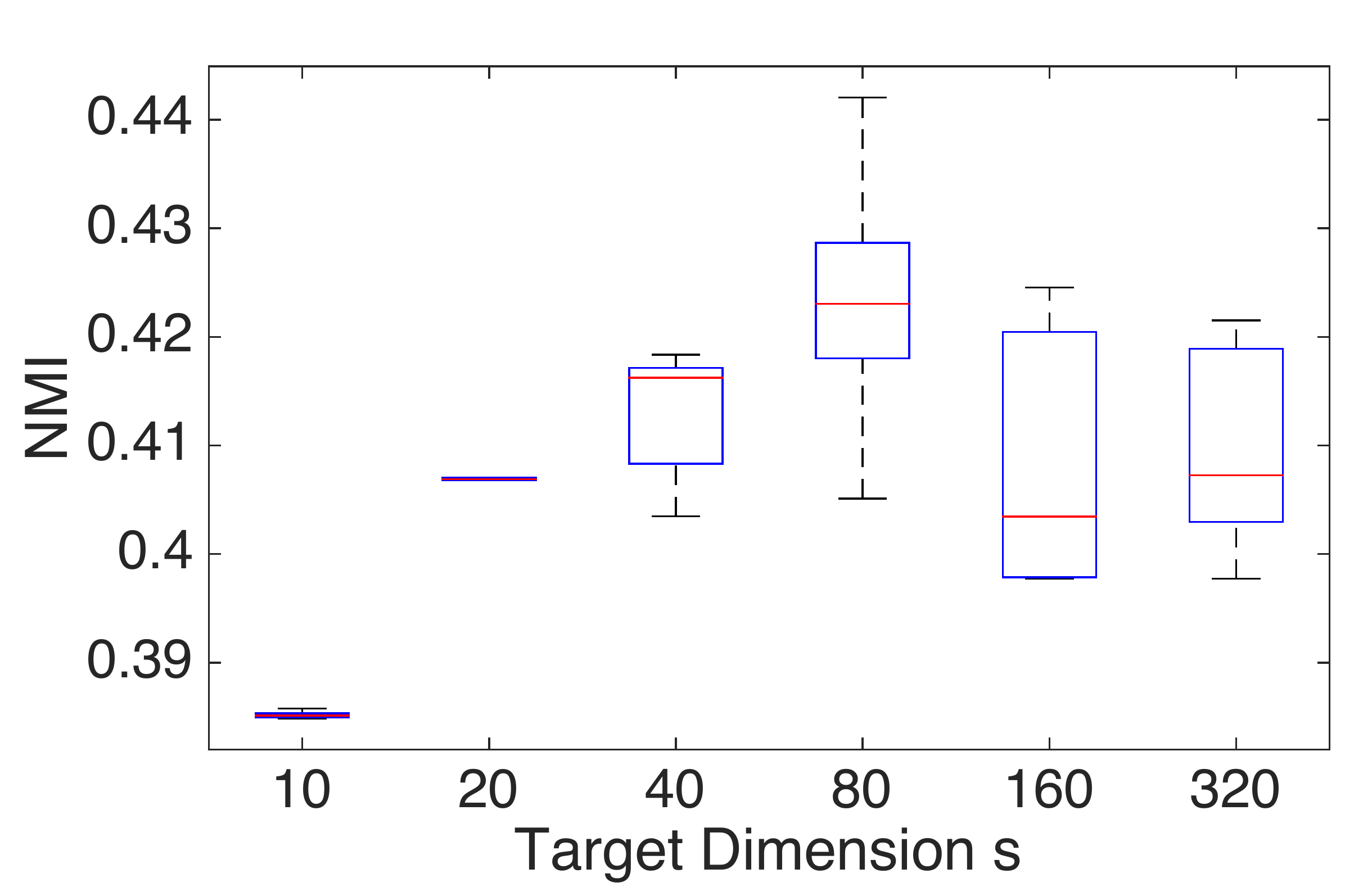} 
			\label{fig:spark:s}}
	\end{center}
	\vspace{-3mm}
	\caption{Performance of approximate kernel $k$-means clustering on the 
		MNIST8M data set, for which $n=8.1\times 10^6$, $d=784$, and $k=10$. We use
		Nystr\"om approximations formed using uniform sampling, and 
		the built-in $k$-means algorithm of Spark MLlib 
		(a parallelized variant of the $k$-means++ method)
		with the setting $\textsf{MaxIter}=100$.}
	\label{fig:spark}
	\vspace{-3mm}
\end{figure}

From Figure~\ref{fig:spark:c} we observe that larger values of $c$ lead to better and more stable clusterings: the mean of the NMI increases and its standard deviation decreases.
This is reasonable and in accordance with our theory. However, larger values of $c$ incur more computations, so
one should choose $c$ to trade off computation and accuracy.

Figure~\ref{fig:spark:s} shows that for fixed $k$ and $c$,
the clustering performance is not monotonic in $s$,
which matches Theorem~\ref{thm:kkmeans:nystrom2} (see the discussion in Remark~\ref{remark:twoerrors}).
Setting $s$ as small as $k$ results in poor performance.
Setting $s$ over-large not only incurs more computations, but also negatively affects clustering performance;
this may suggest the necessity of rank-restriction.
Furthermore, 
in this example, $\sqrt{k c} = 126.5$,
which corroborates the suggestion made in Remark~\ref{remark:twoerrors}
that setting $s$ around $\sqrt{k c / \mu}$ (where $\mu$ is unknown but can be treated as a constant larger than $1$) 
can be a good choice.

\begin{rmk}
	\citet{musco2016recursive} established a $1+\epsilon$ approximation ratio
	for the kernel $k$-means objective value when a non-rank-restricted Nystr\"om approximation is formed 
	using ridge leverage scores (RLS) sampling;
	their analysis is specific to RLS sampling and does not extend to other sketching methods.
	By way of comparison, our analysis covers several popular sampling schemes and applies to rank-restricted Nystr\"om approximations, but does not extend to RLS sampling.
\end{rmk}

\subsection{Approximate Kernel $k$-Means with KPCA and Power Method} \label{sec:main:kpca}

The use of dimensionality reduction to increase the computational efficiency of $k$-means clustering has been widely studied,
e.g. in~\citep{boutsidis2010random,boutsidis2015randomized,cohen2015dimensionality,feldman2013turning,zha2002spectral}.
Kernel principal component analysis (KPCA) is particularly well-suited to this application~\citep{dhillon2004kernel,ding2005equivalence}.
Applying Lloyd's algorithm on the rows of $\Ph $ or $\K^{-1/2}$ 
has an $\OM (n^2 k)$ per-iteration complexity;
if $s$ features are extracted using KPCA and Lloyd's algorithm is applied to the resulting $s$-dimensional feature map, then the per-iteration cost reduces to $\OM(nsk).$
Proposition~\ref{prop:kpca} states that, to obtain a $1+\epsilon$ approximation ratio in terms of the kernel $k$-means objective function, it suffices to use
$s = \frac{k}{\epsilon}$ KPCA features. This proposition is a simple consequence of \citep{cohen2015dimensionality}.

\begin{proposition} [KPCA] \label{prop:kpca}
	Let $\Ph$ be a matrix with $n$ rows, and $\K = \Ph \Ph^T \in \R^{n \times n}$ be the corresponding kernel matrix. Let
	$\K_s = \V_s \Lam_s \V_s^T$ be the truncated SVD of $\K$
	and take $\B = \V_s \Lam_s^{1/2} \in \RB^{n\times s}$.
	Let the $k$-partition $\{ \tilde{\JM}_1, \cdots , \tilde{\JM}_k \}$
	be the output of a $\gamma$-approximate algorithm applied to the rows of $\B$. 
	Then
	\begin{small}
		\begin{eqnarray*}
			f \big(\tilde{\JM}_1, \cdots , \tilde{\JM}_k \; ; \; \Ph \big)
			& \leq & \gamma \, \big( 1 + \tfrac{k}{s} \big)  \, \cdot \,
			\min_{\JM_1, \cdots , \JM_k} \;
			f \big(\JM_1 , \cdots , \JM_k \; ; \; \Ph \big) .
		\end{eqnarray*}
	\end{small}%
\end{proposition}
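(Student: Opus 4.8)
The plan is to carry out, in the kernel setting, the SVD-based dimensionality-reduction argument of \citet{cohen2015dimensionality}: rewrite the $k$-means cost as a projection cost on $\K$, use that $\B\B^T=\K_s$ is the best rank-$s$ approximation of $\K$, and absorb the resulting discrepancy by a short eigenvalue estimate. First I would record the standard reformulation of the $k$-means objective: to each $k$-partition $\{\JM_1,\dots,\JM_k\}$ of $[n]$ there corresponds an $n\times k$ matrix $\X$ with orthonormal columns (the normalized cluster-indicator matrix) such that $f(\JM_1,\dots,\JM_k;\Ph)=\tfrac1n\tr((\I_n-\X\X^T)\K)$ and $f(\JM_1,\dots,\JM_k;\B)=\tfrac1n\tr((\I_n-\X\X^T)\B\B^T)=\tfrac1n\tr((\I_n-\X\X^T)\K_s)$; in particular the cost on $\B$ depends on $\B$ only through $\B\B^T=\K_s$. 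Let $\PP^\star=\X^\star(\X^\star)^T$ be the projection of a $\Ph$-optimal partition, and $\tilde\PP$ that of the partition $\{\tilde\JM_i\}$ returned by the $\gamma$-approximate algorithm run on $\B$; both are orthogonal projections of rank at most $k$. Since cluster-indicator projections form a subclass of all rank-$\le k$ orthogonal projections and the minimum of $\tr((\I_n-\PP)\K)$ over such $\PP$ is attained at the projection onto the top-$k$ eigenspace of $\K$, I get the lower bound $n\min_\JM f(\JM;\Ph)=\tr((\I_n-\PP^\star)\K)\ge\sum_{i>k}\lambda_i(\K)$, which I will use at the end.

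Next I would split $\K=\K_s+(\K-\K_s)$ with $\K-\K_s\succeq0$. From $\gamma$-approximality on $\B$ and the fact that $\PP^\star$ is a feasible (not necessarily optimal) projection for the cost on $\K_s$, $\tr((\I_n-\tilde\PP)\K_s)\le\gamma\,\tr((\I_n-\PP^\star)\K_s)$. Adding $\tr((\I_n-\tilde\PP)(\K-\K_s))$ to the left-hand side, expanding $\tr((\I_n-\PP)(\K-\K_s))=\tr(\K-\K_s)-\tr(\PP(\K-\K_s))$ for $\PP\in\{\tilde\PP,\PP^\star\}$, and then discarding the two nonpositive leftover terms $(1-\gamma)\tr(\K-\K_s)$ and $-\tr(\tilde\PP(\K-\K_s))$ (legitimate since $\gamma\ge1$ and the trace of a product of two PSD matrices is nonnegative), I arrive at
\[
n\,f(\tilde\JM;\Ph)=\tr\big((\I_n-\tilde\PP)\K\big)\ \le\ \gamma\,\tr\big((\I_n-\PP^\star)\K\big)+\gamma\,\tr\big(\PP^\star(\K-\K_s)\big)\ =\ \gamma\,n\min_\JM f(\JM;\Ph)+\gamma\,\tr\big(\PP^\star(\K-\K_s)\big).
\]
It then remains to bound $\tr(\PP^\star(\K-\K_s))$ by $\tfrac ks\,n\min_\JM f(\JM;\Ph)$. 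Because $\rk\PP^\star\le k$, Ky Fan's inequality gives $\tr(\PP^\star(\K-\K_s))\le\sum_{i=1}^{k}\lambda_i(\K-\K_s)=\sum_{i=s+1}^{s+k}\lambda_i(\K)$, and since the eigenvalues of $\K$ are nonincreasing,
\[
\sum_{i>k}\lambda_i(\K)\ \ge\ \sum_{i=k+1}^{s}\lambda_i(\K)+\sum_{i=s+1}^{s+k}\lambda_i(\K)\ \ge\ (s-k)\,\lambda_{s+1}(\K)+\sum_{i=s+1}^{s+k}\lambda_i(\K)\ \ge\ \tfrac sk\sum_{i=s+1}^{s+k}\lambda_i(\K),
\]
the last step using $\sum_{i=s+1}^{s+k}\lambda_i(\K)\le k\,\lambda_{s+1}(\K)$. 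Combining this with the lower bound $n\min_\JM f(\JM;\Ph)\ge\sum_{i>k}\lambda_i(\K)$ finishes the proof: $n\,f(\tilde\JM;\Ph)\le\gamma(1+\tfrac ks)\,n\min_\JM f(\JM;\Ph)$.

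I do not expect any step to be genuinely hard; the points requiring care are the sign bookkeeping in the trace chain (invoking $\gamma\ge1$ precisely where a term of the wrong sign is dropped) and the degenerate cases. If $\rk\Ph\le s$ then $\K=\K_s$ and the bound is trivial; if $s<\rk\Ph<s+k$ then $\K-\K_s$ has fewer than $k$ nonzero eigenvalues, so the Ky Fan sum and the telescoping eigenvalue inequality must be read with their ranges truncated at $\rk\Ph$, and a one-line check (it reduces to $\rk\Ph\le s+k$) confirms that $\sum_{i>k}\lambda_i(\K)\ge\tfrac sk\sum_{i=s+1}^{\rk\Ph}\lambda_i(\K)$ still holds. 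As an alternative to this self-contained computation---the route implicit in calling the result ``a simple consequence of \citet{cohen2015dimensionality}''---one may simply observe that, because $\B\B^T=\K_s$ is the optimal rank-$s$ approximation of $\K$, the SVD-based projection-cost-preservation bounds of \citet{cohen2015dimensionality} show that $\B$ is a rank-$k$ projection-cost-preserving sketch of the rows of $\Ph$ with the appropriate parameters, and their lemma converting such a sketch into a $k$-means approximation guarantee then yields exactly the factor $\gamma(1+k/s)$.
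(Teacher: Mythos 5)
Your proof is correct. The paper does not spell out a proof of Proposition~\ref{prop:kpca}; it invokes the machinery it builds for Theorem~\ref{thm:kkmeans:nystrom2}, namely Lemma~\ref{lem:proof:main} applied with $\epsilon=0$: since $\B\B^T=\K_s$, condition \eqref{lem:proof:main:1} holds with equality ($\tr(\K-\K_s)=\|\K-\K_s\|_*$) and condition \eqref{lem:proof:main:2} holds because $\K_s=\K^{1/2}\V_s\V_s^T\K^{1/2}$, after which Lemmas~\ref{lem:proj:nuclear}, \ref{lem:PCP_kmeans}, and \ref{lem:kernel_kmeans} deliver the factor $\gamma(1+\tfrac{k}{s})$ --- this is what ``simple consequence of Cohen et al.''\ means, and it is the alternative you mention in your closing sentence. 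Your main argument instead unrolls that chain into a self-contained computation: you collapse the projection-cost-preservation constant $\alpha$ by splitting $\K=\K_s+(\K-\K_s)$ directly, use the $\gamma$-guarantee on $\tr((\I_n-\tilde\PP)\K_s)$, drop the two correctly-signed terms (this is where $\gamma\ge1$ and positive semidefiniteness enter, and your bookkeeping there checks out), and then bound $\tr(\PP^\star(\K-\K_s))\le\sum_{i=s+1}^{s+k}\lambda_i(\K)\le\tfrac{k}{s}\sum_{i>k}\lambda_i(\K)\le\tfrac{k}{s}\,n\min f$; this eigenvalue averaging is exactly the step inside the paper's Lemma~\ref{lem:proj:nuclear2}, and your Ky Fan bound plays the role of the rank-subadditivity argument there. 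What your route buys is an elementary, fully explicit proof that exploits the fact that $\B\B^T$ is the \emph{exact} best rank-$s$ approximation (so no $\epsilon$ slack is needed); what the paper's route buys is reusability, since the same Lemma~\ref{lem:proof:main} also covers the Nystr\"om and power-method settings where $\B\B^T$ only approximates $\K_s$ in trace norm. Your handling of the degenerate ranks is fine, though unnecessary: padding the spectrum with zeros makes all the sums valid as written.
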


In practice, the truncated SVD (equivalently EVD) of $\K$ is computed using the power method or Krylov subspace methods.
These numerical methods do not compute the exact decomposition $\K_s = \V_s \Lam_s \V_s^T$, so Proposition~\ref{prop:kpca} is not 
directly applicable. It is useful to have a theory
that captures the effect of realisticly inaccurate estimates like $\K_s \approx \tilde{\V}_s \tilde{\Lam}_s \tilde{\V}_s^T$ on the clustering process. As one 
particular example, consider that
general-purpose implementations of the truncated SVD attempt to mitigate the fact that the computed decompositions are inaccurate by returning
very high-precision solutions, e.g.\ solutions that satisfy $\| (\I_n - \V_s \V_s^T ) \tilde{\V }_s \|_2 \leq 10^{-10}$. 
Understanding the trade-off between the precision of 
the truncated SVD solution and the impact on the approximation ratio of the approximate kernel $k$-means solution allows us to 
more precisely manage the computational complexity of our algorithms. Are such high-precision solutions necessary for kernel $k$-means clustering? 

Theorem~\ref{thm:power} answers this question by establishing 
that highly accurate eigenspaces are not significantly more useful in approximate kernel $k$-means clustering than eigenspace estimates with lower accuracy.
A low-precision solution obtained by running the power iteration for a few rounds
suffices for kernel $k$-means clustering applications.
We prove Theorem~\ref{thm:power} in Appendix~\ref{sec:proof:power}.

\begin{algorithm}[t]
	\caption{Approximate Kernel $k$-Means using the Power Method.}
	\label{alg:powmethod}
	\begin{small}
		\begin{algorithmic}[1]
			\STATE {\bf Input}: kernel matrix $\K \in \RB^{n \times n}$, number of clusters $k$, target dimension $s$ ($\geq k$),
			sketch~size~$c$~($\geq s$), number of iterations $t$ ($\geq 1$)
			\STATE Draw a Gaussian projection matrix $\PP \in \RB^{n \times c}$;
			\FOR{all $j \in [t]$} 
			\STATE $\PP \longleftarrow \K\PP$;
			\ENDFOR
			\STATE Orthogonalize $\PP$ to obtain $\U \in \RB^{n \times c}$;
			\STATE Compute $\C = \K \U$ and $\W = \U^T \K \U$;
			\STATE Compute a $\B \in \RB^{n \times s}$ satisfying $(\C \W^\dagger \C^T)_s = \B\B^T$;
			\RETURN $\B$.
		\end{algorithmic}
	\end{small}
\end{algorithm}

\begin{theorem} [The Power Method]  \label{thm:power}
	Let $\Ph$ be a matrix with $n$ rows, $\K = \Ph \Ph^T \in \RB^{n \times n}$ be the corresponding kernel matrix,
	and $\sigma_i$ be the $i$-th singular value of $\K$.
	Fix an error parameter $\epsilon \in (0, 1)$.
	Run Algorithm~\ref{alg:powmethod} with
	$t = \OM (\frac{ \log (n / \epsilon ) }{ \log ( \sigma_{s} / \sigma_{s+1} ) })$
	to obtain $\B \in \RB^{n\times s}$.
	Let the $k$-partition $\{ \tilde{\JM}_1, \cdots , \tilde{\JM}_k \}$
	be the output of a $\gamma$-approximate algorithm applied to the rows of $\B$.
	If $c = s + \OM (\log \frac{1}{\delta}) $, then
	\begin{small}
		\begin{eqnarray*}
			f \big(\tilde{\JM}_1, \cdots , \tilde{\JM}_k \; ; \; \Ph \big)
			& \leq & \gamma \, \big( 1 + \epsilon + \tfrac{k}{s} \big)  \, \cdot \,
			\min_{\JM_1, \cdots , \JM_k} \;
			f \big(\JM_1 , \cdots , \JM_k \; ; \; \Ph \big) .
		\end{eqnarray*}
	\end{small}%
	holds with probability at least $1-\delta$.
	If $c = s$, then the above inequality holds with probability
	$0.9 - \OM (s^{- \tau})$, where $\tau$ is a positive constant \citep{tao2010random}.
\end{theorem}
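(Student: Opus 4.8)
The plan is to verify that the matrix $\B$ returned by Algorithm~\ref{alg:powmethod} enjoys the two structural properties that drive the proof of Theorem~\ref{thm:kkmeans:nystrom2}: (i) $\B\B^T=\K^{1/2}\Q\Q^T\K^{1/2}$ for some column-orthogonal $\Q\in\RB^{n\times s}$, and (ii) the relative-error trace-norm bound $\|\K-\B\B^T\|_*\le(1+\epsilon)\|\K-\K_s\|_*$. Once (i) and (ii) hold, the clustering bound $f(\tilde\JM_1,\dots,\tilde\JM_k;\Ph)\le\gamma(1+\epsilon+\tfrac ks)\min f$ follows verbatim from the proof of Theorem~\ref{thm:kkmeans:nystrom2}, which uses only these two facts together with a projection-cost-preservation argument and the $\gamma$-approximate property; the power method enters only through (i)--(ii), for the particular sketch $\U=\orth(\K^t\G)$. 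Property (i) is deterministic and holds for \emph{any} sketching matrix: putting $\U$ in the role of the sketch, with $\C=\K\U$ and $\W=\U^T\K\U$, one has $\C\W^\dagger\C^T=\K^{1/2}\PP_{\K^{1/2}\U}\K^{1/2}$ with $\PP_{\K^{1/2}\U}$ the orthogonal projector onto $\range(\K^{1/2}\U)$, and the rank-$s$ truncation replaces $\PP_{\K^{1/2}\U}$ by $\Q\Q^T$, where $\Q$ collects the top-$s$ left singular vectors of $\PP_{\K^{1/2}\U}\K^{1/2}$ --- this is precisely the second assertion of Theorem~\ref{thm:nystrom}. So all the work is in (ii).

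For (ii), I would first reduce to a subspace-capture estimate. Since orthogonalization does not change the column space, $\range(\K^{1/2}\U)=\range(\K^{q}\G)$ with $q=t+\tfrac12$; write $\PP$ for the orthogonal projector onto this subspace. Because both $\K-\C\W^\dagger\C^T=\K^{1/2}(\I_n-\PP)\K^{1/2}$ and $\C\W^\dagger\C^T-(\C\W^\dagger\C^T)_s$ are SPSD, a short computation (using that $\lambda_i(\C\W^\dagger\C^T)=\lambda_i(\PP\K\PP)$ since $AB$ and $BA$ share nonzero eigenvalues) gives $\|\K-(\C\W^\dagger\C^T)_s\|_*=\tr\K-\sum_{i=1}^s\lambda_i(\PP\K\PP)$, so (ii) reduces to $\sum_{i=1}^s\big(\sigma_i(\K)-\lambda_i(\PP\K\PP)\big)\le\epsilon\|\K-\K_s\|_*$. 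Since $\sigma_i^2(\PP\K^{1/2})=\lambda_i(\PP\K\PP)$ and $\sum_{i\le s}\sigma_i^2(\PP\K^{1/2})\ge\|\PP(\K^{1/2})_s\|_F^2=\sum_{i\le s}\sigma_i(\K)-\|(\I_n-\PP)(\K^{1/2})_s\|_F^2$, it suffices to prove
\[
\big\|(\I_n-\PP)\,(\K^{1/2})_s\big\|_F^2\;\le\;\epsilon\,\|\K-\K_s\|_*.
\]
It is essential to route through $(\K^{1/2})_s$ rather than $\K^{1/2}$: the crude split $\|\K-(\C\W^\dagger\C^T)_s\|_*\le\|\K-\K_s\|_*+\|(\I_n-\PP)\K^{1/2}\|_F^2$ only yields a useless factor $2+\epsilon$ because the truncation error and the projection error overlap, whereas $(\K^{1/2})_s$ has spectral norm $\sqrt{\sigma_s(\K)}$, which is what keeps a condition-number factor out of the iteration count below.

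The displayed estimate is a standard power-iteration bound and is the technical core. Rotating to the eigenbasis $\K=\V\Si\V^T$ and writing $\H=\V^T\G$ (again Gaussian) with blocks $\H_1=\V_s^T\G\in\RB^{s\times c}$, $\H_2=\V_\perp^T\G$ where $\V_\perp$ collects the trailing eigenvectors, $\Si_1=\diag(\sigma_1,\dots,\sigma_s)$, $\Si_2=\diag(\sigma_{s+1},\dots)$, one has $\K^q\G=\V_s\Si_1^q\H_1+\V_\perp\Si_2^q\H_2$, so $\range(\PP)$ contains $\range(\V_s+\V_\perp\F)$ with $\F=\Si_2^q\H_2(\Si_1^q\H_1)^\dagger$. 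A direct projector calculation then gives $\|(\I_n-\PP)(\K^{1/2})_s\|_F\le(1+o(1))\,\|\F\,\Si_1^{1/2}\|_F$, and the key cancellation is $\|(\Si_1^q\H_1)^\dagger\Si_1^{1/2}\|_2\le\sigma_s^{1/2-q}\,\|\H_1^\dagger\|_2$ --- here $\Si_1^{1/2-q}$ has a \emph{negative} exponent, so its largest diagonal entry is $\sigma_s^{1/2-q}$, not $\sigma_1^{1/2-q}$. Combining, $\|(\I_n-\PP)(\K^{1/2})_s\|_F^2\le(1+o(1))\,\sigma_{s+1}\,(\sigma_{s+1}/\sigma_s)^{2t}\,\|\H_2\|_F^2\,\|\H_1^\dagger\|_2^2$; since $\|\K-\K_s\|_*\ge\sigma_{s+1}$, the displayed estimate holds once $(\sigma_{s+1}/\sigma_s)^{2t}\|\H_2\|_F^2\|\H_1^\dagger\|_2^2\le\tfrac12\epsilon$. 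Standard Gaussian bounds give $\|\H_2\|_F^2=\OM(nc)$, and --- when $c\ge s+\OM(\log\tfrac1\delta)$ --- $\|\H_1^\dagger\|_2^2=\poly(n)$ with probability $1-\delta$, so $t=\OM\!\big(\log(n/\epsilon)/\log(\sigma_s/\sigma_{s+1})\big)$ iterations suffice and (ii) holds with probability $1-\delta$. When $c=s$, $\H_1$ is a square Gaussian matrix; the only change is that $\|\H_1^\dagger\|_2=\sigma_{\min}(\H_1)^{-1}$ is $\poly(s)$ except with probability $\OM(s^{-\tau})$ by the least-singular-value estimates of~\citet{tao2010random}, which --- together with the events $\|\H_2\|_F^2=\OM(ns)$ and $\|\G\|_2=\OM(\sqrt n)$ --- yields probability $0.9-\OM(s^{-\tau})$.

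The step I expect to be the main obstacle is getting this reduction exactly right: one must route through $(\K^{1/2})_s$ and carry out the projector calculation carefully so that the power-iteration bound need only be compared against $\epsilon\|\K-\K_s\|_*$, and not against $\epsilon\,\sigma_{s+1}$ or, worse, against $\epsilon\|\K-\K_s\|_*$ divided by $\sigma_1/\sigma_{s+1}$; a careless argument costs either a factor of $2$ or a spurious dependence on the condition number of $\K$ in the number of iterations, either of which would break the advertised $\log(n/\epsilon)$ bound (and one should also flag that $\sigma_s>\sigma_{s+1}$ is implicitly assumed, so that $t$ is finite). A secondary bookkeeping point is to feed the power-iteration analysis an error parameter that is a fixed constant multiple of the target $\epsilon$, so that properties (i)--(ii) drive the proof of Theorem~\ref{thm:kkmeans:nystrom2} with the stated constant.
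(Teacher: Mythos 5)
Your proposal is correct, and its outer skeleton coincides with the paper's: you verify exactly the two hypotheses of Lemma~\ref{lem:proof:main} --- the deterministic structural identity $\B\B^T=\K^{1/2}\Q\Q^T\K^{1/2}$ (which the paper also obtains, for any sketch, from Lemma~\ref{lem:proof:nystrom_decompose}) and the relative-error trace-norm bound --- and then the clustering guarantee follows from the projection-cost-preservation machinery, just as in Theorem~\ref{thm:proof:kkmeans:power}. Where you genuinely differ is in how the trace-norm bound for the power-method sketch is proved. The paper (Theorem~\ref{thm:nystrom:power} via Lemma~\ref{lem:nystrom:power:proof}) writes $\|\K-(\C\W^\dag\C^T)_s\|_*=\min_{\rk(\Z)\le s}\|\K^{1/2}-(\K^{1/2}\U)\Z\|_F^2$ using Lemma~\ref{lem:proof:nystrom_decompose}, replaces $\K^{1/2}\U$ by $\K^{t+1/2}\G$ up to an invertible factor, and invokes the rank-restricted power-scheme bound of Lemma~\ref{lem:proof:power} with $\A=\K^{1/2}$, arriving at $\|\K-\K_s\|_*\big(1+(\sigma_{s+1}/\sigma_s)^{2t}\,\sigma_1^2(\V_{-s}^T\G)/\sigma_s^2(\V_s^T\G)\big)$. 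You instead let $\M$ be the orthogonal projector onto $\range(\K^{1/2}\U)=\range(\K^{t+1/2}\G)$, use $\|\K-(\C\W^\dag\C^T)_s\|_*=\tr\K-\sum_{i\le s}\lambda_i(\M\K\M)$ together with $\sum_{i\le s}\lambda_i(\M\K\M)\ge\|\M(\K^{1/2})_s\|_F^2$ to reduce to the subspace-capture estimate $\|(\I_n-\M)(\K^{1/2})_s\|_F^2\le\epsilon\|\K-\K_s\|_*$, and prove that by the classical projector computation with $\F=\Si_2^q\H_2(\Si_1^q\H_1)^\dag$, $q=t+\tfrac12$, in the style of the randomized-SVD analyses of \citet{halko2011ramdom}. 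Both routes end with the same quantity $(\sigma_{s+1}/\sigma_s)^{2t}$ times a Gaussian factor controlled by $\H_2$ and $\|\H_1^\dag\|_2$, and both use the same random-matrix inputs (a least-singular-value estimate for the nearly square case $c=s+\OM(\log\tfrac1\delta)$, and the \citet{tao2010random} estimate for $c=s$, giving $0.9-\OM(s^{-\tau})$); your probability bookkeeping is at the same level of detail as the paper's. What the paper's route buys is modularity --- the rank restriction is handled once, in Lemmas~\ref{lem:proof:opt_fro}--\ref{lem:proof:nystrom_decompose}, and reused verbatim for Theorem~\ref{thm:nystrom} and the power method --- while your route is more self-contained relative to the standard randomized-SVD literature, and your remark that one must compare against $(\K^{1/2})_s$ (spectral norm $\sigma_s^{1/2}$) rather than $\K^{1/2}$ to keep the condition number out of $t$ is exactly the cancellation the paper realizes through the $\Si_s^{1-t}$ factor inside the proof of Lemma~\ref{lem:proof:power}. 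I see no gap.
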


Note that the power method requires forming the entire kernel matrix $\K \in \RB^{n\times n}$,
which may not fit in memory even in a distributed setting. Therefore, 
in practice, the power method may not be as efficient as the Nystr\"om approximation with uniform sampling,
which avoids forming $\K$.

Theorem~\ref{thm:kkmeans:nystrom2}, Proposition~\ref{prop:kpca}, and Theorem~\ref{thm:power} are highly interesting from a theoretical perspective.
These results demonstrate that $s = \tfrac{k}{\epsilon } \big( 1 + o (1) \big)$ 
features are sufficient to ensure a $(1+\epsilon)$ approximation ratio. 
Prior work~\citep{dhillon2004kernel,ding2005equivalence} set $s = k$ and did not provide approximation ratio guarantees. Indeed,
a lower bound in the linear $k$-means clustering case due to~\citep{cohen2015dimensionality}
shows that $s = \Omega(\tfrac{k}{\epsilon})$ is necessary to obtain a $1+ \epsilon$ approximation ratio.

%

\section{Comparison to Spectral Clustering with Nystr\"om Approximation}
\label{sec:sc}

In this section, we provide a brief discussion and empirical comparison of our clustering algorithm,
which uses the Nystr\"om method to approximate kernel $k$-means clustering, with the popular
alternative algorithm that uses the Nystr\"om method to approximate spectral clustering.

\subsection{Background}
\label{sec:sc:background}

Spectral clustering is a method with a long history
\citep{Cheeger69_bound,Donath:1972,donath1973lower,fiedler1973algebraic,GM95,ST96}.
Within machine learning, spectral clustering is more widely used than
kernel $k$-means clustering~\citep{ng2002spectral,shi2000normalized}, and
the use of the Nystr\"om method to speed up spectral clustering has been popular
since~\citet{fowlkes2004spectral}. Both spectral clustering and kernel $k$-means clustering can be
approximated in time linear in $n$ by using the Nystr\"om method with uniform sampling.
Practitioners reading this paper may ask: 
\begin{quote}
	{\it How does the approximate kernel $k$-means clustering algorithm presented here, which uses Nystr\"om approximation, compare to 
		the popular heuristic of combining spectral clustering with Nystr\"om approximation?}
\end{quote}
Based on our theoretical results and empirical observations, our answer to this reader is:
\begin{quote}
	{\it Although they have equivalent computational costs, kernel $k$-means clustering
		with Nystr\"om approximation is both more theoretically sound and more
		effective in practice than spectral clustering with Nystr\"om
		approximation.}
\end{quote}
We first formally describe spectral clustering, and then substantiate our claim
regarding the theoretical advantage of our approximate kernel $k$-means method.
Our discussion is limited to the normalized and symmetric graph Laplacians used
in~\citet{fowlkes2004spectral}, but spectral clustering using asymmetric graph
Laplacians encounters similar issues.

\subsection{Spectral Clustering with Nystr\"om Approximation}
\label{sec:sc:discussion}

The input to the spectral clustering algorithm is an affinity matrix $\K \in \RB_+^{n\times n}$ that measures the pairwise similarities between 
the points being clustered; typically $\K$ is a kernel matrix or the adjacency matrix of a weighted graph constructed
using the data points as vertices. Let
$\D = \diag (\K \1_n)$ be the diagonal degree matrix associated with $\K$,
and $\LL = \I_n - \D^{-1/2} \K \D^{-1/2} $ be the associated normalized graph Laplacian matrix.
Let $\V_{k} \in \RB^{n\times k}$ denote the bottom $k$ eigenvectors of $\LL$, or equivalently, the top $k$ eigenvectors of $\D^{-1/2} \K \D^{-1/2}$.
Spectral clustering groups the data points by performing linear $k$-means clustering on the 
normalized rows of $\V_{k}$.
\citet{fowlkes2004spectral} popularized the application of the Nystr\"om approximation to spectral clustering. This algorithm computes
an approximate spectral clustering by:
(1) forming a Nystr\"om approximation to $\K$, denoted by $\tilde{\K}$;
(2) computing the degree matrix $\tilde{\D} = \diag (\tilde\K \1_n)$ of $\tilde{\K}$;
(3) computing the top $k$ singular vectors $\tilde{\V}_{k}$ of $\tilde\D^{-1/2} \tilde\K \tilde\D^{-1/2}$,
which are equivalent to the bottom $k$ eigenvectors of
$\tilde{\LL} = \I_n - \tilde\D^{-1/2} \tilde\K \tilde\D^{-1/2}$;
(4) performing linear $k$-means over the normalized rows of $\tilde{\V}_k \in \RB^{n\times k}$.

To the best of our knowledge, 
spectral clustering with Nystr\"om approximation does not have a bounded approximation ratio relative to exact spectral clustering. 
In fact, it seems unlikely that the approximation ratio could be bounded, as there are fundamental
problems with the application of the Nystr\"om approximation to the affinity matrix.

\begin{itemize}
	\item 
	The affinity matrix $\K$ used in spectral clustering must be elementwise nonnegative.
	However, the Nystr\"om approximation of such a matrix can have numerous negative entries, so
	$\tilde{\K}$ is, in general, not proper input for the spectral clustering algorithm.
	In particular, the approximated degree matrix $\tilde{\D} = \diag (\tilde\K \1_n)$
	may have negative diagonal entries, so $\tilde{\D}^{-1/2}$ is not guaranteed to be a real matrix;
	such exceptions must be handled heuristically. 
	The approximate asymmetric Laplacian $\tilde{\LL} = \I_n - \tilde\K \tilde\D^{-1}$
	does avoid the introduction of complex values; 
	however, the negative entries in $\tilde\D^{-1}$ negate whole columns of $\tilde\K$,
	leading to less meaningful negative similarities/distances.
	\item 
	Even if $\tilde{\D}^{-1/2}$ is real, the matrix
	$\tilde{\LL} = \I_n - \tilde\D^{-1/2} \tilde\K \tilde\D^{-1/2}$ may not be SPSD, much less a 
	Laplacian matrix.
	Thus the bottom eigenvectors of $\tilde{\LL}$ cannot be viewed as useful coordinates for linear $k$-means clustering in the same way that
	the eigenvectors of $\LL$ can be.
	\item
	Such approximation is also problematic in terms of matrix approximation accuracy.
	Even when $\tilde{\K}$ approximates $\K$ well, which can be theoretically guaranteed,
	the approximate Laplacian 
	$\tilde{\LL} = \I_n - \tilde\D^{-1/2} \tilde\K \tilde\D^{-1/2}$ can be far from $\LL$.
	This is because a small perturbation in $\tilde{\D}$ can have an out-sized influence on the eigenvectors of $\tilde{\LL}$.
	\item
	One may propose to approximate $\N = \D^{-1/2} \K \D^{-1/2}$, rather than $\K$, with a Nystr\"om approximation $\tilde{\N}$; this ensures that  
	the approximate normalized graph Laplacian $\tilde{\LL} = \I_n - \tilde{\N }$ is SPSD.
	However, this approach requires forming the entirety of $\K$ in order to compute the degree matrix $\D$, and thus has quadratic (with $n$) time and memory costs.
	Furthermore, although the resulting approximation, $\tilde{\LL}$, is SPSD, it is not a graph Laplacian:
	its off-diagonal entries are not guaranteed to be non-positive, and its smallest eigenvalue may be nonzero.
\end{itemize}
In summary, spectral clustering using the Nystr\"om approximation~\citep{fowlkes2004spectral}, which has proven to be a useful heuristic, and which is composed of theoretically principled parts, is less principled when viewed in its entirety.
Approximate kernel $k$-means clustering using Nystr\"om approximation is an equivalently efficient, but theoretically more principled alternative.  

\subsection{Empirical Comparison with Approximate Spectral Clustering using Nystr\"om Approximation}
\label{sec:sc:exp}

\begin{table}[t]\setlength{\tabcolsep}{0.3pt}
	\caption{Summary of the data sets used in our comparisons.}
	\label{tab:data}
	\begin{center}
		\begin{small}
			\begin{tabular}{c c c c}
				\hline
				~~~~~{\bf dataset}~~~~~ & ~~\#{\bf instances} ($n$)~~ 
				& ~~\#{\bf features} ($d$)~~ & ~~\#{\bf clusters} ($k$)~~ \\
				\hline
				MNIST \citep{lecun1998gradient} 	& 60,000 & 780 & 10 \\
				Mushrooms \citep{uci2010} & 8,124 & 112 & 2 \\
				PenDigits \citep{uci2010} & 7,494 & 16 & 10 \\
				\hline
			\end{tabular}
		\end{small}
	\end{center}
\end{table}

To complement our discussion of the relative merits of the two methods, we empirically compared the performance of our novel method of approximate kernel $k$-means clustering using the Nystr\"om method with the 
popular method of approximate spectral clustering using the Nystr\"om method. We used three classification data sets, described in Table~\ref{tab:data}.
The data sets used are available at 
\url{http://www.csie.ntu.edu.tw/~cjlin/libsvmtools/datasets/}.

\begin{figure}
	\vspace{-5mm}
	\begin{center}
		\centering
		\includegraphics[width=0.83\textwidth]{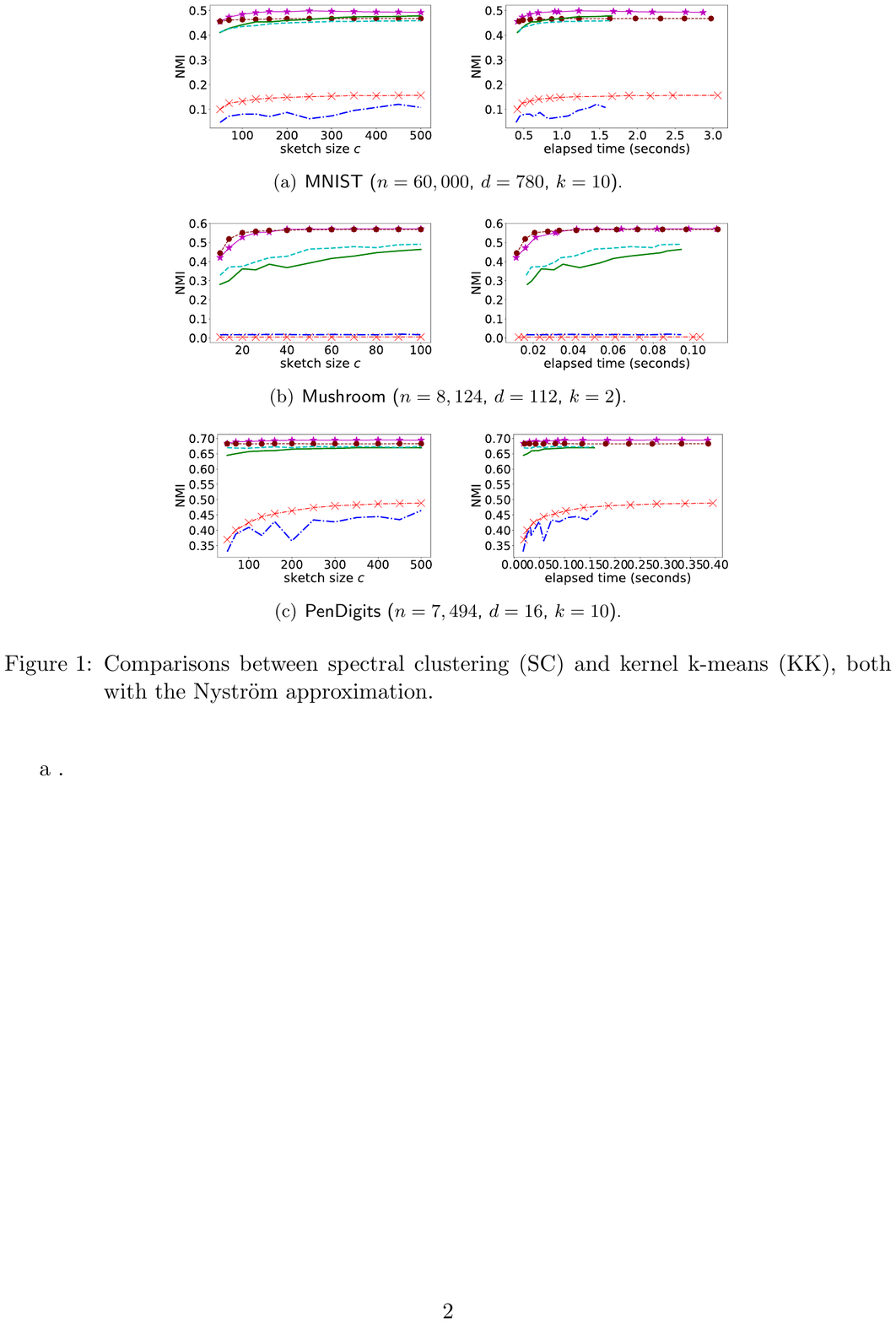}
		\vspace{3mm}
		\includegraphics[width=0.6\textwidth]{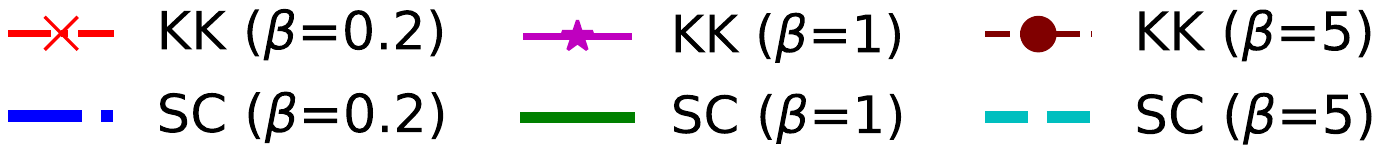}
		\vspace{-5mm}
	\end{center}
	\caption{Comparisons between the runtimes and accuracies of approximate spectral clustering (SC) and approximate kernel $k$-means clustering (KK), 
		both using uniform column sampling Nystr\"om approximation.}
	\label{fig:sc_vs_kk}
\end{figure}

Let $\a_1 , \ldots, \a_n \in \RB^d$ be the input vectors.
We take both the affinity matrix for spectral clustering and the kernel matrix for kernel $k$-means
to be the RBF kernel matrix $\K = [\kappa(\a_i, \a_j)]_{ij} \in \RB^{n\times n}$,
where $\kappa (\a, \a') = \exp \big( - \frac{1}{2\sigma^2} \|\a - \a' \|_2^2 \big)$
and $\sigma$ is the kernel width parameter.
We choose $\sigma$ based on the average interpoint distance in the data sets as
\begin{eqnarray} \label{eq:rbf_param}
\sigma 
& = & \beta \cdot \sqrt{\textstyle \frac{1}{n^2} \sum_{i=1}^n \sum_{j=1}^n \|\a_i - \a_j\|_2^2 },
\end{eqnarray}
where we take $\beta = 0.2$, $1$, or $5$.

The algorithms under comparison are all implemented in Python 3.5.2.
Our implementation of approximate spectral clustering follows the code in~\citep{fowlkes2004spectral}.
To compute linear $k$-means clusterings, we use the function \textsf{sklearn.cluster.KMeans}
present in the scikit-learn package.
Our algorithm for approximate kernel $k$-means clustering is described in more detail in Section~\ref{sec:medium:algs}.
We ran the computations on a MacBook Pro with a 2.5GHz Intel Core i7 CPU and 16GB of RAM.

We compare approximate spectral clustering (SC) with approximate kernel $k$-means clustering (KK), with both using the rank-restricted Nystr\"om method with uniform sampling.%
\footnote{Uniform sampling is appropriate for the value of $\sigma$ used in Eqn.~(\ref{eq:rbf_param}); see~\citet{gittens2013revisiting} for a detailed discussion of the effect of varying $\sigma$.}
We used normalized mutual information (NMI) \citep{strehl2002cluster} to evaluate clustering performance:
the NMI falls between 0 (representing no mutual information between the true and approximate clusterings) and 1 (perfect correlation of the two clusterings), so larger NMI indicates better performance. 
The target dimension $s$ is taken to be $k$; and, for each method, the sketch size $c$ is varied from $5s$ to $50s$.
We record the time cost of the two methods, excluding the time spent on the $k$-means clustering required in both algorithms.\footnote{For both
	SC and KK with Nystr\"om approximation, the extracted feature matrices have dimension $n\times k$, so the $k$-means clusterings required by
	both SC and KK have identical cost.}
We repeat this procedure $100$ times and report the averaged NMI and average elapsed time.

We note that, at small sketch sizes $c$, exceptions often arise during approximate spectral clustering due to negative entries in the degree matrix.
(This is an example, as discussed in Section~\ref{sec:sc:discussion}, of when approximate spectral clustering heuristics do not perform well.)
We discard the trials where such exceptions occur.

Our results are summarized in Figure~\ref{fig:sc_vs_kk}.
Figure~\ref{fig:sc_vs_kk} illustrates the NMI of SC and KK as a function of the sketch size $c$ and as a function of elapsed time for both algorithms.
While there are quantitative differences between the results on the three data sets, the plots all show that KK is more accurate as a function of the sketch size or elapsed time than SC.

\section{Single-Machine Medium-Scale Experiments}
\label{sec:medium}

In this section, we empirically compare the Nystr\"om method and random feature maps \citep{rahimi2007random} for kernel $k$-means clustering.
We conduct experiments on the data listed in Table~\ref{tab:data}.
For the Mushrooms and PenDigits data, we are able to evaluate the objective function value of kernel $k$-means clustering.

\begin{algorithm}[t]  
	\caption{Approximate Kernel $k$-Means Clustering using Nystr\"om Approximation.}
	\label{alg:kkmeans}
	\begin{small}
		\begin{algorithmic}[1]
			\STATE {\bf Input}: data set $\a_1, \ldots, \a_n \in \RB^d$,
			number of clusters $k$, target dimension $s$ ($\geq k$), 
			arbitrary integer $\ell$ $(> s)$, sketch size $c$ ($> \ell$),
			kernel function $\kappa$.
			\STATE \textcolor{Brown}{\bf // Step 1: The Nystr\"om Method}
			\STATE Form sketches $\C = \K \PP \in \RB^{n\times c}$ and $\W = \PP^T \C \in \RB^{c\times c}$,
			where $\K = [\kappa (\a_i , \a_j)]_{ij} \in \RB^{n\times n}$ is the kernel matrix
			and $\PP \in \RB^{n\times c}$ is some sketching matrix, e.g., uniform sampling;
			\STATE Compute a matrix $\R \in \RB^{n\times \ell}$ such that $\R \R^T = \C \W_\ell^{-1} \C^T$;
			\STATE \textcolor{OliveGreen}{\bf // Step 2: Dimensionality Reduction}
			\STATE Compute the rank-$s$ truncated SVD $\R_s = \tilde{\U}_s \tilde{\Si}_s \tilde{\V}_s^T $;
			\STATE Let $\B = \tilde{\U}_s \tilde{\Si}_s (= \R \tilde{\V}_s) \in \RB^{n\times s}$;
			\STATE \textcolor{Blue}{\bf // Step 3: Linear $k$-Means Clustering}
			\STATE Perform $k$-means clustering over the rows of $\B$;
			\RETURN the clustering results.
		\end{algorithmic}
	\end{small}
\end{algorithm}

\subsection{Single-Machine Implementation of Approximate Kernel $k$-Means}
\label{sec:medium:algs}

Our algorithm for approximate kernel $k$-means clustering comprises three steps:
\textcolor{Brown}{\bf Nystr\"om approximation}, 
\textcolor{OliveGreen}{\bf dimensionality reduction}, and 
\textcolor{Blue}{\bf linear $k$-means clustering}.
Both the single-machine as well as the distributed variants of the algorithm are governed by three parameters:
$s$, the number of features used in the clustering; $\ell$, a regularization parameter; and $c$, the sketch size. These 
parameters satisfy $k \leq s < \ell \leq c \ll n$.
\begin{enumerate}
	\item 
	\textcolor{Brown}{\bf Nystr\"om approximation.}
	Let $c$ be the sketch size and $\PP \in \RB^{n\times c}$ be a sketching matrix.
	Let $\C = \K \PP$ and $\W = \PP^T \K \PP = \PP^T \C $. 
	The standard Nystr\"om approximation is $ \C \W^\dag \C^T$; small singular values in $\W$ can lead to instability in the Moore-Penrose inverse,
	so a widely used heuristic is to choose $\ell < c$ and use $\C \W_\ell^{-1} \C^T$ instead of the standard Nystr\"om approximation.\footnote{The Nystr\"om approximation $\C \W^{\dag} \C^T$ is correct in theory,
		but the Moore-Penrose inverse often causes numerical errors in practice.
		The Moore-Penrose inverse drops all the zero singular values, however, 
		due to the finite numerical precision, it is difficult to determine whether a singular value, say $10^{-12}$, should be zero or not, and this makes the computation unstable:
		if such a small singular value is believed to be zero, it will be dropped;
		otherwise, the Moore-Penrose inverse will invert it to obtain a singular value of $10^{12}$.
		Dropping some portion of the smallest singular values is a simple heuristic that avoids this instability.
		This is why we heuristically use $\C \W_\ell^{-1} \C^T$ instead of $\C \W^{\dagger} \C^T$.
		Currently we do not have theory for this heuristic. \citet{chiu2013sublinear} considers
		the theoretical implications of this regularization heuristic, but their results do not apply to our problem.}
	We set $\ell = \lceil c / 2 \rceil$ (arbitrarily).
	Let $\W_\ell = \U_{\W,\ell} \Lam_{\W,\ell} \U_{\W,\ell}^T$ be the truncated SVD of $\W$ and 
	return $\R = \C \U_{\W,\ell} \Lam_{\W,\ell}^{-1/2} \in \RB^{n\times \ell}$ as the output of the Nystr\"om method.
	\item
	\textcolor{OliveGreen}{\bf Dimensionality reduction.}
	Let $\tilde{\V}_s \in \RB^{\ell\times s}$ contain the dominant $s$ right singular vectors of $\R$.
	Let $\B = \R \tilde{\V}_s \in \RB^{n\times s}$.
	It can be verified that $\B \B^T = (\C \W_\ell^{-1} \C^T)_s$, 
	which is our desired rank-restricted Nystr\"om approximation.
	\item
	\textcolor{Blue}{\bf Linear $k$-means clustering.}
	With $\B \in \RB^{n\times s}$ at hand, use an arbitrary off-the-shelf linear $k$-means clustering algorithm to cluster the rows of $\B$.
\end{enumerate}

\begin{figure}
	\vspace{-5mm}
	\begin{center}
		\centering
		\includegraphics[width=0.99\textwidth]{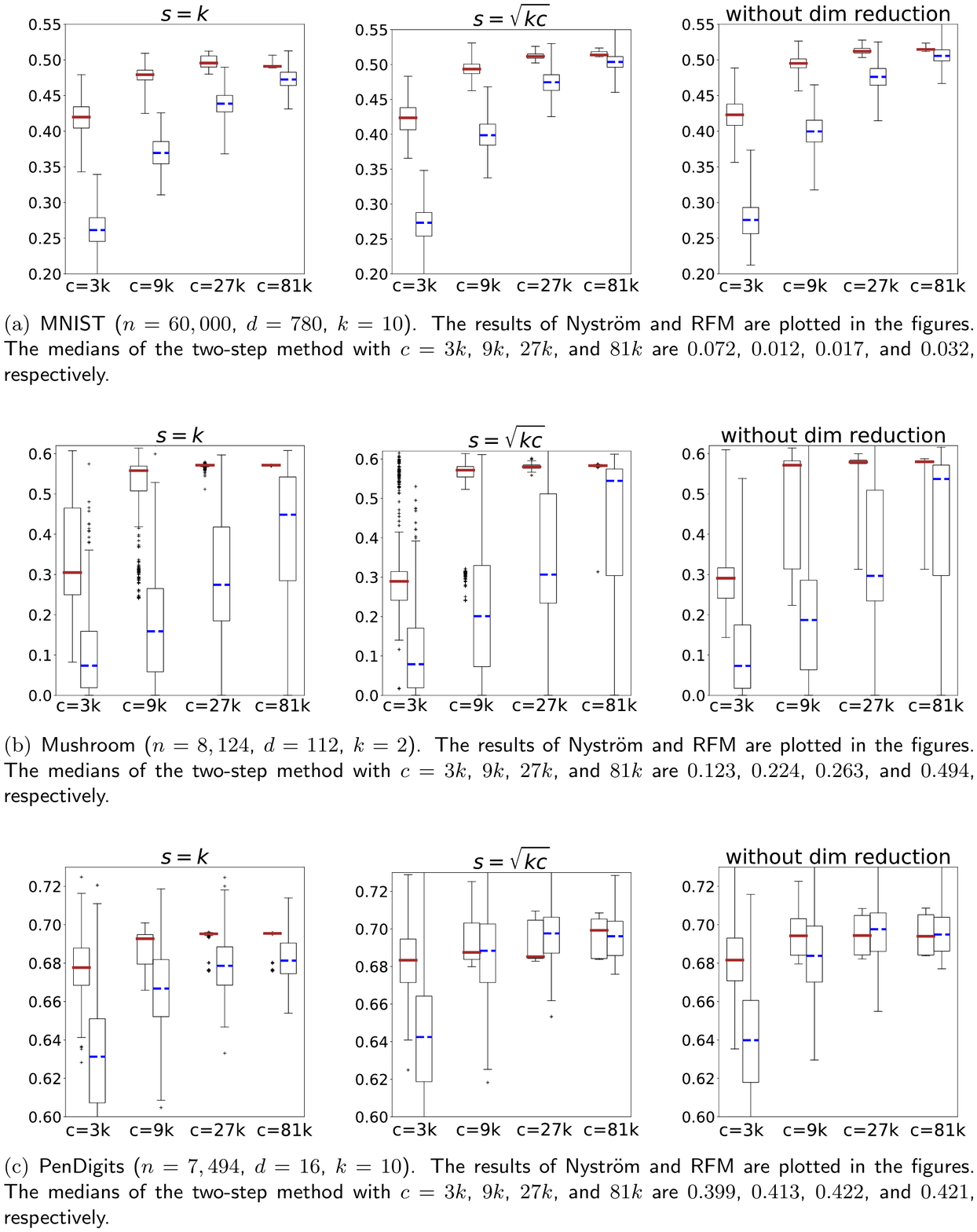}
		\vspace{3mm}
		\includegraphics[width=0.6\textwidth]{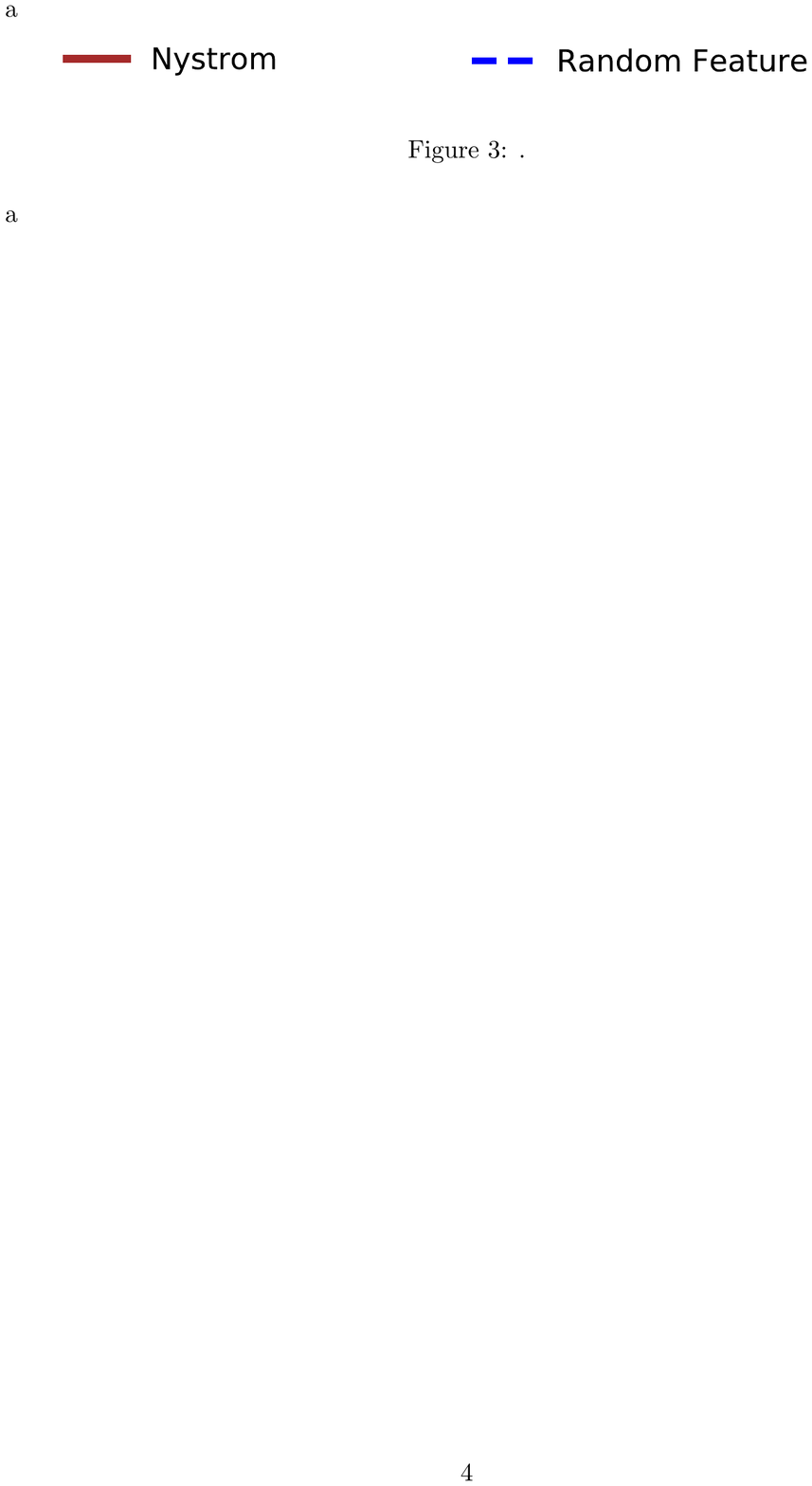}
		\vspace{-8mm}
	\end{center}
	\caption{Quality of approximate kernel $k$-means clusterings using Nystr\"om, 
		random feature maps \citep{rahimi2007random}, and the two-step method \citep{chitta2011approximate}. 
		(Dimensionality reduction is not applicable to the two-step method). 
		The $y$-axis reports the normalized mutual
		information (NMI).}
	\label{fig:nys_rfm_nmi}
\end{figure}

See Algorithm~\ref{alg:kkmeans} for the single-machine version of this approximate kernel $k$-means clustering algorithm.
Observe that we can use uniform sampling to form $\C$ and $\W $, and thereby avoid computing most of $\K$.

Let $\R \in \RB^{n\times c}$ be the feature matrix computed by random feature maps (RFM).
To make the comparison fair, we perform \textcolor{OliveGreen}{\bf
	dimensionality reduction} for RFM \citep{rahimi2007random} in the same way as described in Algorithm~\ref{alg:kkmeans} to compute $\B = \R \tilde{\V}_s$,
and apply linear $k$-means clustering on the rows of $\B$.

\subsection{Comparing Nystr\"om, Random Feature Maps, and Two-Step Method}

We empirically compare the clustering performances of kernel approximations
formed using Nystr\"om, random feature map (RFM)~\citep{rahimi2007random}, and
the two-step method~\citep{chitta2011approximate} on the data sets detailed in Table~\ref{tab:data}.

We use the RBF kernel with width parameter given by \eqref{eq:rbf_param};
Figure~\ref{fig:sc_vs_kk} indicates that $\beta = 1$ is a good choice for these data sets.
We conduct dimensionality reduction for both Nystr\"om and RFM to obtain $s$-dimensional features, and consider
three choices: $s= k$, $s = \big\lceil \sqrt{c k} \big\rceil $, and without dimensionality reduction (equivalently, $s = c$).

\begin{figure}
	\vspace{-5mm}
	\begin{center}
		\centering
		\includegraphics[width=0.99\textwidth]{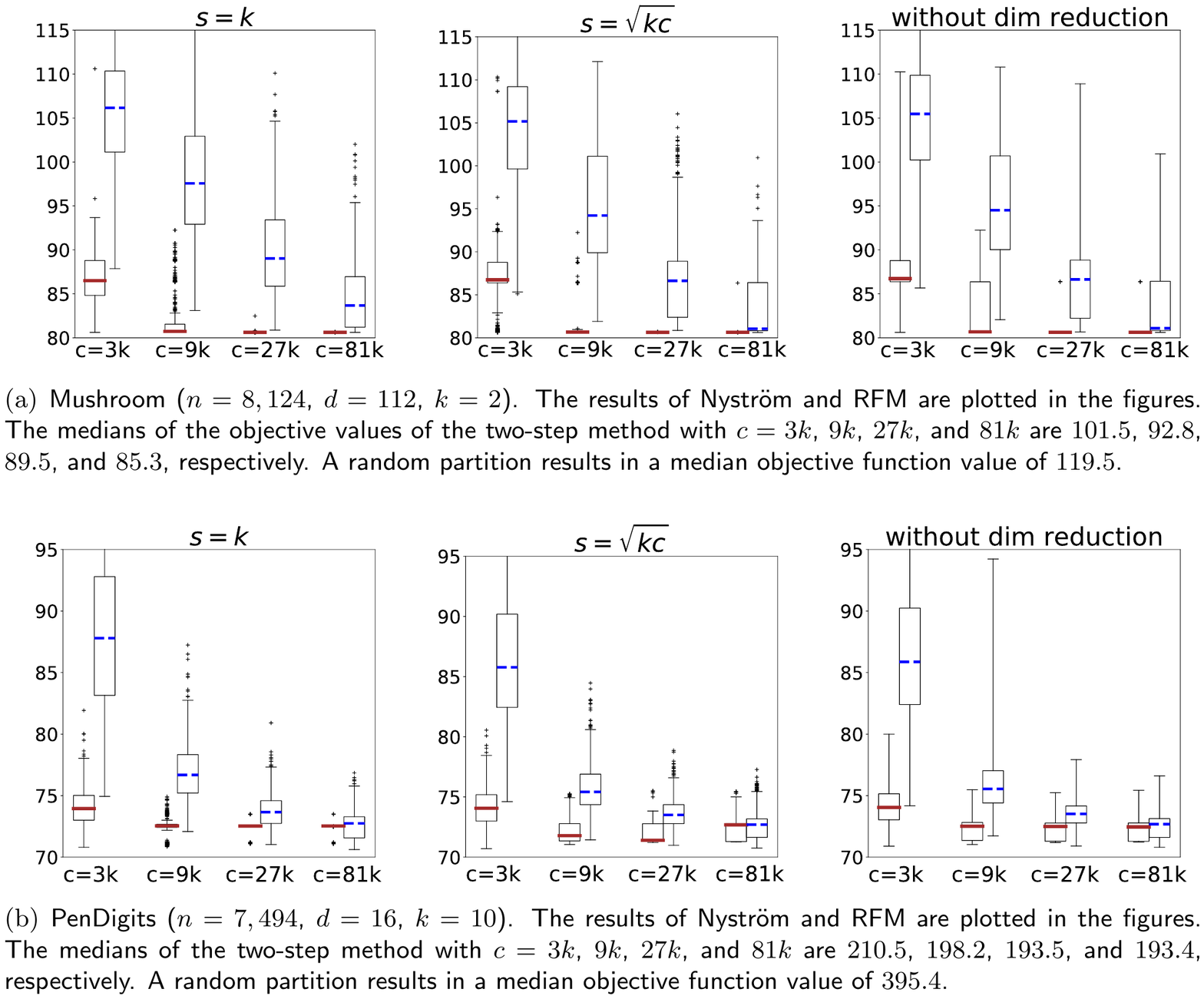}
		\vspace{3mm}
		\includegraphics[width=0.6\textwidth]{figure/nys_rfm_legend.pdf}
		\vspace{-5mm}
	\end{center}
	\caption{Quality of approximate kernel $k$-means clusterings obtained using
		rank-restricted Nystr\"om, RFMs~\citep{rahimi2007random}, and the two-step
		method \citep{chitta2011approximate}. 
		(Dimensionality reduction is not applicable to the two-step method). 
		The $y$-axis reports the kernel $k$-means clustering objective function value.}
	\label{fig:nys_rfm_obj}
\end{figure}

The quality of the clusterings is quantified using both normalized mutual information (NMI) \citep{strehl2002cluster}
and the  objective function value:
\begin{small}
	\begin{eqnarray}
	\label{eqn:kernel_kmeans_normalized}
	\frac{1}{ n} \: \sum_{i=1}^k \sum_{j\in \JM_i} 
	\bigg\| \k_j \: - \: \frac{1}{|\JM_i |} \sum_{l \in \JM_i} \k_l \bigg\|_2^2 ,
	\end{eqnarray}
\end{small}%
where $\k_1 , \cdots , \k_n \in \RB^n$ are the columns of the kernel matrix $\K$,
and the disjoint sets $\JM_1 , \cdots , \JM_k$ reflect the clustering.

We repeat the experiments $500$ times and report the results in Figures~\ref{fig:nys_rfm_nmi} and \ref{fig:nys_rfm_obj}.
The experiments show that as measured by both NMIs and objective values, the Nystr\"om method outperforms RFM in most cases.
Both the Nystr\"om method and RFM are consistently superior to the two-step method of \citep{chitta2011approximate}, which requires a large sketch size.
All the compared methods improve as the sketch size $c$ increases.

Judging from these medium-scale experiments, the target rank $s$ has little impact on the NMI and clustering objective value.
This phenomenon is not general; in the large-scale experiments of the next section we see that setting $s$ properly 
allows one to obtain a better NMI than an over-small or over-large $s$.

\section{Large-Scale Experiments using Distributed Computing}
\label{sxn:implementation}

In this section, we empirically study our approximate kernel $k$-means clustering algorithm on large-scale data.
We state a distributed version of the algorithm, implement it in Apache Spark\footnote{ This implementation is available at
	\url{https://github.com/wangshusen/SparkKernelKMeans.git}. }, and evaluate its performance on NERSC's Cori supercomputer.
We investigate the effect of increased parallelism, sketch size $c$, and target dimension $s$.

Algorithm~\ref{alg:map_reduce} is a distributed version of our method described in Section~\ref{sec:medium:algs}.
Again, we use uniform sampling to form $\C$ and $\W $ to avoid computing most of $\K$.
We mainly focus on the \textcolor{Brown}{\bf Nystr\"om approximation} step, as the other two steps are well supported by distributed computing systems such as Apache Spark.

\begin{algorithm}[t]  
	\caption{Distributed Approximate Kernel $k$-Means Clustering using Nystr\"om Approximation.}
	\label{alg:map_reduce}
	\begin{small}
		\begin{algorithmic}[1]
			\STATE {\bf Input}: data set $\a_1, \ldots, \a_n \in \RB^d$ distributed among $m$ machines,
			number of clusters $k$, target dimension $s$ ($\geq k$), 
			arbitrary integer $\ell > s$, sketch size $c$ ($> \ell$),
			kernel function $\kappa$.
			\STATE \textcolor{Brown}{\bf // Step 1: The Nystr\"om Method}
			\STATE {Sample} $c$ vectors from $\a_1, \ldots, \a_n$ to form $\a_1', \ldots , \a_c'$ and send to the driver;
			\STATE {Driver} computes $\W = [\kappa (a_i', a_j')]_{ij} \in \RB^{c\times c}$
			and a matrix $\Z \in \RB^{c\times \ell}$ satisfying $\Z \Z^T = \W_\ell^{-1}$;
			\STATE {Broadcast} $\Z $ and $\a_1', \ldots , \a_c'$ to all executors;
			\FOR{all $j \in [n]$}
			\STATE Each executor locally computes $\c_j = [ \kappa (\a_j , \a_1') ; \ldots ; \kappa (\a_j , \a_c') ] \in \RB^{c}$
			and $\r_j = \Z^T \c_j \in \RB^{\ell}$;
			\ENDFOR
			\STATE \textcolor{OliveGreen}{\bf // Step 2: Dimensionality Reduction}
			\STATE Let $\tilde{\V}_{s}$ contain the dominant $s$ right singular vectors of
			the distributed row matrix $\R = [\r_1 , \ldots , \r_n]^T \in \RB^{n\times \ell}$
			(computed using a distributed truncated SVD algorithm);
			\STATE Broadcast $\tilde{\V}_s \in \RB^{\ell\times s}$ to all executors;
			\FOR{all $j \in [n]$}
			\STATE Each executor locally computes $\bb_j = \tilde{\V}_s^T \r_j \in \RB^s$;
			\ENDFOR
			\STATE \textcolor{Blue}{\bf // Step 3: Linear $k$-Means Clustering}
			\STATE Perform $k$-means clustering over the Nystr\"om features $\bb_1 , \ldots , \bb_n \in \RB^s$
			(using a distributed linear $k$-means clustering algorithm).
			\RETURN the clustering results.
		\end{algorithmic}
	\end{small}
\end{algorithm}

\begin{table}[t]\setlength{\tabcolsep}{0.3pt}  
	\caption{The mean and standard deviation of the runtimes of Algorithm~\ref{alg:map_reduce}, in seconds, as a function of the number of
		compute nodes. We report the total runtime as well as the runtimes of the three stages:
		\textcolor{Brown}{\bf Nystr\"om approximation}, 
		\textcolor{OliveGreen}{\bf dimensionality reduction (DR)}, and 
		\textcolor{Blue}{\bf linear $k$-means clustering}.
		The total time is the sum of the three stages and all the Spark overheads, 
		e.g., Spark initialization.
		We fix $k=10$ and $c=400$.
		In the upper table, we set $s=20$, and the NMI is $0.400 \pm 0.009$;
		in the lower table, we set $s=80$, and the NMI is $0.410 \pm 0.010$.}
	\label{tab:parallel}
	\vspace{-4mm}
	\begin{center}
		\begin{footnotesize}
			\begin{tabular}{c c c c c c}
				\hline
				& 8 Nodes & 16 Nodes & 32 Nodes & 64 Nodes & 128 Nodes \\
				\hline
				~~\textcolor{Brown}{\bf Nystr\"om}~~
				& ~~$2998.6 \pm 263.9$~~ & ~~$1268.8 \pm 106.4$~~
				& ~~$665.2 \pm 39.6$~~ & ~~$369.7 \pm 126.1$~~ & ~~$181.4 \pm 22.2$~~ \\
				~~\textcolor{OliveGreen}{\bf DR}~~
				& ~~$37.4 \pm 58.3$~~ & ~~$63.8 \pm 72.5$~~
				& ~~$38.9 \pm 18.2$~~ & ~~$87.3 \pm 37.8$~~ & ~~$183.9 \pm 89.1$~~ \\
				~~\textcolor{Blue}{\bf $k$-means}~~
				& ~~$117.2 \pm 132.3$~~ & ~~$153.1 \pm 98.8$~~
				& ~~$119.9 \pm 76.9$~~ & ~~$223.2 \pm 88.4$~~ & ~~$391.0 \pm 155.9$~~ \\
				~~{\bf Total}~~
				& ~~$3201.5 \pm 344.0$~~ & ~~$1532.0 \pm 146.9$~~
				& ~~$867.9 \pm 97.5$~~ & ~~$734.7 \pm 152.0$~~ & ~~$828.2 \pm 210.4$~~ \\
				\hline
				\vspace{2mm}
			\end{tabular}
			\begin{tabular}{c c c c c c}
				\hline
				& 8 Nodes & 16 Nodes & 32 Nodes & 64 Nodes & 128 Nodes \\
				\hline
				~~\textcolor{Brown}{\bf Nystr\"om}~~
				& ~~$3008.4 \pm 385.6$~~  &  ~~$1312.9 \pm 141.2$~~ 
				& ~~$696.0 \pm 111.8$~~  &  ~~$342.5 \pm 16.9$~~ & ~~$197.0 \pm 24.1$~~ \\
				~~\textcolor{OliveGreen}{\bf DR}~~
				& $53.0 \pm 41.9$   &  $58.7 \pm 22.7$ 
				& $94.6 \pm 31.8$   &  $179.4 \pm 55.8$ & $470.6 \pm 118.5$ \\
				~~\textcolor{Blue}{\bf $k$-means}~~
				& $58.2 \pm 24.7$   &  $80.9 \pm 31.8$ 
				& $104.6 \pm 28.7$   &  $211.3 \pm 103.1$ & $501.7 \pm 162.9$ \\
				~~{\bf Total}~~
				& ~~$3168.1 \pm 434.1$~~  &  ~~$1492.6 \pm 135.3$~~ 
				& ~~$940.6 \pm 139.3$~~  &  ~~$775.9 \pm 151.6$~~ & ~~$1232.4 \pm 232.1$~~ \\
				\hline
			\end{tabular}
		\end{footnotesize}
	\end{center}
\end{table}

\subsection{Experimental Setup}

We implemented Algorithm~\ref{alg:map_reduce} in the Apache Spark framework \citep{zaharia2010spark,zaharia2012rdd}, using the Scala 
API. We computed the Nystr\"om approximation using the matrix operations provided by Spark, and invoked the MLlib library for machine learning
in Spark~\citep{meng2016mllib} to perform the dimensionality reduction and linear $k$-means clustering steps.
For the linear $k$-means clustering, we set the maximum number of iterations to $100$.

We ran our experiments on Cori Phase I, a NERSC supercomputer, located at Lawrence Berkeley National Laboratory. 
Cori Phase I is a Cray XC40 system with 1632 compute nodes, each of which has two 2.3GHz 16-core Haswell processors and 128GB of DRAM. 
The Cray Aries high-speed interconnect linking the compute nodes is configured in a dragonfly topology. 

We used the MNIST8M data set to conduct our empirical evaluations; this data set
has $n=8.1\times 10^6$ instances, $d=784$ features, and $k=10$ clusters.
We vary $c$ and $s$ and set $\ell = \frac{c}{2}$.
We chose the RBF kernel width parameter according to~\eqref{eq:rbf_param} with $\beta = 1.0$.
We use Cori's default setting of Spark configurations.
For each setting of parameters, we repeated the experiments $10$ times and recorded the NMIs and elapsed time.

\subsection{Effect of Increased Parallelism}

We varied the number of nodes to test the impact of increased parallelism on each of the three steps in Algorithm~\ref{alg:map_reduce}.
We set the sketch size to $c=400$ and the target dimension to $s=20$ or $80$.
Table~\ref{tab:parallel} reports the mean and standard deviation of the elapsed times.
As a reference point, using $32$ nodes, our algorithm takes $15$ minutes on average to group the $8.1$ million input instances into $10$ clusters.
We also plot the elapsed times in Figure~\ref{fig:spark_nodes}.

\begin{figure}[t]
	\begin{center}
		\centering
		\subfigure[\textsf{$k=10$, $s=20$, $c=400$.}]{
			\includegraphics[width=0.45\textwidth]{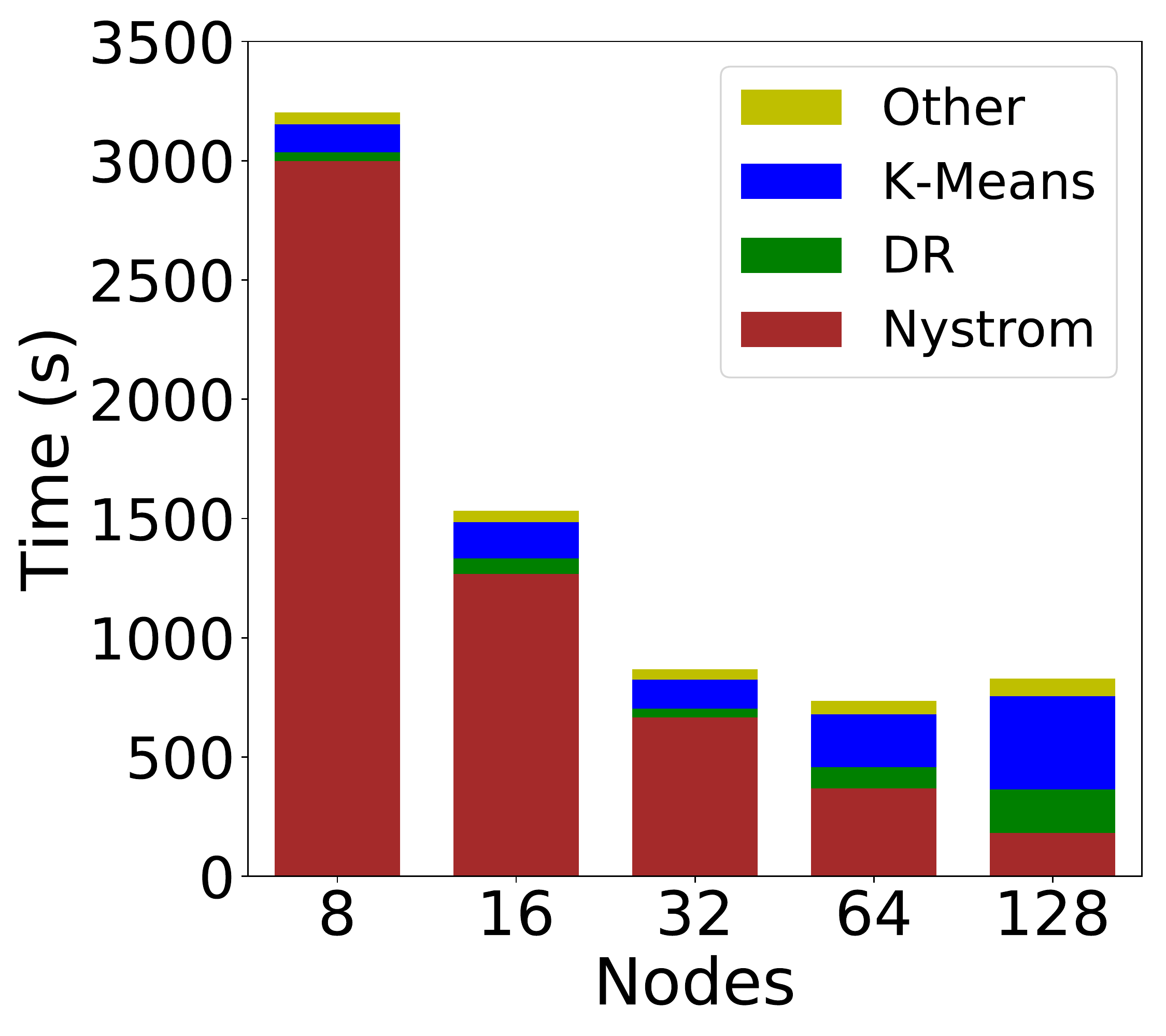}}~~
		\subfigure[\textsf{$k=10$, $s=80$, $c=400$.}]{
			\includegraphics[width=0.45\textwidth]{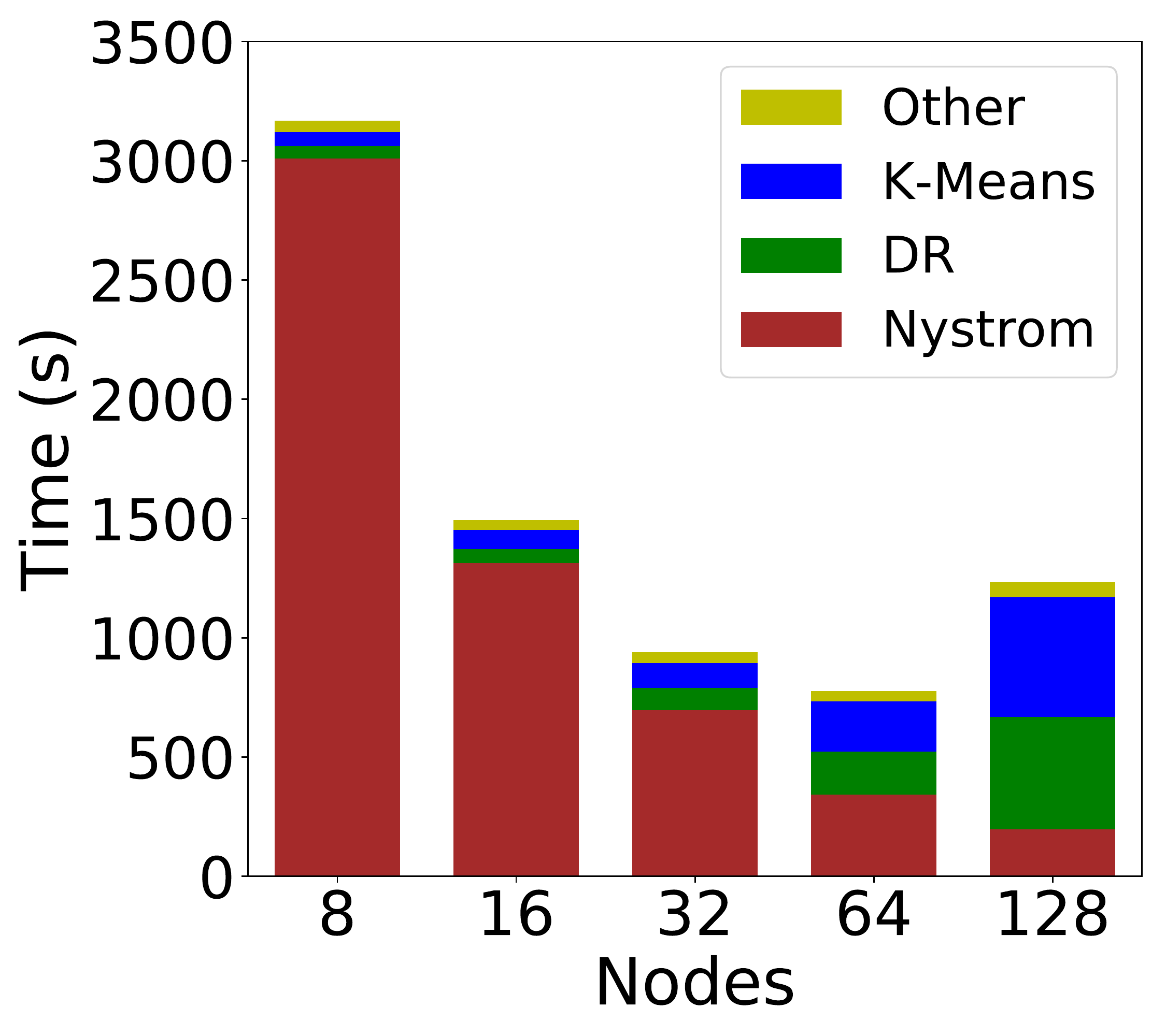}}
	\end{center}
	\vspace{-5mm}
	\caption{The mean of the runtimes of Algorithm~\ref{alg:map_reduce}, in seconds, as a function of the number of compute nodes.}
	\label{fig:spark_nodes}
\end{figure}

The \textcolor{Brown}{\bf Nystr\"om method} scales very well with the increase of nodes:
its elapsed time is inversely proportional to the number of nodes.
This is because the \textcolor{Brown}{\bf Nystr\"om method} requires only 2 rounds of communications;
the elapsed time is spent mostly on computation on the executors.

\textcolor{OliveGreen}{\bf Dimensionality reduction (DR)} and \textcolor{Blue}{\bf linear $k$-means clustering} are highly iterative
and thus have high latency, incur straggler delays, and have large scheduler and communication overheads.
See~\citep{gittens2016multi} for an in-depth discussion of the performance concerns when using Spark for distributed matrix computations.
As the number of nodes increases from $32$ to $64$ and higher, 
\textcolor{OliveGreen}{\bf DR} and \textcolor{Blue}{\bf $k$-means} exhibit anti-scaling behavior: as the number of nodes goes up, their 
runtimes increase rather than decrease. 
For \textcolor{OliveGreen}{\bf DR}, the input dimension is $\ell=\frac{c}{2}=200$, 
and the target dimension is $s=20$ or $80$; 
while for \textcolor{Blue}{\bf $k$-means}, the input dimension is $s=20$ or $80$. 
Clearly, the issue here is not that \textcolor{OliveGreen}{\bf DR} and \textcolor{Blue}{\bf $k$-means} are computationally intensive.
Instead, as the number of nodes increase, 
although the per-node computational time decreases, 
the Spark communications overheads are increasing,
making \textcolor{OliveGreen}{\bf DR} and \textcolor{Blue}{\bf $k$-means} less performant.
Such behavior has been characterized and studied
in other Spark implementations of linear algebra algorithms~\citep{gittens2016multi}.

\begin{table}[h]\setlength{\tabcolsep}{0.3pt}  
	\caption{The NMI and elapsed time (seconds) for varying sketch sizes $c$.
		Here ``T'' denotes the elapsed time.
		We use $32$ nodes.
		In the upper table, we set $s=20$;
		in the lower table, we set $s=80$.}
	\label{tab:spark_c}
	\begin{center}
		\begin{footnotesize}
			\begin{tabular}{c c c c c}
				\hline
				& $c=100$ & $c=400$ & $c=1,600$  \\
				\hline
				~~{NMI}~~
				& ~~$0.3833 \pm 0.0117$~~ & ~~$0.3975 \pm 0.0112$~~ & ~~$0.4069 \pm 0.0001$~~ \\
				~~{T(\textcolor{Brown}{Nystr\"om})}~~
				& ~~$138.5 \pm 62.1$~~ & ~~$665.2 \pm 39.5$~~ & ~~$2634.5 \pm 94.4$~~ \\
				~~{T(\textcolor{OliveGreen}{DR})}~~
				& ~~$1.2 \pm 0.5$~~ & ~~$38.9 \pm 18.2$~~ & ~~$31.3 \pm 6.6$~~ \\
				~~{T(\textcolor{Blue}{$k$-Means})}~~
				& ~~$60.9 \pm 28.7$~~ & ~~$119.9 \pm 76.9$~~ & ~~$86.9 \pm 27.3$~~ \\
				~~{T(Total)}~~
				& ~~$227.4 \pm 101.3$~~ & ~~$867.9 \pm 97.5$~~ & ~~$2785.4 \pm 83.2$~~ \\
				\hline
				\vspace{2mm}
			\end{tabular}
			\begin{tabular}{c c c c}
				\hline
				& $c=400$ & $c=1600$ \\
				\hline
				~~{NMI}~~
				& ~~$0.4101 \pm 0.0101$~~ & ~~$0.4233 \pm 0.0131$~~  \\
				~~{T(\textcolor{Brown}{Nystr\"om})}~~
				& ~~$696.0 \pm 111.8$~~ & ~~$2728.1 \pm 190.3$~~ \\
				~~{T(\textcolor{OliveGreen}{DR})}~~
				& ~~$94.6 \pm 31.8$~~ & ~~$97.9 \pm 20.6$~~ \\
				~~{T(\textcolor{Blue}{$k$-Means})}~~
				& ~~$104.6 \pm 28.7$~~ & ~~$88.0 \pm 23.9$~~ \\
				~~{T(Total)}~~
				& ~~$940.6 \pm 139.3$~~ & ~~$2952.2 \pm 174.5$~~ \\
				\hline
			\end{tabular}
		\end{footnotesize}
	\end{center}
\end{table}

\subsection{Effect of Sketch Size $c$}

We executed our Spark implementation using $32$ compute nodes, setting $s=20$ or $80$, and varying the target dimension $c$.
Table~\ref{tab:spark_c} reports the observed normalized mutual information (NMI)~\citep{strehl2002cluster} and elapsed times. 
As predicted by our theory, for fixed $k$ and $s$, larger $c$ always leads to better performance.

\begin{figure}[t]
	\begin{center}
		\centering
		\subfigure[\textsf{$k=10$ and $s=20$.}]{
			~~\includegraphics[width=0.4\textwidth]{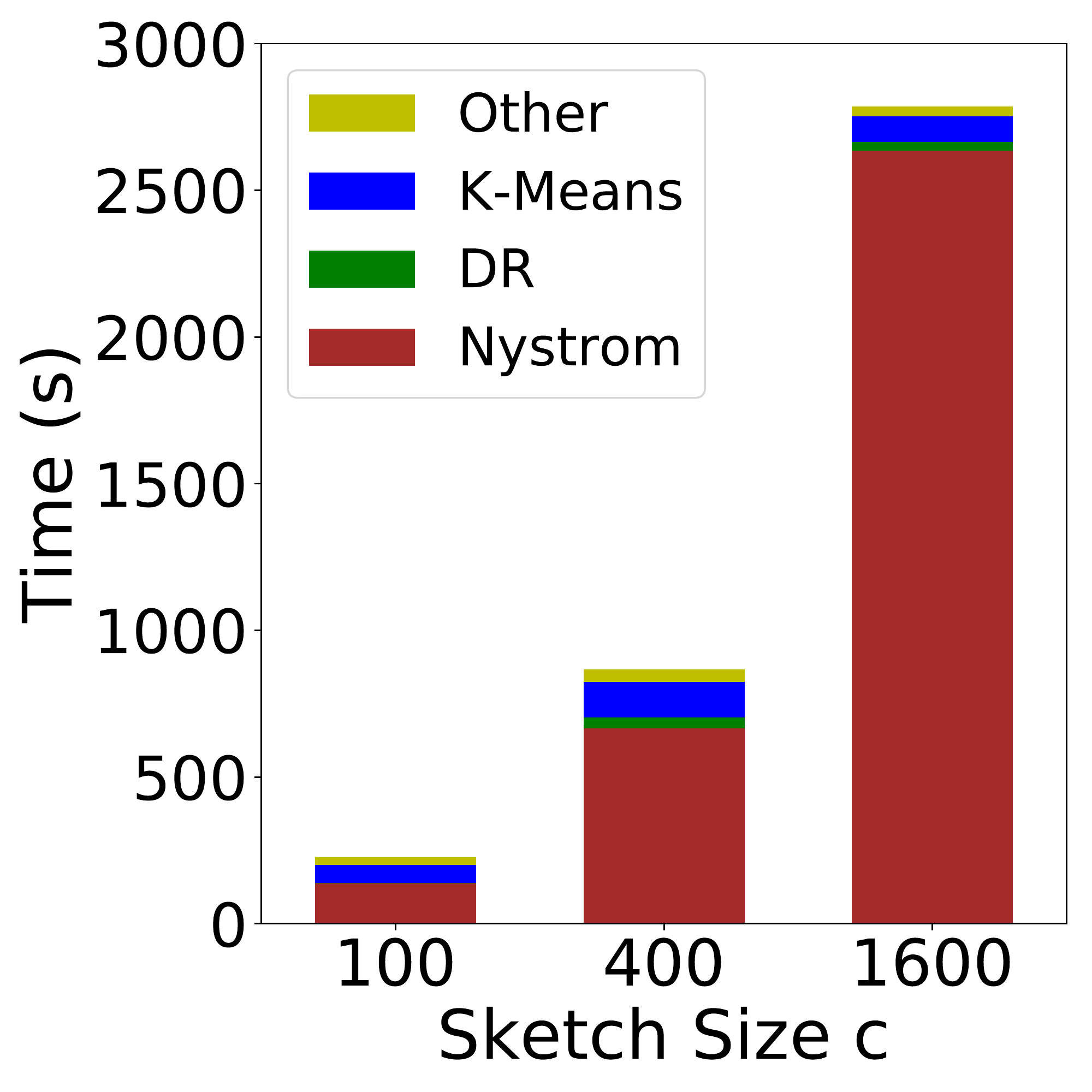}~~}
		\subfigure[\textsf{$k=10$ and $s=80$.}]{
			~~~~~~~\includegraphics[width=0.3\textwidth]{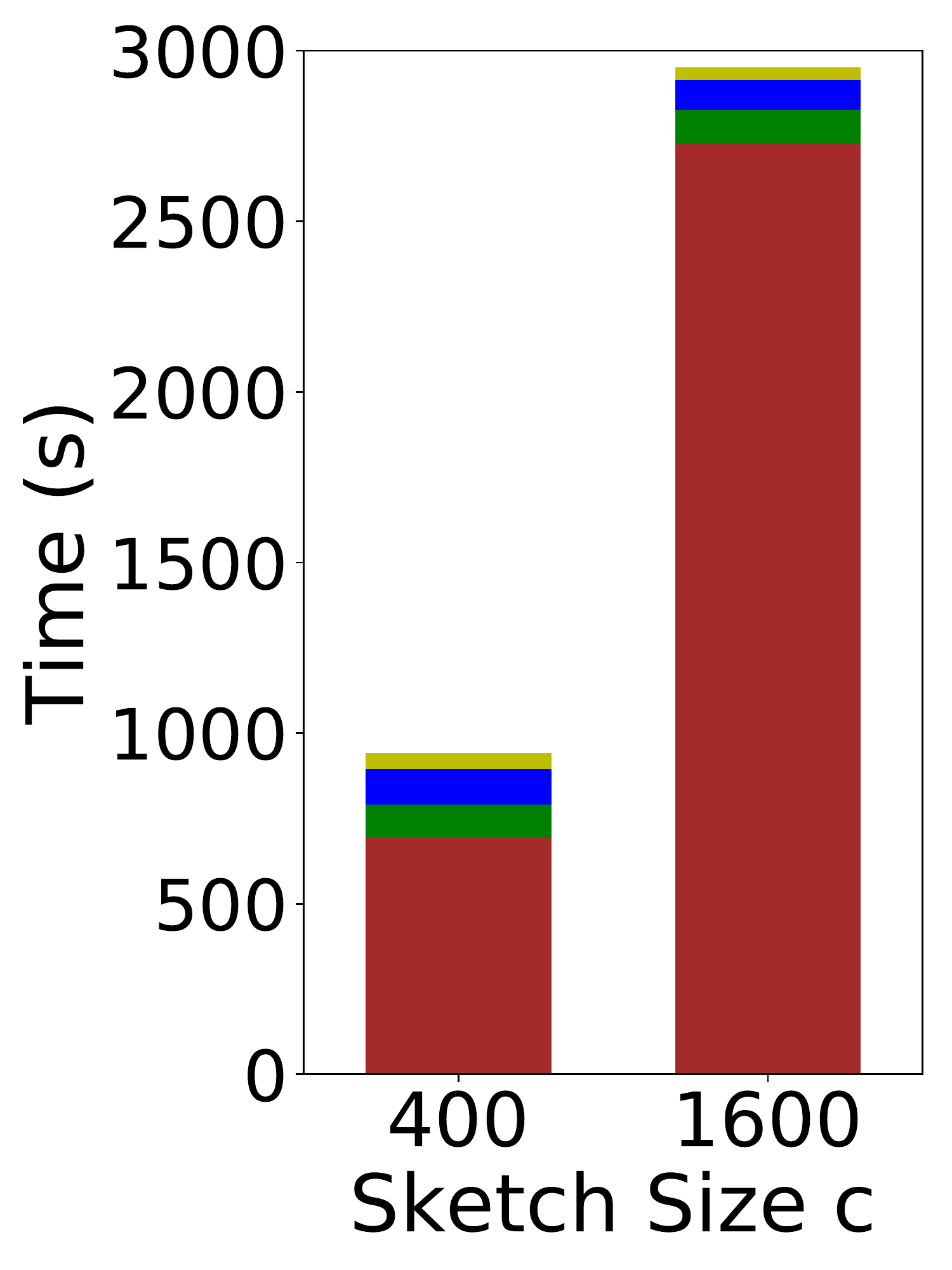}~~~~~~~}
	\end{center}
	\vspace{-5mm}
	\caption{The mean of the runtimes of Algorithm~\ref{alg:map_reduce}, in seconds, as a function of the sketch size $c$.}
	\label{fig:spark_c}
\end{figure}

In Figure~\ref{fig:spark_c} we plot the runtime as a function of the sketch size $c$.
The plots shows the significant influence of $c$ on the running time.
In Figure~\ref{fig:spark_c}, the runtime of \textcolor{Brown}{Nystr\"om} grows superlinearly with $c$.
According to our analysis, the computational and communication costs of \textcolor{Brown}{Nystr\"om} are both superlinear in $c$.
Therefore, the user should keep in mind this trade-off between the clustering performance and the computational cost.

\subsection{Effect of Target Dimension $s$}

We executed our Spark implementation using $32$ computational nodes, fixing $k=10$ and $c=1600$ and varying the target dimension $s$. 
Table~\ref{tab:spark_s} reports the observed NMIs and elapsed times.
For fixed $k$ and $c$, a moderately large $s$ leads to better clustering performance.
However, as $s$ grows, $\tfrac{k}{s}$ decreases but $\tfrac{s}{c}$ increases, so as predicted in 
Remark~\ref{remark:twoerrors}, the clustering performance is nonmonotonic in $s$.
Indeed, as $s$ grows from $80$ to $160$, the NMI deteriorates.
In practice, for fixed $k$ and $c$,
one should set $s$ moderately large, but not over-large.

\begin{table}[h]\setlength{\tabcolsep}{0.3pt}  
	\caption{The NMI and elapsed time (seconds) for varying target dimensions $s$.
		Here ``T'' denotes the elapsed time.
		We use $32$ nodes and fix $c=1600$.}
	\label{tab:spark_s}
	\begin{center}
		\begin{scriptsize}
			\begin{tabular}{c c c c c c c}
				\hline
				& $s=10$ & $s=20$ & $s=40$ & $s=80$ & $s=160$ & $s=320$  \\
				\hline
				~~{NMI}~~
				& ~$0.3852 \pm 0.0003$~ & ~$0.4069 \pm 0.0001$~
				& ~$0.4130 \pm 0.0062$~ & ~$0.4233 \pm 0.0131$~ 
				& ~$0.4086 \pm 0.0125$~ & ~$0.4099 \pm 0.0098$~\\
				~{T(\textcolor{Brown}{Nystr\"om})}~
				& ~$2702.2 \pm 131.6$~ & ~$2634.5 \pm 94.4$~
				& ~$2815.3 \pm 403.6$~ & ~$2728.1 \pm 190.3$~ 
				& ~$2849.9 \pm 511.2$~ & ~$2757.1 \pm 271.9$~ \\
				~{T(\textcolor{OliveGreen}{DR})}~
				& ~$21.1 \pm 3.2$~ & ~$31.3 \pm 6.6$~
				& ~$57.5 \pm 9.2$~ & ~$97.9 \pm 20.6$~
				& ~$168.2 \pm 29.0$~ & ~$570.5 \pm 151.0$~  \\
				~{T(\textcolor{Blue}{$k$-Means})}~
				& ~$84.0 \pm 6.4$~ & ~$86.9 \pm 27.3$~
				& ~$90.6 \pm 9.0$~ & ~$88.0 \pm 23.9$~ 
				& ~$99.8 \pm 12.2$~ & ~$169.3 \pm 64.2$~ \\
				~{T(Total)}~
				& ~$2840.9 \pm 132.0$~ & ~$2785.4 \pm 83.2$~
				& ~$3003.4 \pm 413.7$~ & ~$2952.2 \pm 174.5$~ 
				& ~$3157.5 \pm 549.2$~ & ~$3537.5 \pm 256.3$~ \\
				\hline
			\end{tabular}
		\end{scriptsize}
	\end{center}
\end{table}

In Figure~\ref{fig:spark_s} we plot the runtime as a function of the target dimension $s$.
Note that $s$ does not affect the time cost of \textcolor{Brown}{the Nystr\"om method}.
As $s$ grows, the time costs of \textcolor{OliveGreen}{dimensionality reduction}
and \textcolor{Blue}{linear $k$-means clustering} both increase.
This implies that a moderate $s$ is good for computational purpose.
Previously, in Figure~\ref{fig:spark}, we plot the NMI as a function of $s$ while fixing $k$ and $c$.
Figure~\ref{fig:spark} shows that a proper $s$ leads to better NMI than an over-large or over-small $s$.
In sum, setting $s$ properly has both computational and accuracy benefits, which corroborates our theory.

\begin{figure}[h]
	\begin{center}
		\centering
		\includegraphics[width=0.6\textwidth]{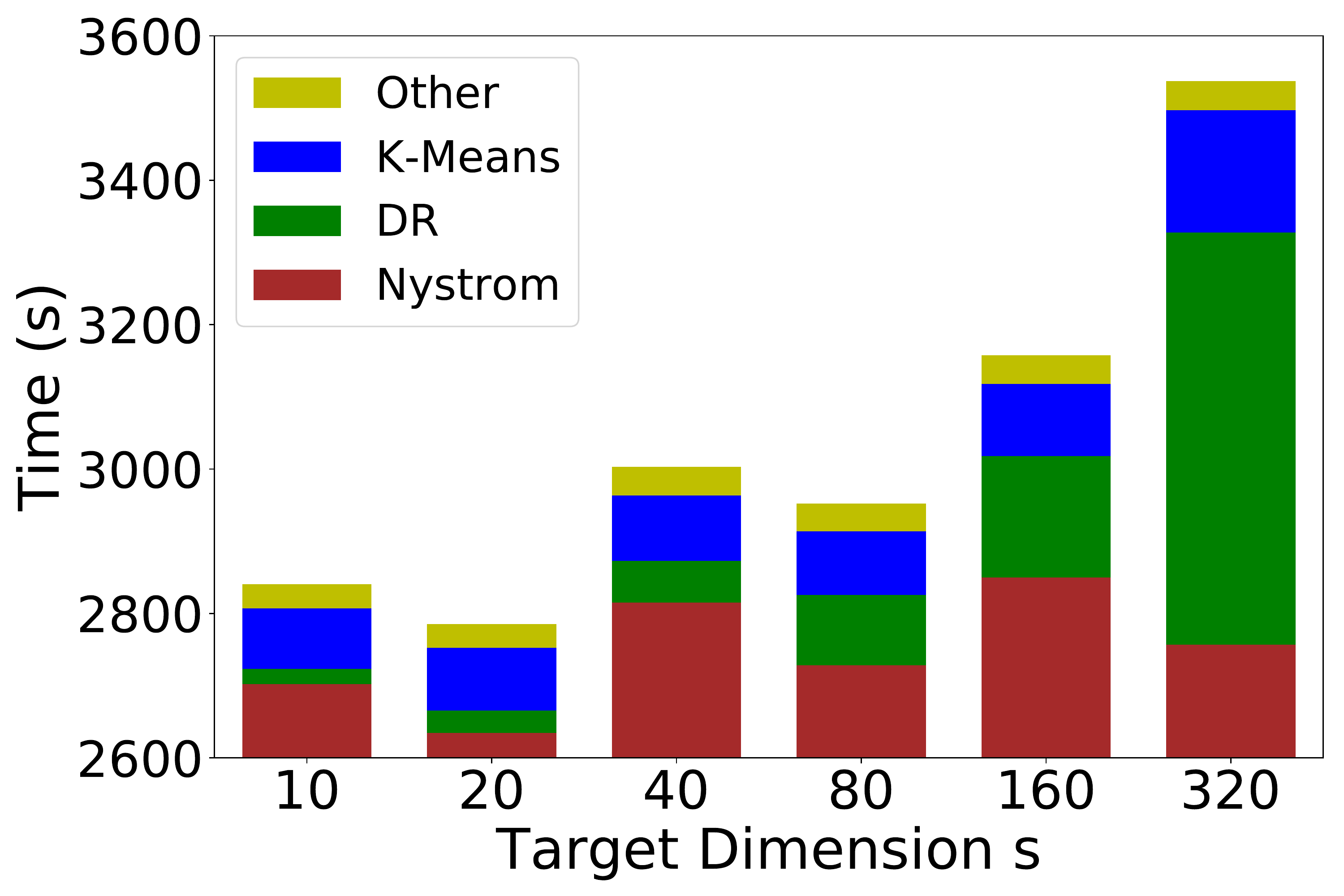}
	\end{center}
	\vspace{-5mm}
	\caption{The mean of the runtimes of Algorithm~\ref{alg:map_reduce}, in seconds, as a function of the target dimension $s$. We use $32$ nodes and fix $k=10$ and $c=1600$.}
	\label{fig:spark_s}
\end{figure}

\section{Conclusion}
\label{snx:conclusion}

We provided principled algorithms for computing approximate kernel $k$-means clusterings. In particular, we showed that 
the combination of linear $k$-means with rank-restricted Nystr\"om approximation is theoretically sound, practically useful, and
scalable to large data sets. This should be contrasted with approximate spectral
clustering using Nystr\"om approximation. Although the latter is a
widely-used approach to scalable non-linear clustering, it has
theoretical deficiencies and practical limitations.
Experiments demonstrated that approximate
kernel $k$-means clustering using the rank-restricted Nystr\"om approximation
consistently outperforms approximate spectral clustering.

Our analysis uses the concept of projection-cost preservation and builds upon the existing theory
of randomized linear algebra. 
Our main result is a $1+\epsilon$ approximation ratio guarantee for kernel
$k$-means clustering: when $s=\frac{k}{\epsilon} $ Nystr\"om features
are used, a $1+ \OM (\epsilon )$
approximation ratio is guaranteed with high probability.
As an intermediate theoretical result of independent interest, we introduced 
a novel rank-restricted Nystr\"om approximation and proved that it gives a $1+\epsilon$
relative-error low-rank approximation guarantee in the trace norm.

To complement our theory, we investigated the performance of the rank-restricted Nystr\"om approximation
when applied to kernel $k$-means clustering. We produced an Apache Spark
implementation of a distributed version of our approximate kernel $k$-means algorithm, and applied it to
cluster 8.1 million vectors; the results demonstrate the usefulness and simplicity of kernel $k$-means with rank-restricted Nystr\"om approximation for clustering
moderately large-scale data sets. Combined with recent work on 
user-friendly frameworks for distributed computing
systems~\citep{zaharia2010spark,zaharia2012rdd}, large-scale machine
learning~\citep{meng2016mllib}, and large-scale randomized linear
algebra~\citep{gittens2016multi}, our results suggest the use of kernel
$k$-means with rank-restricted Nystr\"om approximations for simple, computationally efficient, and theoretically principled clustering of 
large-scale data sets.

\acks{We thank the authors of~\cite{musco2016recursive} for bringing to our attention their results
  on approximate kernel $k$-means clustering through the use of ridge leverage score Nystr\"om approximations, the
  authors of~\cite{tropp2017fixed} for informing us of their contemporaneous guarantees on the approximation error of rank-restricted Nystr\"om approximations, and the anonymous reviewers for their helpful suggestions.
  We would like to acknowledge ARO, DARPA, NSF, and ONR for providing partial support of this work.
  }

\appendix
\section{Proof of Theorem~\ref{thm:nystrom}}
\label{sec:proof:nystrom}

In this section, we will provide a proof of Theorem~\ref{thm:nystrom}, our main quality-of-approximation result for rank-restricted Nystr\"om approximation.
We will start, in Section~\ref{sec:proof:nystrom:lemmas}, by establishing three technical lemmas; 
then, in Section~\ref{sec:proof:nystrom:structural}, we will establish an important structural result on the rank-restricted Nystr\"om approximation as well as key approximate matrix multiplication properties for low-rank matrix approximation; 
and finally, in Section~\ref{sec:proof:nystrom:theorem}, we will use these results to prove Theorem~\ref{thm:nystrom}.

\subsection{Technical Lemmas} \label{sec:proof:nystrom:lemmas}

Lemma~\ref{lem:proof:opt_fro} is a very well known result.
See \citep{boutsidis2011near,zhang2015svd}.
Here we offer a simplified proof.

\begin{lemma}\label{lem:proof:opt_fro}
	Let $\A \in \RB^{n\times m}$ be any matrix 
	and $\U \in \RB^{n\times c}$ have orthonormal columns.
	Let $s$ be any positive integer no greater than $c$.
	Then 
	\[
	(\U^T \A)_s \; = \; \argmin_{\rk (\Z) \leq s} \big\| \A - \U \Z \big\|_F^2 .
	\]
\end{lemma}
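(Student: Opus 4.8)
The plan is to reduce the constrained minimization to a plain Eckart--Young (Mirsky) truncation problem by exploiting that $\U$ has orthonormal columns. First I would complete $\U$ to a full orthogonal basis: since $\U \in \RB^{n\times c}$ has orthonormal columns we have $c \leq n$, so there is a matrix $\U_\perp \in \RB^{n\times(n-c)}$ with orthonormal columns such that $[\U, \U_\perp]$ is an $n\times n$ orthogonal matrix (if $c = n$, take $\U_\perp$ empty and the argument degenerates harmlessly). Multiplying by an orthogonal matrix preserves the Frobenius norm, so for any $\Z$ with $n$ columns,
\[
\big\| \A - \U \Z \big\|_F^2
\; = \; \big\| [\U, \U_\perp]^T (\A - \U \Z) \big\|_F^2
\; = \; \big\| \U^T \A - \Z \big\|_F^2 \; + \; \big\| \U_\perp^T \A \big\|_F^2 ,
\]
where the last equality uses $\U^T \U = \I_c$ and $\U_\perp^T \U = \0$, and the cross term vanishes because the two blocks occupy disjoint row sets.

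The second term $\|\U_\perp^T \A\|_F^2$ does not depend on $\Z$, so the original problem is equivalent to minimizing $\|\U^T \A - \Z\|_F^2$ over all $\Z$ with $\rk(\Z) \leq s$. Here is where I would invoke the Eckart--Young--Mirsky theorem (already quoted in the ``Truncated SVD'' paragraph of the paper): the optimal rank-$s$ approximation to the matrix $\U^T \A$ in Frobenius norm is its rank-$s$ truncated SVD, namely $(\U^T \A)_s$. Note this is a legitimate feasible point because $\rk\big((\U^T \A)_s\big) \leq s$. Substituting $\Z = (\U^T \A)_s$ back shows it attains the minimum, which proves the claim.

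I do not anticipate a genuine obstacle here; the proof is essentially a two-line reduction plus a citation. The only points requiring a modicum of care are (i) justifying the Pythagorean split above — i.e.\ that appending $\U_\perp$ and using orthogonality cleanly separates the $\Z$-dependent and $\Z$-independent parts — and (ii) noting that the truncation $(\U^T \A)_s$ is indeed rank-$\le s$ and hence feasible, so that Eckart--Young gives the minimizer over the feasible set rather than just a lower bound. If one wants the minimizer to be unique one would additionally need the $s$-th and $(s+1)$-st singular values of $\U^T \A$ to be distinct, but the statement as written only asserts that $(\U^T \A)_s$ \emph{is} an argmin, which the above establishes without any spectral-gap hypothesis.
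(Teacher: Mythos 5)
Your proof is correct and follows essentially the same route as the paper's: complete $\U$ to an orthogonal basis, use unitary invariance to split off the $\Z$-independent term $\|\U_\perp^T \A\|_F^2$, and then apply the Eckart--Young--Mirsky theorem to minimize $\|\U^T\A - \Z\|_F^2$ over rank-$\le s$ matrices. Your added remarks on feasibility and (non-)uniqueness are fine but not needed.
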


\begin{proof}
	Let $\U^\perp \in \RB^{n\times (n-c)}$ be the orthogonal complement of $\U$.
	It holds that
	\begin{eqnarray*}
		\Big\| \A - \U \Z  \Big\|_F^2
		& = & \Bigg\|
		\A
		\, - \,
		\left[
		\begin{array}{cc}
			\U & \U^\perp \\
		\end{array}
		\right]
		\left[
		\begin{array}{c}
			\Z \\
			\0 \\
		\end{array}
		\right]
		\Bigg\|_F^2 \nonumber \\
		& = & \Bigg\|
		\left[
		\begin{array}{c}
			\U^T \\
			(\U^\perp)^T \\
		\end{array}
		\right]
		\, \A
		\, - \,
		\left[
		\begin{array}{c}
			\Z \\
			\0 \\
		\end{array}
		\right]
		\Bigg\|_F^2 \nonumber \\
		& = & \Bigg\|
		\left[
		\begin{array}{cc}
			\U^T \A - \Z \\
			(\U^\perp)^T \A  \\
		\end{array}
		\right]
		\Bigg\|_F^2 \\
		& = & \big\| \U^T \A - \Z \big\|_F^2 + \big\| (\U^\perp)^T \A \big\|_F^2 ,
	\end{eqnarray*}
	where the second equality follows from the unitary invariance of the Frobenius norm.
	Then 
	\[
	\argmin_{\rk (\Z) \leq s} \big\| \A - \U \Z \big\|_F^2 
	\; = \; \argmin_{\rk (\Z) \leq s} \big\| \U^T \A - \Z \big\|_F^2
	\; = \; (\U^T \A)_s ,
	\]
	by which the lemma follows.
\end{proof}

Lemma~\ref{lem:proof:rank_restrict_fro} is a new result established in this work.
The lemma plays an important role in analyzing the rank-restricted Nystr\"om method.

\begin{lemma} \label{lem:proof:rank_restrict_fro}
	Let $\A \in \RB^{n\times m}$ be any matrix and $\U \in \RB^{n\times c}$ has orthonormal columns.
	Let $s \leq c$ be arbitrary integer.
	Let $\Q \in \RB^{n\times s}$ be the orthonormal bases of the rank $s$ matrix 
	$\U (\U^T \A)_s \in \RB^{n\times m}$.
	Then $\Q \Q^T \A = \U (\U^T \A)_s$.
\end{lemma}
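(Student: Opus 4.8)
The plan is to reduce the claim to the compact SVD of the $c \times m$ matrix $\B := \U^T\A$. Write $\B = \hat\U\hat\Si\hat\V^T$ for a compact SVD, so that $(\U^T\A)_s = \B_s = \hat\U_s\hat\Si_s\hat\V_s^T$ with $\hat\U_s$ and $\hat\V_s$ collecting the leading $s$ singular vectors (I will first argue under the assumption $\rk(\B) \geq s$, so that $\hat\Si_s$ is invertible, and treat the rank-deficient case at the end). The first step is the observation that $\U\hat\U_s$ has orthonormal columns: $(\U\hat\U_s)^T(\U\hat\U_s) = \hat\U_s^T(\U^T\U)\hat\U_s = \hat\U_s^T\hat\U_s = \I_s$, using that $\U$ has orthonormal columns. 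Hence $\U(\U^T\A)_s = (\U\hat\U_s)\hat\Si_s\hat\V_s^T$ is itself a compact SVD of $\U(\U^T\A)_s$, so $\range\big(\U(\U^T\A)_s\big) = \range(\U\hat\U_s)$; we may therefore take $\Q = \U\hat\U_s$, and since any admissible $\Q$ differs from this by right-multiplication by an $s\times s$ orthogonal matrix, the projector $\Q\Q^T = \U\hat\U_s\hat\U_s^T\U^T$ is independent of the choice.

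The second step is a direct computation:
\[
\Q\Q^T\A \;=\; \U\hat\U_s\hat\U_s^T\U^T\A \;=\; \U\big(\hat\U_s\hat\U_s^T\B\big),
\]
so it remains only to identify $\hat\U_s\hat\U_s^T\B$ with $\B_s$. This is the lone place where anything is computed, and it is routine: $\hat\U_s\hat\U_s^T$ is the orthogonal projection onto the span of the leading $s$ left singular vectors of $\B$, so writing $\B = \sum_i \sigma_i \hat\u_i\hat\v_i^T$ and using orthonormality of the $\hat\u_i$ gives $\hat\U_s\hat\U_s^T\B = \sum_{i\leq s}\sigma_i\hat\u_i\hat\v_i^T = \B_s = (\U^T\A)_s$. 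Substituting back yields $\Q\Q^T\A = \U(\U^T\A)_s$, as claimed.

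I do not anticipate a real obstacle; the only subtlety is the rank-deficient case $\rk(\U^T\A) < s$ (equivalently, some of the leading $s$ singular values of $\B$ vanish), in which $(\U^T\A)_s$ has rank $r < s$ and $\Q$ genuinely has only $r$ columns. Every identity above goes through verbatim with $\hat\U_r$ in place of $\hat\U_s$, since the omitted singular terms contribute zero to both sides. If one wishes to sidestep the choice-of-basis remark entirely, one can instead verify directly that $\U\hat\U_s\hat\U_s^T\U^T$ is symmetric, idempotent (using $\U^T\U = \I_c$ and $\hat\U_s^T\hat\U_s = \I_s$), and has column space equal to $\range\big(\U(\U^T\A)_s\big)$, hence equals $\Q\Q^T$ by uniqueness of the orthogonal projector onto a subspace.
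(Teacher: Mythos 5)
Your proof is correct, and it takes a genuinely different route from the paper's. The paper argues variationally: it characterizes $(\U^T \A)_s$ via Lemma~\ref{lem:proof:opt_fro} as the minimizer of $\| \A - \U \Z \|_F^2$ over rank-$s$ matrices, uses $\range(\Q) \subseteq \range(\U)$ to sandwich the optimal value, concludes that $\Z^\star = \Q^T \U (\U^T\A)_s$ attains the \emph{unconstrained} minimum of $\| \Q\Z - \A \|_F^2$, and then invokes uniqueness of that least-squares solution to identify $\Z^\star = \Q^T\A$, from which the claim follows since $\Q\Q^T$ fixes $\U(\U^T\A)_s$. You instead make $\Q$ explicit: writing $\U^T\A = \hat\U \hat\Si \hat\V^T$, you observe that $(\U\hat\U_s)\hat\Si_s\hat\V_s^T$ is a compact SVD of $\U(\U^T\A)_s$, so one may take $\Q = \U\hat\U_s$ (with $\Q\Q^T$ independent of the basis choice), and the identity reduces to the one-line computation $\hat\U_s\hat\U_s^T(\U^T\A) = (\U^T\A)_s$. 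Your route is shorter and more elementary, yields an explicit formula for the projector, and handles the degenerate case $\rk(\U^T\A) < s$ explicitly (the paper's statement tacitly assumes the matrix has rank $s$); it is close in spirit to the SVD manipulation the paper performs later inside Lemma~\ref{lem:proof:nystrom_decompose}. What the paper's variational argument buys is that it never writes down the SVD of $\U^T\A$ and reuses machinery (Lemma~\ref{lem:proof:opt_fro} and the uniqueness of least-squares solutions) already needed elsewhere in the appendix. One small caveat worth keeping in your write-up, which you already implicitly respect: when the $s$-th and $(s+1)$-th singular values of $\U^T\A$ tie, $(\U^T\A)_s$ is not unique, so you must use the same truncated SVD both to define $(\U^T\A)_s$ and to build $\Q$ --- which your construction does.
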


\begin{proof}
	Let $\Z^\star = \Q^T \U (\U^T \A)_s \in \RB^{s\times m}$.
	By the definition of $\Q$, we have 
	\begin{eqnarray}\label{eq:proof:rank_restrict_fro:0}
	\Q \Q^T \U (\U^T \A)_s 
	\; = \; \U (\U^T \A)_s .
	\end{eqnarray}
	It holds that
	\begin{align*}
	\big\| \U (\U^T \A )_s - \A \big\|_F^2
	\; = \; \min_{\rk (\X ) \leq s} \big\| \U \X - \A \big\|_F^2
	\; \leq \; \min_{\rk (\Z ) \leq s} \big\| \Q \Z - \A \big\|_F^2
	\; \leq \; \big\| \Q \Z^\star - \A \big\|_F^2 ,
	\end{align*}
	where the equality follows from Lemma~\ref{lem:proof:opt_fro};
	the former inequality follows from that $\range (\Q ) \subset \range (\U )$.
	Because $\U (\U^T \A )_s = \Q \Z^\star$ by \eqref{eq:proof:rank_restrict_fro:0}, 
	the lefthand and righthand sides are equal, and thereby
	\begin{align*}
	\min_{\rk (\Z ) \leq s} \big\| \Q \Z - \A \big\|_F^2
	\; = \; \big\| \Q \Z^\star - \A \big\|_F^2 .
	\end{align*}
	Since $\Z$ is $s\times m$, it follows that
	\begin{align}\label{eq:proof:rank_restrict_fro:1}
	\min_{\Z} \big\| \Q \Z - \A \big\|_F^2
	\; = \; \min_{\rk (\Z ) \leq s} \big\| \Q \Z - \A \big\|_F^2
	\; = \; \big\| \Q \Z^\star - \A \big\|_F^2 .
	\end{align}
	Let $\Q^\perp \in \RB^{n\times (n-s)}$ be the orthogonal complement of $\Q$.
	It holds that
	\begin{eqnarray*}
		\Big\| \A - \Q \Z  \Big\|_F^2
		& = & \Bigg\|
		\A
		\, - \,
		\left[
		\begin{array}{cc}
			\Q & \Q^\perp \\
		\end{array}
		\right]
		\left[
		\begin{array}{c}
			\Z \\
			\0 \\
		\end{array}
		\right]
		\Bigg\|_F^2 \nonumber \\
		& = & \Bigg\|
		\left[
		\begin{array}{c}
			\Q^T \\
			(\Q^\perp)^T \\
		\end{array}
		\right]
		\, \A
		\, - \,
		\left[
		\begin{array}{c}
			\Z \\
			\0 \\
		\end{array}
		\right]
		\Bigg\|_F^2 \nonumber \\
		& = & \Bigg\|
		\left[
		\begin{array}{cc}
			\Q^T \A - \Z \\
			(\Q^\perp)^T \A  \\
		\end{array}
		\right]
		\Bigg\|_F^2 \\
		& = & \big\| \Q^T \A - \Z \big\|_F^2 + \big\| (\Q^\perp)^T \A \big\|_F^2 ,
	\end{eqnarray*}
	where the second equality follows from the unitary invariance of the Frobenius norm.
	Obviously, $\Z = \Q^T \A$ is the unique minimizer of $\min_\Z \| \Q \Z - \A \|_F^2$;
	otherwise $\Q^T \A - \Z$ in the righthand side is non-zero, making the objective function increase.
	
	On the one hand, Eqn.\ \eqref{eq:proof:rank_restrict_fro:1} shows that 
	$\Z^\star = \Q^T \U (\U^T \A)_s \in \RB^{s\times m}$ is one minimizer
	of $\min_{\Z} \| \Q \Z - \A \|_F^2$. 
	On the other hand, we have that $\Z=\Q^T \A$ is the unique minimizer of $\min_\Z \| \Q \Z - \A \|_F^2$.	
	By the uniqueness, we have 
	\[
	\Z^\star
	\; = \; \Q^T \U (\U^T \A)_s  
	\; = \; \Q^T \A .
	\]
	It follows that
	\[
	\Q \Q^T \U (\U^T \A)_s  
	\; = \; \Q \Q^T \A .
	\]
	The lemma follows from  \eqref{eq:proof:rank_restrict_fro:0} and the above equality.
\end{proof}

%
%
%

Lemma~\ref{lem:proof:power} is a new result established by this work.
Let $\A$ be any symmetric matrix.
Lemma~\ref{lem:proof:power} analyzes the power scheme:
using $\A^t \PP$, instead of $\A \PP$, as a sketch of $\A$.
The lemma shows that the power scheme leads to an improvement of 
$\big(\frac{\sigma_{s+1}^2 }{\sigma_{s}^2 } \big)^{t-1}$,
where $\sigma_i$ is the $i$-th biggest singular value of $\A$.
Lemma~\ref{lem:proof:power} with $t=1$ is identical to \cite[Lemma 9]{boutsidis2011near}.
The power scheme has been studied by \citet{gittens2013revisiting,halko2011ramdom,woodruff2014sketching};
their results do not allow the rank restriction $\rk (\X ) \leq s$.
To prove Theorem~\ref{thm:nystrom}, we only need $t=1$;
we will use Lemma~\ref{lem:proof:power} to extend Theorem~\ref{thm:nystrom} to the power method.

\begin{lemma} \label{lem:proof:power}
	Let $\A \in \RB^{n\times n}$ be any SPSD matrix
	and $\A_s = \V_s \Si_s \V_s^T$ be the truncated SVD.
	Let $\PP \in \RB^{n\times c}$ satisfy that $\PP^T \V_s \in \RB^{c\times s}$ has full column rank.
	Let $t \geq 1$ be any integer and $\C = \A^t \PP$. Then for $\xi = 2$ or $F$,
	\begin{eqnarray*}
		\min_{\rk(\X ) \leq s } \big\| \A - \C \X \big\|_\xi^2
		& \leq & \big\| \A - \A_s \big\|_\xi^2 
		+ \big(\tfrac{\sigma_{s+1}^2 }{\sigma_{s}^2 } \big)^{t-1}  
		\big\|  (\A - \A_s) \PP (\V_s^T \PP)^\dag  \big\|_\xi^2 ,
	\end{eqnarray*}
	where $\sigma_i$ is the $i$-th largest singular value of $\A$.
\end{lemma}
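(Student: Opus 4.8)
The plan is to upper-bound the minimum by exhibiting a single explicit rank-$s$ competitor $\X$ and controlling $\|\A-\C\X\|_\xi$ directly; passing to the minimum can only help. Write the eigendecomposition $\A = \V_s\Si_s\V_s^T + \V_{s,\perp}\Si_{s,\perp}\V_{s,\perp}^T$, where $\V_{s,\perp}\in\RB^{n\times(n-s)}$ collects the trailing eigenvectors and $\Si_{s,\perp}=\diag(\sigma_{s+1},\dots,\sigma_n)$, and abbreviate $\Omega_1=\V_s^T\PP\in\RB^{s\times c}$ and $\Omega_2=\V_{s,\perp}^T\PP$. The hypothesis that $\PP^T\V_s$ has full column rank says exactly that $\Omega_1$ has full row rank, so $\Omega_1\Omega_1^\dag=\I_s$. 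Since $\C=\A^t\PP=\V_s\Si_s^t\Omega_1+\V_{s,\perp}\Si_{s,\perp}^t\Omega_2$, I would set
\[
\X \;=\; \Omega_1^\dag\,\Si_s^{1-t}\,\V_s^T \;\in\; \RB^{c\times n},
\]
which has rank at most $s$; a one-line computation using $\Omega_1\Omega_1^\dag=\I_s$ and $\Si_s^t\Si_s^{1-t}=\Si_s$ gives
\[
\C\X \;=\; \A_s \;+\; \V_{s,\perp}\Si_{s,\perp}^t\Omega_2\Omega_1^\dag\Si_s^{1-t}\V_s^T ,
\]
so the $\V_s$-part of $\C\X$ collapses onto $\A_s$ and the residual lives in the $\V_{s,\perp}$ column space on the left. (Here I tacitly use $\sigma_s>0$, the only regime of interest for a rank-$s$ approximant; the $t=1$ statement, which is the already-quoted \cite[Lemma~9]{boutsidis2011near}, needs no such assumption.)

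Next I would exploit an orthogonal-subspace split of the residual. Using $\A-\A_s=\V_{s,\perp}\Si_{s,\perp}\V_{s,\perp}^T$,
\[
\A-\C\X \;=\; \V_{s,\perp}\Big(\Si_{s,\perp}\V_{s,\perp}^T \;-\; \Si_{s,\perp}^t\Omega_2\Omega_1^\dag\Si_s^{1-t}\V_s^T\Big),
\]
and inside the parentheses the first term (call it $M_1$) has rows in $\range(\V_{s,\perp})$ while the second ($M_2$) has rows in $\range(\V_s)$, so the two have orthogonal row spaces and their cross products vanish. For $\xi=F$ this is the Pythagorean identity; for $\xi=2$ one writes $(M_1-M_2)(M_1-M_2)^T=M_1M_1^T+M_2M_2^T$ and applies the triangle inequality for $\|\cdot\|_2$. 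Peeling off the norm-preserving orthonormal factors $\V_{s,\perp}$ and $\V_s^T$ then gives
\[
\|\A-\C\X\|_\xi^2 \;\le\; \|\Si_{s,\perp}\|_\xi^2 + \big\|\Si_{s,\perp}^t\Omega_2\Omega_1^\dag\Si_s^{1-t}\big\|_\xi^2 \;=\; \|\A-\A_s\|_\xi^2 + \big\|\Si_{s,\perp}^t\Omega_2\Omega_1^\dag\Si_s^{1-t}\big\|_\xi^2 .
\]

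Finally I would extract the power-scheme gain from the trailing term. Factoring $\Si_{s,\perp}^t=\Si_{s,\perp}^{t-1}\Si_{s,\perp}$ and $\Si_s^{1-t}=\Si_s^{-(t-1)}$ and invoking submultiplicativity $\|BCD\|_\xi\le\|B\|_2\,\|C\|_\xi\,\|D\|_2$ (valid for $\xi=2,F$) together with $\|\Si_{s,\perp}^{t-1}\|_2=\sigma_{s+1}^{t-1}$ and $\|\Si_s^{-(t-1)}\|_2=\sigma_s^{-(t-1)}$, the trailing term is at most $(\sigma_{s+1}/\sigma_s)^{t-1}\,\|\Si_{s,\perp}\Omega_2\Omega_1^\dag\|_\xi$; reinserting $\V_{s,\perp}$ identifies $\V_{s,\perp}\Si_{s,\perp}\Omega_2\Omega_1^\dag=(\A-\A_s)\PP(\V_s^T\PP)^\dag$, so this norm equals $\|(\A-\A_s)\PP(\V_s^T\PP)^\dag\|_\xi$. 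Squaring and combining the three displays yields the stated bound. The main obstacle is really just guessing the right $\X$: one needs the exponent $\Si_s^{1-t}$ so that $\Si_s^t\Si_s^{1-t}=\Si_s$ while the residual simultaneously picks up $\Si_{s,\perp}^t$ on the left and $\Si_s^{-(t-1)}$ on the right, which is exactly what produces the $\sigma_{s+1}^{t-1}/\sigma_s^{t-1}$ improvement; after that, the only step needing a moment's care is the spectral-norm case of the orthogonal split.
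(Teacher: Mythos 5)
Your proposal is correct and follows essentially the same route as the paper: your competitor $\C\X=\A^t\PP(\V_s^T\PP)^\dag\Si_s^{1-t}\V_s^T$ is exactly the matrix $\tilde{\A}_s$ constructed in the paper's proof, and your orthogonal row-space (Pythagorean) split plus the submultiplicative bound $\|B C D\|_\xi\le\|B\|_2\|C\|_\xi\|D\|_2$ reproduce the paper's two key steps, merely written out explicitly in the eigenbasis. No gaps; the remark about $\sigma_s>0$ applies equally to the paper's argument.
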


\begin{proof}
	We construct the rank $s$ matrix $\tilde{\A}_s = \C (\V_s^T \PP)^\dag \Si_s^{1-t} \V_{s}^T$
	and use it to facilitate our proof.
	Since $\tilde{\A}_s$ has rank $s$, and its column space is in $\range (\C)$, it holds that
	\begin{align*}
	& \min_{\rk(\X ) \leq s } \big\| \A - \C \X \big\|_\xi^2
	\; \leq \; \big\| \A - \tilde{\A}_s \big\|_\xi^2 \\
	& = \; \big\| (\A - \A_s ) + (\A_s - \tilde{\A}_s ) \big\|_\xi^2 
	\; \leq \; \big\| \A - \A_s \big\|_\xi^2
	+ \big\| \A_s - \tilde{\A}_s \big\|_\xi^2 .
	\end{align*}
	The latter inequality follows from 
	that the row space of $\A_s - \tilde{\A}_s$ is in $\range (\V_s)$,
	that $(\A - \A_s) \V_s = \0$,
	and the matrix Pythagorean theorem.
	By the assumption $\rk (\V_s^T \PP) = s$, it holds that $(\V_s^T  \PP) (\V_s^T  \PP)^\dag = \I_s$.
	We have
	\begin{align*}
	&\big\| \A_s - \tilde{\A}_s \big\|_\xi^2
	\; = \; \big\| \A_s - (\A_s^t + \A^t - \A_s^t ) \PP (\V_s^T \PP)^\dag \Si_s^{1-t} \V_{s}^T \big\|_\xi^2 \\
	& = \; \big\| \A_s - \V_s \Si_s^t \underbrace{(\V_s^T  \PP)}_{s\times c} 
	\underbrace{(\V_s^T \PP)^\dag}_{c\times s}  \Si_s^{1-t} \V_s^T
	- (\A^t - \A_s^t)  \PP (\V_s^T \PP)^\dag \Si_s^{1-t} \V_{s}^T \big\|_\xi^2 \\
	& = \; \big\| \A_s - \V_s \Si_s\V_s^T 
	- (\A^t - \A_s^t) \PP (\V_s^T \PP)^\dag \Si_s^{1-t} \V_s^T \big\|_\xi^2 \\
	& = \; \big\|  (\A^t - \A_s^t) \PP (\V_s^T \PP)^\dag \Si_s^{1-t} \V_s^T \big\|_\xi^2\\
	& \leq \; \big\|  (\A - \A_s)^{t-1} \big\|_2^2 \big\| \Si_s^{1-t} \big\|_2^2
	\big\|  (\A - \A_s) \PP (\V_s^T \PP)^\dag  \big\|_\xi^2 \\
	& = \; \big(\tfrac{\sigma_{s+1}^2 }{\sigma_{s}^2 } \big)^{t-1}  
	\big\|  (\A - \A_s) \PP (\V_s^T \PP)^\dag  \big\|_\xi^2,
	\end{align*}
	by which the lemma follows.
\end{proof}


\subsection{Structural Result and Approximate Matrix Multiplication Properties}
\label{sec:proof:nystrom:structural}

Lemma~\ref{lem:proof:nystrom_decompose} is a structural result of independent interest that is important for establishing Theorem~\ref{thm:nystrom}.
By \emph{structural result}, we mean that it is a linear algebraic result that holds for any matrix $\PP$, and in particular for any matrix $\PP$ that is a randomized sketching matrix.
In particular, given this structural result, randomness in an algorithm enters only via $\PP$.
The importance of establishing such structural results within randomized linear algebra has been highlighted previously~\citep{drineas2011faster,mahoney2011ramdomized,gittens2013revisiting,MD16_chapter,wang2017sketched}.

\begin{lemma} \label{lem:proof:nystrom_decompose}
	Let $\K \in \RB^{n\times n}$ be any SPSD matrix,
	$\PP \in \RB^{n\times c}$ be any matrix,
	and $s \leq c$ be any positive integer.
	Let $\D = \K^{1/2} \PP  \in \RB^{n\times c}$, $\rho = \rk (\D ) \leq c$,
	and $\U\in \RB^{n\times \rho}$ be the left singular vectors of $\D $.
	Let $\Q \in \RB^{n\times s}$ be any orthonormal bases of $\U (\U^T \K^{1/2} )_s$.
	It holds that
	\begin{eqnarray*} 
		\big( \C  \W^\dag \C^T \big)_s
		& = & \K^{1/2}\Q \Q^T  \K^{1/2} , \\
		\big\| \K - \big( \C  \W^\dag \C^T \big)_s \big\|_*
		& = & \min_{\rk (\Z) \leq s} \big\|  \K^{1/2} - (\K^{1/2} \PP) \Z \big\|_F^2 .
	\end{eqnarray*}
\end{lemma}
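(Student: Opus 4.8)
The plan is to reduce everything to the single matrix $\D = \K^{1/2}\PP$ and then invoke Lemmas~\ref{lem:proof:opt_fro} and~\ref{lem:proof:rank_restrict_fro}. First I would record the two algebraic identities $\C = \K\PP = \K^{1/2}\D$ and $\W = \PP^T\K\PP = \D^T\D$, which follow from $\K^{1/2}\K^{1/2} = \K$ and the symmetry of $\K^{1/2}$. Since $\D(\D^T\D)^\dag\D^T$ is the orthogonal projector onto $\range(\D) = \range(\U)$, it equals $\U\U^T$, so
\[
\C\W^\dag\C^T \;=\; \K^{1/2}\,\D(\D^T\D)^\dag\D^T\,\K^{1/2} \;=\; \K^{1/2}\U\U^T\K^{1/2} \;=\; (\U^T\K^{1/2})^T(\U^T\K^{1/2}).
\]

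For the first displayed equation of the lemma, I would use the elementary identity that for any matrix $\M$ the optimal rank-$s$ truncation of its Gram matrix satisfies $(\M^T\M)_s = (\M_s)^T\M_s$ --- the right singular vectors of $\M$ diagonalize $\M^T\M$, with the squared singular values as eigenvalues. Applying this with $\M = \U^T\K^{1/2}$ gives $(\C\W^\dag\C^T)_s = \big((\U^T\K^{1/2})_s\big)^T(\U^T\K^{1/2})_s$. Because $\U^T\U = \I_\rho$, this equals $\big(\U(\U^T\K^{1/2})_s\big)^T\big(\U(\U^T\K^{1/2})_s\big)$; and Lemma~\ref{lem:proof:rank_restrict_fro}, applied to $\A = \K^{1/2}$ with the orthonormal matrix $\U$, yields $\U(\U^T\K^{1/2})_s = \Q\Q^T\K^{1/2}$. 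Hence $(\C\W^\dag\C^T)_s = (\Q\Q^T\K^{1/2})^T(\Q\Q^T\K^{1/2}) = \K^{1/2}\Q\Q^T\K^{1/2}$, using $\Q^T\Q = \I_s$.

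For the second equation, note that $\K - (\C\W^\dag\C^T)_s = \K^{1/2}(\I_n - \Q\Q^T)\K^{1/2}$ is SPSD (since $\I_n - \Q\Q^T$ is a projector), so by the fact that $\tr(\A) = \|\A\|_*$ for SPSD $\A$, and because $\I_n - \Q\Q^T$ is idempotent,
\[
\big\|\K - (\C\W^\dag\C^T)_s\big\|_* \;=\; \tr\!\big(\K^{1/2}(\I_n-\Q\Q^T)\K^{1/2}\big) \;=\; \big\|(\I_n-\Q\Q^T)\K^{1/2}\big\|_F^2 \;=\; \big\|\K^{1/2} - \Q\Q^T\K^{1/2}\big\|_F^2.
\]
It then remains to identify the right-hand side of the lemma with this quantity. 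Since $\range(\D) = \range(\U)$ (write $\U = \D\D^\dag\U$ and $\D = \U\U^T\D$), the matrix families $\{\D\Z : \rk(\Z) \le s\}$ and $\{\U\Y : \rk(\Y) \le s\}$ coincide, so $\min_{\rk(\Z)\le s}\|\K^{1/2} - \D\Z\|_F^2 = \min_{\rk(\Y)\le s}\|\K^{1/2} - \U\Y\|_F^2$; Lemma~\ref{lem:proof:opt_fro} evaluates this as $\|\K^{1/2} - \U(\U^T\K^{1/2})_s\|_F^2$, which equals $\|\K^{1/2} - \Q\Q^T\K^{1/2}\|_F^2$ by Lemma~\ref{lem:proof:rank_restrict_fro}. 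Both sides of the claimed identity are therefore equal to $\|\K^{1/2} - \Q\Q^T\K^{1/2}\|_F^2$.

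The proof is thus a short chain of the two preceding lemmas together with the Gram-matrix truncation identity; the one place to be careful is the bookkeeping around the rank-$s$ truncation when $\rho = \rk(\D) < s$, so that $\U(\U^T\K^{1/2})_s$ has rank below $s$ and $\Q$ cannot literally have $s$ orthonormal columns. In that degenerate regime $(\C\W^\dag\C^T)_s = \C\W^\dag\C^T = \K^{1/2}\U\U^T\K^{1/2}$ outright and both claims follow immediately by taking $\Q$ to carry an orthonormal basis of $\range(\U)$ (padded arbitrarily), so I would dispatch it first. I do not anticipate any real obstacle beyond this; the genuinely new ingredient is Lemma~\ref{lem:proof:rank_restrict_fro}, which is exactly what lets us replace the non-symmetric object $\U(\U^T\K^{1/2})_s$ by $\Q\Q^T\K^{1/2}$ and thereby symmetrize the Nystr\"om sketch.
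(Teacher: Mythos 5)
Your proposal is correct and follows essentially the same route as the paper's proof: reduce everything to $\D=\K^{1/2}\PP$ (so $\C=\K^{1/2}\D$, $\W=\D^T\D$), use Lemma~\ref{lem:proof:rank_restrict_fro} to replace $\U(\U^T\K^{1/2})_s$ by $\Q\Q^T\K^{1/2}$, compute the trace norm via idempotence of $\I_n-\Q\Q^T$, and finish with Lemma~\ref{lem:proof:opt_fro}. The only (harmless) variation is that you apply the Gram-truncation identity directly to $\U^T\K^{1/2}$ after observing $\D\W^\dag\D^T=\U\U^T$, which lets you skip the paper's explicit computation of $\C(\W^\dag)^{1/2}=\K^{1/2}\U\V^T$ and its auxiliary fact $(\A\V^T)_s=\A_s\V^T$.
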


\begin{proof}
	Let $\D = \K^{1/2} \PP = \U \Si \V^T \in \RB^{n\times c}$ be the SVD of $\D $.
	It holds that 
	$\C = \K \PP = \K^{1/2} \D$ and $\W = \PP^T \K \PP = \D^T \D$.
	It follows the SVD of $\D$ that
	\begin{eqnarray*}\label{eq:proof:nystrom:0}
		\C (\W^\dag)^{1/2}
		\; = \; \K^{1/2} \D \big((\D^T \D)^\dag \big)^{1/2}
		\; = \; \K^{1/2} (\U \Si \V^T) \big(\V \Si^{-2} \V^T \big)^{1/2}
		\; = \; \K^{1/2} \U \V^T .
	\end{eqnarray*}
	
	We need the following lemma to prove Lemma~\ref{lem:proof:nystrom_decompose}.
	The following lemma is not hard to prove.
	
	\begin{lemma}
		Let the integers $n, c, \rho, s$ satisfy $n \geq c \geq \rho \geq s$.
		Let $\A \in \RB^{n\times \rho}$ be any matrix with rank at least $s$ and
		$\V \in \RB^{c\times \rho}$ has orthonormal columns.
		Then $(\A \V^T)_s = \A_s \V^T$.
	\end{lemma}
	

	If $\rk (\K^{1/2} \U) < s$, then obviously $\K^{1/2} \U \V^T = (\K^{1/2} \U \V^T)_s = (\K^{1/2} \U)_s \V^T$.
	We thus assume $\rk (\K^{1/2} \U \V^T) \geq s$ and can apply the above lemma to show $(\K^{1/2} \U \V^T )_s  = (\K^{1/2} \U )_s  \V^T$.
	In either case, we have
	\begin{eqnarray*}
		(\K^{1/2} \U   \V^T)_s \; = \; (\K^{1/2} \U )_s  \V^T .
	\end{eqnarray*}
	It follows from \eqref{eq:proof:nystrom:0} that 
	$\big( \C (\W^\dag)^{1/2} \big)_s = (\K^{1/2} \U )_s  \V^T$.
	Thus
	\begin{eqnarray*}
		\big( \C  \W^\dag \C^T \big)_s
		\; = \; (\K^{1/2} \U )_s (\U^T \K^{1/2} )_s 
		\; = \; (\K^{1/2} \U )_s \U^T \U (\U^T \K^{1/2} )_s 
	\end{eqnarray*}
	Let $\Q \in \RB^{n\times s}$ be the orthonormal bases of the rank $s$ matrix $\U (\U^T \K^{1/2} )_s$.
	Lemma~\ref{lem:proof:rank_restrict_fro} ensures that
	\[
	\Q \Q^T \K^{1/2}
	\; = \; \U (\U^T \K^{1/2} )_s.
	\]
	It follows that 
	\begin{eqnarray}\label{eq:proof:nystrom:1}
	\big( \C  \W^\dag \C^T \big)_s
	\; = \; \K^{1/2} \Q \Q^T \K^{1/2} ,
	\end{eqnarray}
	by which the former conclusion of the lemma follows.
	
	It is well known that $\|\Y \|_* = \tr (\Y)$ for any SPSD matrix $\Y$
	and that $\tr (\A^T \A ) = \|\A\|_F^2$ for any matrix $\A$.
	It follows from \eqref{eq:proof:nystrom:1} that
	\begin{align*}
	& \big\| \K - \big( \C  \W^\dag \C^T \big)_s \big\|_*
	\; = \; \big\| \K^{1/2} \big( \I_n - \Q \Q^T \big) \K^{1/2} \big\|_* \\
	& = \;  \tr \Big(  \K^{1/2} \big( \I_n - \Q \Q^T \big) \big( \I_n - \Q \Q^T \big) \K^{1/2} \Big) \\
	& = \; \big\| \big( \I_n - \Q \Q^T \big) \K^{1/2} \big\|_F^2
	\; = \; \big\|  \K^{1/2} - \U (\U^T \K^{1/2} )_s \big\|_F^2 ;
	\end{align*}
	here the second equality holds because $\K^{1/2} ( \I_n - \Q \Q^T ) \K^{1/2}$ is SPSD
	and $(\I_n - \Q \Q^T)$ is orthogonal projection matrix;
	the last equality follows from Lemma~\ref{lem:proof:rank_restrict_fro}.
	It follows from Lemma~\ref{lem:proof:opt_fro} that
	\begin{align*}
	& \big\| \K - \big( \C  \W^\dag \C^T \big)_s \big\|_*
	\; = \; \big\|  \K^{1/2} - \U (\U^T \K^{1/2} )_s \big\|_F^2 \\
	& = \; \min_{\rk (\X) \leq s} \big\|  \K^{1/2} - \U \X \big\|_F^2
	\; = \; \min_{\rk (\Z) \leq s} \big\|  \K^{1/2} - \D \Z \big\|_F^2 ,
	\end{align*}
	by which the latter conclusion of the lemma follows.
\end{proof}

Lemma~\ref{lem:property} formally defines the key \emph{approximate matrix multiplication properties} (subspace embedding property and approximate orthogonality property) which---when sketching dimensions are chosen appropriately (see Table~\ref{tab:property})---are shared by the uniform sampling, leverage score sampling, Gaussian projection, SRHT, and CountSketch sketching methods.
Establishing approximate matrix multiplication bounds is the key step in proving many approximate regression and low-rank approximation results~\citep{mahoney2011ramdomized}, and this lemma will be used crucually in the proof of Theorem~\ref{thm:nystrom}.

\begin{lemma} [Key Approximate Matrix Multiplication Properties for a Sketch] \label{lem:property}
	Let $\eta , \epsilon , \delta_1, \delta_2 \in (0, 1)$ be fixed parameters.
	Fix $\Y \in \RB^{n \times d}$ and let $\V \in \RB^{n\times s}$ have orthonormal columns.
	Let $\PP \in \RB^{n\times c}$ be one of the sketching methods listed in Table~\ref{tab:property}.
	When $c$ is larger than the quantity in the middle column of the corresponding row of Table~\ref{tab:property},
	the subspace embedding property
	\begin{eqnarray*}
		\big\| \V^T \PP \PP^T \V - \I_s \big\|_2
		\; \leq \; \eta 
	\end{eqnarray*}
	holds with probability at least $1-\delta_1$.
	When $c$ is larger than the quantity in the right-hand column of the corresponding row of Table~\ref{tab:property},
	the matrix multiplication property
	\begin{eqnarray*}
		\big\| \V^T \PP \PP^T \Y - \V^T \Y \big\|_F^2
		\; \leq \; \epsilon \|\Y \|_F^2  
	\end{eqnarray*}
	holds with probability at least $1-\delta_2$.
\end{lemma}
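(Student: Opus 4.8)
The plan is to treat each of the five sketching methods listed in Table~\ref{tab:property} in turn, reducing both claimed inequalities to \emph{approximate matrix multiplication} statements whose sufficient sketch sizes are already established in the literature. The key observation is that, since $\V$ has orthonormal columns, $\V^T\V=\I_s$, so the subspace embedding inequality $\|\V^T\PP\PP^T\V-\I_s\|_2\le\eta$ is precisely the assertion that $\V^T\PP\PP^T\V$ approximates the product $\V^T\V$ in spectral norm, while the second inequality asserts that $\V^T\PP\PP^T\Y$ approximates $\V^T\Y$ in Frobenius norm. The two columns of Table~\ref{tab:property} record the sketch size sufficient for the first (spectral-norm) guarantee and the sketch size sufficient for the second (Frobenius-norm) guarantee, respectively, and the lemma follows by reading off whichever is relevant.

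For the two sampling-based sketches (uniform sampling and leverage score sampling), I would write $\PP\PP^T=\sum_{j=1}^c\frac{1}{c\,p_{i_j}}\e_{i_j}\e_{i_j}^T$, where $i_1,\dots,i_c$ are the i.i.d.\ sampled indices and $p_1,\dots,p_n$ the sampling probabilities, so that $\V^T\PP\PP^T\V=\sum_{j=1}^c\X_j$ is a sum of i.i.d.\ rank-one PSD matrices with $\EB[\sum_j\X_j]=\I_s$ and $\|\X_j\|_2\le\frac1c\max_i\frac{\|\e_i^T\V\|_2^2}{p_i}$ almost surely. The matrix Chernoff/Bernstein inequality then gives the spectral-norm bound with failure probability $\delta_1$ once $c\gtrsim\bigl(\max_i\|\e_i^T\V\|_2^2/p_i\bigr)\eta^{-2}\log(s/\delta_1)$. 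For leverage score sampling, $p_i=\|\e_i^T\V\|_2^2/s$ makes this maximum exactly $s$; for uniform sampling, $p_i=1/n$ makes it $n\max_i\|\e_i^T\V\|_2^2=\mu s$ by the definition of the coherence of $\V$, which reproduces the corresponding table entries. For the Frobenius-norm property I would use the classical second-moment estimate for sampling-based matrix multiplication, $\EB\,\|\V^T\PP\PP^T\Y-\V^T\Y\|_F^2\le\frac1c\sum_{i=1}^n\frac{\|\e_i^T\V\|_2^2\|\e_i^T\Y\|_2^2}{p_i}$, which is at most $\frac sc\|\Y\|_F^2$ for leverage score sampling and at most $\frac{\mu s}{c}\|\Y\|_F^2$ for uniform sampling, and then convert to a high-probability statement by Markov's inequality (absorbing the factor $1/\delta_2$).

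For the three oblivious projections (Gaussian, SRHT, CountSketch), both properties are already in the literature and I would simply invoke them: the subspace embedding property for Gaussian sketches follows from an $\epsilon$-net argument over the unit sphere of $\range(\V)$, for the SRHT from the refined analysis of \citet{tropp2011improved} (the source of the $\log n$ term in the table), and for CountSketch from \citet{clarkson2013low,meng2013low,nelson2013osnap} (the source of the $s^2$ term). The Frobenius-norm matrix multiplication property holds for all three because they satisfy the JL moment property, giving $\EB\,\|\V^T\PP\PP^T\Y-\V^T\Y\|_F^2\lesssim\frac1c\|\V\|_F^2\|\Y\|_F^2=\frac sc\|\Y\|_F^2$, after which Markov's inequality finishes the argument.

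The probabilistic heart of the proof---matrix Chernoff/Bernstein for the sampling sketches, and the cited oblivious subspace embedding theorems for the projections---is standard, so I expect the only real work to be bookkeeping: matching the lower-order terms in Table~\ref{tab:property} exactly (the $s\log s$ term in the sampling rows coming from the $\log(s/\delta_1)$ factor in matrix Chernoff with $\eta$ held at a fixed constant, the $(s+\log n)\log s$ structure for the SRHT, and the $s^2$ term for CountSketch), and ensuring that the subspace-embedding parameter $\eta$ and the matrix-multiplication parameter $\epsilon$ are set consistently with the way Theorem~\ref{thm:nystrom} later consumes them.
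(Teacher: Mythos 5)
Your proposal is correct and is essentially the same approach as the paper: the paper's own proof simply collates the known sufficient sketch sizes from the literature (it is a reproduction of Lemma~2 of \citet{wang2015towards}, citing \citet{drineas2008cur,woodruff2014sketching} for the sampling rows, \citet{woodruff2014sketching} for Gaussian projections, \citet{drineas2011faster,lu2013faster,tropp2011improved} for the SRHT, and \citet{meng2013low,nelson2013osnap} for CountSketch), and your sketch just reconstructs those standard underlying arguments (matrix Chernoff plus the second-moment/Markov bound for the sampling sketches, and the cited oblivious subspace embedding and JL-moment results for the projections). Your bookkeeping for the table entries (coherence $\mu s$ for uniform sampling, $s$ for leverage sampling, and the Markov-based $1/(\epsilon\delta_2)$ factors) matches the stated bounds, so no gap remains beyond the literature citations the paper itself relies on.
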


\begin{table}[h]\setlength{\tabcolsep}{0.3pt}
	\caption{Sufficient sketching dimensions for the sketching properties of Lemma~\ref{lem:property} to hold. 
		The leverage score sampling referred to here is with respect to the leverage scores of the matrix $\V$ described in Lemma~\ref{lem:property}.
		Similarly, $\mu$ is the row coherence of $\V$.}
	\label{tab:property}
	\begin{center}
		\begin{small}
			\begin{tabular}{c c c c}
				\hline
				{\bf Sketching}
				&~~{\bf Subspace Embedding}~~
				&~~{\bf Matrix Multiplication}~~\\
				\hline
				Leverage Score Sampling
				&~~~$\frac{ s }{\eta^2 } \log \frac{s}{\delta_1} $~~~
				& ~~~$ \frac{ s}{\epsilon \delta_2} $~~~ \\
				Uniform Sampling
				&~~~$  \frac{\mu  s }{\eta^2 } \log \frac{s}{\delta_1}$~~~
				& ~~~$\frac{\mu s}{\epsilon \delta_2} $~~~ \\
				SRHT
				&~~~$ \frac{ s + \log n }{\eta^2 }  \log \frac{s}{\delta_1 } $~~~
				& ~~~$ \frac{s + \log n }{\epsilon \delta_2 } $~~~ \\
				~Gaussian Projection~
				&~~~$ \frac{s +  \log (1/\delta_1)}{\eta^2} $~~~
				& ~~~$  \frac{s}{\epsilon \delta_2}$~~~ \\
				CountSketch
				&~~~$ \frac{s^2}{\delta_1 \eta^2}  $~~~
				&  ~~~$  \frac{s}{\epsilon \delta_2}  $~~~ \\
				\hline
			\end{tabular}
		\end{small}
	\end{center}
\end{table}

\begin{proof}
	This lemma is a reproduction of \cite[Lemma 2]{wang2015towards}, and is a
	collation of heterogenously stated results from the literature.
	The random sampling estimates were established by~\citet{drineas2008cur,wang2016spsd,woodruff2014sketching}.
	The Gaussian projection was firstly analyzed by~\citet{johnson1984extensions};
	see~\citet{woodruff2014sketching} for a proof of the stated sufficient sketching dimensions.
	The SRHT estimates are from~\citet{drineas2011faster,lu2013faster,tropp2011improved}.
	The CountSketch estimates are from~\citet{meng2013low,nelson2013osnap}.
\end{proof}

\subsection{Completing the Proof of Theorem~\ref{thm:nystrom}}
\label{sec:proof:nystrom:theorem}

\begin{proof}
	Let $\K_s = \V_s \Lam_s \V_s^T \in \RB^{n\times n}$ be the truncated SVD of $\K$.
	It follows from Lemma~\ref{lem:property} and the sketch sizes defined in Table~\ref{tab:sketch} that
	\begin{eqnarray*}
		\big\| \V_s^T \PP \PP^T \V_s - \I_s \big\|_2 & \leq & \eta ,\\
		\big\| (\K^{1/2} - \K_s^{1/2}) \PP \PP^T \V_s \big\|_F^2 
		& \leq & \epsilon \big\| \K^{1/2} - \K_s^{1/2} \|_F^2
	\end{eqnarray*}
	hold simultaneously with probability at least $0.9$.
	
	It follows from Lemma~\ref{lem:proof:nystrom_decompose} that
	\begin{align*}
	&\big\| \K - \big( \C  \W^\dag \C^T \big)_s \big\|_*
	\; = \; \min_{\rk (\Z) \leq s} \big\|  \K^{1/2} - (\K^{1/2} \PP) \Z \big\|_F^2 \\
	& \leq \; \big\| \K^{1/2} - \K_s^{1/2} \big\|_F^2
	+  \big\| (\K^{1/2} - \K_s^{1/2}) \PP (\V_s^T \PP)^\dag  \big\|_F^2 \\
	& = \; \big\| \K - \K_s  \big\|_*
	+  \big\| (\K^{1/2} - \K_s^{1/2}) \PP \PP^T \V_s (\V_s^T \PP \PP^T \V_s)^\dag  \big\|_F^2 \\
	& \leq \; \big\| \K - \K_s  \big\|_*
	+ \sigma_{s}^{-2} \big( \V_s^T \PP \PP^T \V_s  \big) 
	\, \big\| (\K^{1/2} - \K_s^{1/2}) \PP \PP^T \V_s \big\|_F^2  ,
	\end{align*}
	where the former inequality follows from Lemma~\ref{lem:proof:power} (with $t=1$).
	It follows that
	\begin{eqnarray*}
		\big\| \K - \big( \C  \W^\dag \C^T \big)_s \big\|_*
		& \leq & \big\| \K - \K_s  \big\|_*
		+ \tfrac{\epsilon}{(1-\eta)^2}\big\| \K - \K_s  \big\|_*.
	\end{eqnarray*}
	We let $\eta$ be a constant and obtain the former claim of Theorem~\ref{thm:nystrom}.
	The latter claim of Theorem~\ref{thm:nystrom} follows from Lemma~\ref{lem:proof:nystrom_decompose}.
\end{proof}

\section{Proof of Theorem~\ref{thm:kkmeans:nystrom2}}

In this section, we will provide a proof of Theorem~\ref{thm:kkmeans:nystrom2}, our main result for kernel $k$-means approximation.
We will start, in Sections~\ref{sec:proof:proj}, \ref{sec:proof:kmeans}, and \ref{sec:proof:kernel}, by presenting several technical tools of independent interest (Lemmas~\ref{lem:proj:nuclear}, \ref{lem:PCP_kmeans}, and~\ref{lem:kernel_kmeans}, respectively).
Then, in Section~\ref{sec:proof:main}, we will use use these tools to prove Theorem~\ref{thm:kkmeans:nystrom2}.

The proof of Theorem~\ref{thm:kkmeans:nystrom2} will proceed by combining Theorem~\ref{thm:nystrom} with Lemmas~\ref{lem:proj:nuclear}, \ref{lem:PCP_kmeans}, and~\ref{lem:kernel_kmeans}.
For convenience, the structure of proof is given in Figure~\ref{fig:dependence}.

\begin{figure}[h]
	\begin{center}
		\centering
		{\includegraphics[width=0.8\textwidth]{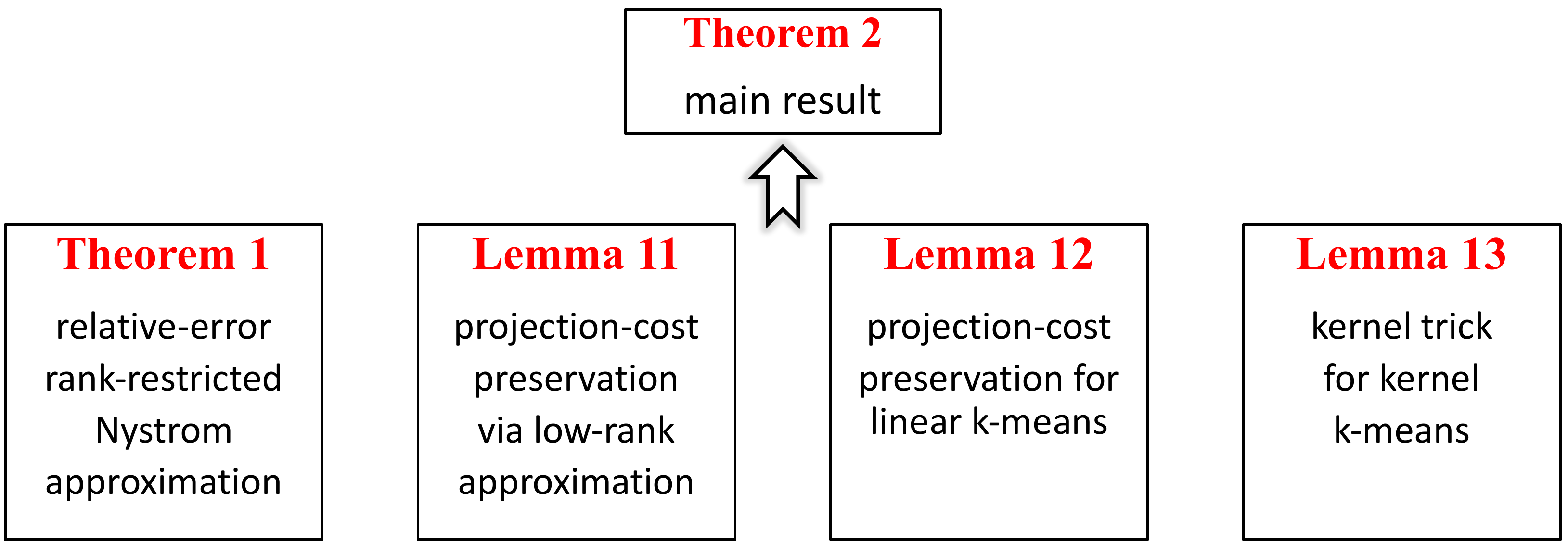}}
	\end{center}
	\caption{The structure of the proof of Theorem~\ref{thm:kkmeans:nystrom2}.}
	\label{fig:dependence}
\end{figure}

We use the notation of {\it orthogonal projection matrix} throughout this section.
A matrix $\M \in \RB^{n\times n}$ is an orthogonal projection matrix if and only if 
there exists a matrix $\Q$ with orthonormal columns such that
$\M = \Q \Q^T $.
It follows that $\M^T = \M $, $\M \M = \M $, $\M \preceq \I_n$
and that $\I_n - \M $ is also an orthogonal projection matrix.

\subsection{Projection-Cost Preservation via Low-Rank Approximation} \label{sec:proof:proj}

We first define projection-cost preservation, then show that certain low-rank approximations are projection-cost preserving. 
Projection-cost preservation was named and systematized by~\citet{cohen2015dimensionality}, but the idea
has been used in earlier works \citep{boutsidis2010random,boutsidis2015randomized,feldman2013turning,liang2014improved}.

\begin{defn} [Rank $k$ Projection-Cost Preservation] \label{def:pcp}
	Fix a matrix $\A \in \RB^{n \times d}$ and error parameters $\epsilon_1, \epsilon_2 > 0.$ The matrix $\B \in \RB^{n \times d}$ has the rank $k$ projection-cost preserving property with respect to $\A$ if
	for any rank $k$ orthogonal projection matrices $\Pii \in \RB^{n \times n}$,
	\[
	(1-\epsilon_1) \big\| \A - \Pii \A \big\|_F^2
	\; \leq \; \big\| \B - \Pii \B \big\|_F^2 + \alpha
	\; \leq \; (1+\epsilon_2) \big\| \A - \Pii \A \big\|_F^2 ,
	\]
	for some fixed non-negative $\alpha$ that can depend on $\A$ and $\B$ but is independent of $\Pii.$
\end{defn}

The next lemma provides a way to construct a projection-cost preserving sketch of $\A.$ It states that if a
rank-$s$ matrix $\tilde{\A}_s$ in the span of $\A$ is constructed so that $\tilde{\A}_s \tilde{\A}_s^T$ approximates $\A\A^T$ sufficiently well
in terms of the trace norm, then $\tilde{\A}_s$ has the projection-cost preserving property.
Observe that this lemma explicitly relates projection-cost preserving sketches to approximate matrix multiplication~\citep{drineas06fastmonte1,mahoney2011ramdomized}, where the latter is measures with respect to the trace norm~\citep{gittens2013revisiting}.
We provide a proof of Lemma~\ref{lem:proj:nuclear} in Appendix~\ref{sec:proof:proj:nuclear}.

\begin{lemma} [Projection-Cost Preservation via Low-Rank Approximation]
	\label{lem:proj:nuclear}
	Fix an error parameter $\epsilon \in (0, 1)$. Let $\A \in \RB^{n\times d}$ 
	and choose a rank-$s$ matrix $\tilde{\A}_s$ that 
	satisfies the following conditions:
	\begin{enumerate}[nolistsep,label=({\roman*})]
		\item 
		there is an orthogonal projection matrix $\M$ such that
		$\tilde\A_s \tilde{\A}_s^T = \A \M \A^T$, and
		\item
		$ \| \A \A^T - \tilde\A_s \tilde{\A}_s^T \|_*
		\leq (1+\epsilon ) \big\| \A \A^T - \A_s \A_s^T \big\|_*$.
	\end{enumerate}
	Then there exists a fixed $\alpha \geq 0$ such that, for any rank $k$ projection matrices $\Pii \in \RB^{n \times n}$,
	\[
	\big\| (\I_n - \Pii ) \A \big\|_F^2
	\; \leq \; \big\| (\I_n - \Pii )  \tilde{\A}_s \big\|_F^2 + \alpha
	\; \leq \; \big(1+ \epsilon + \tfrac{k}{s} \big) \big\| (\I_n - \Pii ) \A \big\|_F^2 .
	\]
\end{lemma}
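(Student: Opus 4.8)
The plan is to reduce the statement to a known projection-cost preservation result from \citet{cohen2015dimensionality} by exploiting the structural hypothesis $\tilde\A_s \tilde\A_s^T = \A \M \A^T$. First I would write, for any rank-$k$ orthogonal projection $\Pii$,
\[
\big\| (\I_n - \Pii) \tilde\A_s \big\|_F^2
= \tr\!\big( (\I_n - \Pii) \tilde\A_s \tilde\A_s^T (\I_n - \Pii) \big)
= \tr\!\big( (\I_n - \Pii) \A \M \A^T (\I_n - \Pii) \big),
\]
using idempotence and symmetry of $\I_n - \Pii$. The analogous identity for $\A$ gives $\| (\I_n - \Pii)\A\|_F^2 = \tr\big((\I_n-\Pii)\A\A^T(\I_n-\Pii)\big)$, so the whole statement is really a comparison between the quadratic forms $\A\A^T$ and $\A\M\A^T$ composed with the projection $\I_n - \Pii$. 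The key algebraic observation is that $\A\A^T - \A\M\A^T = \A(\I_n - \M)\A^T$ is SPSD (since $\I_n - \M$ is an orthogonal projection), hence $\tilde\A_s\tilde\A_s^T \preceq \A\A^T$, which immediately yields the lower bound $\|(\I_n-\Pii)\tilde\A_s\|_F^2 \le \|(\I_n-\Pii)\A\|_F^2$; taking $\alpha$ to be the constant chosen below, the left inequality then follows a fortiori.

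For the upper bound I would split $\|(\I_n - \Pii)\A\|_F^2$ via the matrix Pythagorean-type decomposition into the ``retained'' and ``discarded'' parts. Concretely, write $\|(\I_n - \Pii)\A\|_F^2 = \|\A\|_F^2 - \|\Pii\A\|_F^2$ and $\|(\I_n-\Pii)\tilde\A_s\|_F^2 = \|\tilde\A_s\|_F^2 - \|\Pii\tilde\A_s\|_F^2 = \tr(\A\M\A^T) - \tr(\Pii\A\M\A^T)$. Setting the constant $\alpha = \|\A\|_F^2 - \tr(\A\M\A^T) = \|\A(\I_n-\M)^{1/2}\|_F^2 \ge 0$ (which is independent of $\Pii$), the middle quantity becomes $\|(\I_n-\Pii)\tilde\A_s\|_F^2 + \alpha = \|\A\|_F^2 - \tr(\Pii\A\M\A^T)$. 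So the upper bound reduces to showing
\[
\|\A\|_F^2 - \tr(\Pii\A\M\A^T)
\;\le\; \big(1 + \epsilon + \tfrac{k}{s}\big)\big(\|\A\|_F^2 - \|\Pii\A\|_F^2\big),
\]
i.e.\ to controlling the gap $\tr(\Pii\A\A^T) - \tr(\Pii\A\M\A^T) = \tr\big(\Pii\A(\I_n-\M)\A^T\big)$ by $(\epsilon + \tfrac{k}{s})\|(\I_n-\Pii)\A\|_F^2$ plus the slack already present. Here I would invoke hypothesis (ii): since $\A(\I_n-\M)\A^T$ is rank at most $n-s$ but, more importantly, $\|\A\A^T - \tilde\A_s\tilde\A_s^T\|_* \le (1+\epsilon)\|\A\A^T - \A_s\A_s^T\|_* = (1+\epsilon)\sum_{i>s}\sigma_i(\A)^2$, and since $\Pii$ has rank $k$, we get $\tr\big(\Pii\A(\I_n-\M)\A^T\big) \le k \cdot \|\A(\I_n-\M)\A^T\|_2 \le \|\A(\I_n-\M)\A^T\|_* \le (1+\epsilon)\sum_{i>s}\sigma_i(\A)^2$; combining with the standard bound $\sum_{i>s}\sigma_i(\A)^2 \le \tfrac{k}{s}\cdot(\text{something comparable to }\|(\I_n-\Pii)\A\|_F^2)$ when $s \ge k$ closes the estimate.

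The main obstacle I anticipate is exactly this last comparison — getting the factor $\epsilon + \tfrac{k}{s}$ rather than something looser. The clean way is to cite the machinery already developed in \citet{cohen2015dimensionality} (their Lemma on projection-cost preserving sketches built from low-rank approximations, with trace-norm error parameter): they show that if $\tilde\A_s$ satisfies conditions of exactly this form then $\tfrac{k}{s}$ is the overhead coming from truncating at rank $s$ versus the best rank-$k$ projection, and $\epsilon$ is the overhead from the approximate (rather than exact) truncated SVD. So the cleanest proof is: verify that hypotheses (i) and (ii) of the present lemma imply the hypotheses of the relevant result in \citet{cohen2015dimensionality} (the role of their ``$E$'' matrix is played by $\A\A^T - \tilde\A_s\tilde\A_s^T$, which is SPSD with trace norm at most $(1+\epsilon)$ times the optimal rank-$s$ tail), and then quote their conclusion. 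If a self-contained argument is preferred, I would carry out the quadratic-form bookkeeping above and, at the critical step, use the inequality $\|\Pii\A\A^T\Pii\|_* - \text{(best rank-$k$ capture)} \ge 0$ together with Von~Neumann's trace inequality to absorb the $\tfrac{k}{s}$ term; the delicate point is ensuring the slack $\alpha$ is genuinely $\Pii$-independent, which is why it must be defined purely in terms of $\A$ and $\M$ as above.
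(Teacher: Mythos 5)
Your setup coincides with the paper's own proof: you pick the same constant $\alpha = \tr\big(\A\A^T - \tilde\A_s\tilde\A_s^T\big)$, use condition (i) to write the error as $\A(\I_n-\M)\A^T \succeq \0$, and reduce everything to controlling $\tr\big(\Pii\,\A(\I_n-\M)\A^T\,\Pii\big)$. One small slip along the way: the left inequality does not follow ``a fortiori'' from $\|(\I_n-\Pii)\tilde\A_s\|_F^2 \le \|(\I_n-\Pii)\A\|_F^2$ by adding $\alpha$; it follows from your own identity, since it is equivalent to $\tr\big(\Pii\A(\I_n-\M)\A^T\Pii\big)\ge 0$, which the PSD ordering gives directly.

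The genuine gap is at the critical step of the upper bound. You bound $\tr\big(\Pii\A(\I_n-\M)\A^T\big)$ by the full trace norm, $\|\A(\I_n-\M)\A^T\|_* \le (1+\epsilon)\|\A-\A_s\|_F^2$, and then hope that $\|\A-\A_s\|_F^2$ is comparable to $\tfrac{k}{s}\|(\I_n-\Pii)\A\|_F^2$. That comparison is false in general: with a flat tail $\sigma_{k+1}=\cdots=\sigma_n$ and $\Pii$ the projection onto the top $k$ left singular vectors, $\|\A-\A_s\|_F^2 \approx \|(\I_n-\Pii)\A\|_F^2$, so your chain would need $1+\epsilon \le \epsilon+\tfrac{k}{s}$, which fails whenever $k<s$. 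Discarding the complementary term is exactly what loses the factor. (Also, the intermediate step $\tr(\Pii\B)\le k\|\B\|_2\le\|\B\|_*$ is invalid, since $k\|\B\|_2$ can exceed $\|\B\|_*$ for low-rank $\B$, although the endpoint $\tr(\Pii\B)\le\|\B\|_*$ for SPSD $\B$ is fine.) The missing idea, which is the heart of the paper's Lemma~\ref{lem:proj:nuclear2}, is to keep the complementary term and lower-bound it: $\tr\big((\I_n-\Pii)(\A\A^T-\tilde\A_s\tilde\A_s^T)(\I_n-\Pii)\big) = \|(\I_n-\Pii)\A(\I_n-\M)\|_F^2 = \|\A - \Pii\A - (\I_n-\Pii)\A\M\|_F^2 \ge \min_{\rk(\Y)\le s+k}\|\A-\Y\|_F^2 = \|\A-\A_{s+k}\|_F^2$, because $\Pii\A$ has rank at most $k$ and $(\I_n-\Pii)\A\M$ has rank at most $s$. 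Subtracting this from $(1+\epsilon)\|\A-\A_s\|_F^2$ leaves $\epsilon\|\A-\A_s\|_F^2 + \sum_{i=s+1}^{s+k}\sigma_i^2(\A)$, and the averaging bound $\sum_{i=s+1}^{s+k}\sigma_i^2 \le \tfrac{k}{s}\sum_{i=k+1}^{s+k}\sigma_i^2 \le \tfrac{k}{s}\|\A-\A_k\|_F^2 \le \tfrac{k}{s}\|(\I_n-\Pii)\A\|_F^2$ produces exactly the $\epsilon+\tfrac{k}{s}$ factor. Your fallback of quoting \citet{cohen2015dimensionality} is reasonable in spirit (the paper credits those tools), but as written it is unverified: their statements impose specific conditions on the error matrix, and checking that hypotheses (i)--(ii) meet them amounts to essentially the rank-subadditivity argument you are missing.
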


\subsection{Linear $k$-Means Clustering and Projection-Cost Preservation} \label{sec:proof:kmeans}

Linear $k$-means clustering is formally defined by the optimization problem \eqref{eq:kmeans}.
To relate the linear $k$-means clustering problem to the projection-cost preservation property, we use an equivalent formulation of \eqref{eq:kmeans} as an optimization over the set of rank-$k$ projection matrices of the form $\X\X^T,$ where $\X$ is a \emph{cluster indicator matrix}. 
This approach was adopted 
by~\citet{boutisdis2009unsupervised,boutsidis2010random,boutsidis2015randomized,cohen2015dimensionality,ding2005equivalence}.
Following them, we define the cluster indicator matrix in the following and give an example in Figure~\ref{fig:kmeans_X}.

\begin{defn} [Cluster Indicator Matrices] \label{def:cluster}
	Let $n$ and $k$ be given, and let $\JM_1, \ldots , \JM_k$ be a $k$-partition of the set $[n]$.
	The cluster indicator matrix corresponding to $\JM_1, \ldots, \JM_k$ is the $n \times k$ matrix $\X$
	defined by
	\begin{eqnarray*}
		x_{ij} & = &
		\left\{ 
		\begin{array}{cc}
			\frac{1}{\sqrt{ | \JM_j |}} & \textrm{if } \, i \in \JM_j ; \\
			0 & \textrm{otherwise}.
		\end{array} 
		\right.
	\end{eqnarray*}
	We take $\XM_{n,k}$ to be the collection of cluster indicator matrices corresponding to all the $k$-partitions of $[n]$.
\end{defn}

\begin{figure}[h]
	\begin{center}
		\includegraphics[width=0.9\textwidth]{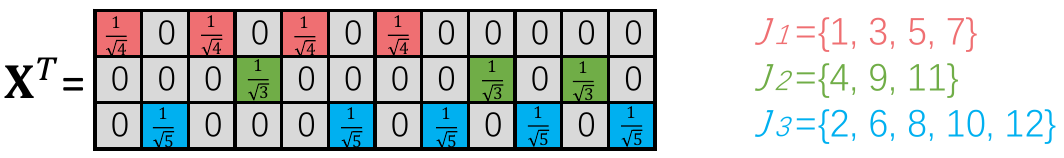}
	\end{center}
	\caption{An example of a cluster indicator matrix $\X \in \XM_{n,k}$.
		In this example, $n=12$, $k=3$, and $\X$ corresponds to the indicated 3-partition
		\textcolor{OrangeRed}{$\JM_1$}, \textcolor{ForestGreen}{$\JM_2$}, \textcolor{ProcessBlue}{$\JM_3$}.}
	\label{fig:kmeans_X}
\end{figure}

A cluster indicator matrix $\X$ has exactly one non-zero entry in each row and has orthonormal columns.
That is, because of the normalization of the non-zero entries, a cluster indicator matrix $\X$ is an orthogonal matrix, but one with the additional constraint that there is only one non-zero entry in each row.
(Indeed, the insight of~\citet{boutisdis2009unsupervised,boutsidis2010random,boutsidis2015randomized} was that the rank-constrained optimization over all orthonormal matrices, which provides the usual PCA/SVD-based low-rank approximation, is a relaxation of the optimization over this smaller set of orthonormal matrices.)
Among other things, it follows that $\X\X^T$ is a rank-$k$ projection matrix.

Given this, assume $i \in \JM_j$;
it can be verified that the $i$-th row of $\X \X^T \A \in \RB^{n\times d}$ is the centroid of $\JM_j$:
\[
(\X \X^T \A)_{i:}
\; = \; \x_{i:} \X^T \A
\; = \; \frac{1}{|\JM_j |} \sum_{l \in \JM_j} \a_{l:}.
\]
Consider the objective function of~\eqref{eq:kmeans}. When $\X$ is the cluster indicator matrix 
corresponding to the input $k$-partition, this objective can be rewritten in terms of $\X\X^T$ as 
\begin{align*}
&\sum_{j=1}^k \sum_{i\in \JM_j} \bigg\| \a_{i:} \, - \, \frac{1}{|\JM_j |} \sum_{l \in \JM_j} \a_{l:}  \bigg\|_2^2 
\; = \; \sum_{j=1}^k \sum_{i\in \JM_j} \big\| \a_{i:} - (\X \X^T \A)_{i:} \big\|_2^2 \\
& = \; \sum_{i=1}^n \big\| \a_{i:} - (\X \X^T \A)_{i:} \big\|_2^2
\; = \; \big\| \A - \X \X^T \A \big\|_F^2  .
\end{align*}
Therefore, the linear $k$-means clustering problem \eqref{eq:kmeans} is equivalent to the optimization problem
\begin{eqnarray} \label{eq:kmeans2}
\argmin_{\X \in \XM_{n,k}} \frac{1}{n} \big\| \A - \X \X^T \A \big\|_F^2.
\end{eqnarray}
In the remainder of this section,
we use this equivalence between $k$-partitionings of $[n]$ and cluster indicator matrices without
comment; e.g., we refer to the outputs of linear $k$-means clustering algorithms as cluster indicator matrices.
Because the linear $k$-means clustering problem \eqref{eq:kmeans2} is NP-hard,
in practice it is solved by variants of Lloyd's algorithm.

Definition~\ref{def:gamma} defines an appropriate notion of approximation for linear $k$-means clustering algorithms.
Definition~\ref{def:gamma} is an equivalent statement of Definition~\ref{def:gamma0}.

\begin{defn} [$\gamma$-Approximate Algorithms] \label{def:gamma}
	We call an algorithm $\AM_\gamma$ ($\gamma \geq 1$) 
	a {$\gamma$-approximate linear $k$-means algorithm} if it produces a cluster indicator matrix $\tilde\X$ such that
	\[
	\big\| \A - \tilde\X \tilde\X^T \A \big\|_F^2
	\; \leq \; \gamma \cdot \min_{\X \in \XM_{n,k}} \big\| \A - \X \X^T \A \big\|_F^2
	\]
	for any conformal matrix $\A$.
\end{defn}

Given this, we now state Lemma~\ref{lem:PCP_kmeans}, which relates projection-cost preservation to linear $k$-means clustering.
Let $\A \in \RB^{n\times d}$ be the input matrix
and $\B \in \RB^{n\times s}$ be a smaller matrix ($s < d$) satisfying the projection-cost preservation property.
The projection-cost preserving property ensures that the $k$-partition obtained by applying linear $k$-means clustering to the rows of $\B$, instead of $\A$,
is a good $k$-partition for the rows of $\A.$
This lemma was established by \citet{cohen2015dimensionality}.
To make this paper self-contained, we provide a proof of Lemma~\ref{lem:PCP_kmeans} in Appendix~\ref{sec:proof:PCP_kmeans}.

\begin{lemma} [Projection-Cost Preservation for Linear $k$-Means] 
	\label{lem:PCP_kmeans}
	Fix $\A \in \RB^{n\times d}$ and assume $\B \in \RB^{n\times s}$ satisfies the 
	rank $k$ projection-cost preservation property in Definition~\ref{def:pcp}
	with probability at least $1-\delta$.
	Let the error parameters $\epsilon_1 , \epsilon_2 \in (0, 1)$ be as in Definition~\ref{def:pcp}.
	If $\tilde\X_\B$ is the result of applying a $\gamma$-approximate linear $k$-means clustering algorithm to the rows of $\B,$
	then with probability at least $1-2 \delta$,
	\[
	\big\| \A - \tilde\X_\B \tilde\X_\B^T \A \big\|_F^2
	\; \leq \; \gamma \cdot 
	\tfrac{1+\epsilon_2}{1-\epsilon_1} \cdot \min_{\X \in \XM_{n,k}} \big\| \A - \X \X^T \A \big\|_F^2 .
	\]
\end{lemma}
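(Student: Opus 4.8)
The plan is to derive the claimed inequality from a short chain of three elementary bounds, carried out on the event on which $\B$ satisfies the projection-cost preservation property of Definition~\ref{def:pcp}; by hypothesis this event has probability at least $1-\delta$, so the conclusion will hold with probability at least $1-2\delta$ (if instead the sketch's guarantee is only per-projection, one simply union-bounds it over the two rank-$k$ projections that appear below). First I would fix a cluster indicator matrix $\X^\star \in \XM_{n,k}$ attaining $\min_{\X \in \XM_{n,k}} \| \A - \X \X^T \A \|_F^2$, and recall from Definition~\ref{def:cluster} that both $\X^\star (\X^\star)^T$ and $\tilde\X_\B \tilde\X_\B^T$ are rank-$k$ orthogonal projection matrices, so Definition~\ref{def:pcp} supplies a single fixed $\alpha \geq 0$ with
\[
(1-\epsilon_1)\, \big\| \A - \tilde\X_\B \tilde\X_\B^T \A \big\|_F^2 \;\leq\; \big\| \B - \tilde\X_\B \tilde\X_\B^T \B \big\|_F^2 + \alpha
\]
and
\[
\big\| \B - \X^\star (\X^\star)^T \B \big\|_F^2 + \alpha \;\leq\; (1+\epsilon_2)\, \big\| \A - \X^\star (\X^\star)^T \A \big\|_F^2 .
\]

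Next I would invoke the $\gamma$-approximation guarantee (Definition~\ref{def:gamma}) for the algorithm run on the rows of $\B$, combined with the fact that $\X^\star$ is a feasible clustering for $\B$, to get $\big\| \B - \tilde\X_\B \tilde\X_\B^T \B \big\|_F^2 \leq \gamma \min_{\X \in \XM_{n,k}} \big\| \B - \X \X^T \B \big\|_F^2 \leq \gamma \big\| \B - \X^\star (\X^\star)^T \B \big\|_F^2$. Substituting this into the first display, and using $\gamma \geq 1$ to rewrite the additive term via $\alpha \leq \gamma \alpha$, gives
\[
\big\| \A - \tilde\X_\B \tilde\X_\B^T \A \big\|_F^2 \;\leq\; \tfrac{\gamma}{1-\epsilon_1}\Big( \big\| \B - \X^\star (\X^\star)^T \B \big\|_F^2 + \alpha \Big) ,
\]
and the second display bounds the bracketed quantity by $(1+\epsilon_2)\big\| \A - \X^\star (\X^\star)^T \A \big\|_F^2 = (1+\epsilon_2)\min_{\X \in \XM_{n,k}} \big\| \A - \X \X^T \A \big\|_F^2$. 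Chaining these yields $\big\| \A - \tilde\X_\B \tilde\X_\B^T \A \big\|_F^2 \leq \tfrac{\gamma(1+\epsilon_2)}{1-\epsilon_1} \min_{\X \in \XM_{n,k}} \big\| \A - \X \X^T \A \big\|_F^2$, which is the assertion.

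The one point that needs care is the bookkeeping with the additive constant $\alpha$: it must be the \emph{same} $\alpha$ in both projection-cost preservation inequalities, which is exactly the content of Definition~\ref{def:pcp} (the constant depends on $\A$ and $\B$ but not on $\Pii$), and the step that makes the ``$+\alpha$'' emitted by the lower inequality compatible with the ``$+\alpha$'' consumed by the upper inequality is the trivial observation $\gamma \geq 1$. I do not expect any genuine analytic difficulty beyond orienting each projection-cost preservation inequality in the correct direction and tracking the $1-2\delta$ confidence level.
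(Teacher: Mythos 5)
Your proposal is correct and follows essentially the same route as the paper's proof: the same two instantiations of the projection-cost preservation inequalities at the rank-$k$ projections $\tilde\X_\B\tilde\X_\B^T$ and $\X^\star(\X^\star)^T$, the same use of the $\gamma$-approximation guarantee with $\X^\star$ as a feasible clustering for $\B$, and the same $\alpha \leq \gamma\alpha$ step to absorb the additive constant. Even your probability bookkeeping (union-bounding over the two projections to get $1-2\delta$) matches what the paper does.
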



\subsection{Kernel $k$-Means Clustering} \label{sec:proof:kernel}

Given input vectors $\a_1 , \ldots , \a_n$, the kernel $k$-means clustering algorithm applies 
linear $k$-means clustering to feature vectors $\ph (\a_1 ) , \ldots , \ph (\a_n )$.
For convenience, we let $\ph (\a_1 )^T , \ldots , \ph (\a_n )^T$ constitute the rows of the matrix $\Ph$.
Lemma~\ref{lem:kernel_kmeans} argues that all the information in $\Ph$ relevant to the kernel
$k$-means clustering problem is present in the kernel matrix $\K = \Ph \Ph^T.$
Variants of this lemma are well-known~\citep{scholkopf2002learning}.

\begin{lemma} [Kernel Trick for Kernel $k$-Means]
	\label{lem:kernel_kmeans}
	Let $\Ph$ be a matrix with $n$ rows and let $\K = \Ph \Ph^T \in \RB^{n\times n}$.
	Let $\K = \V \Lam \V^T$ be the EVD of $\K$.
	Then for any matrix $\X$ with orthonormal columns,
	\begin{eqnarray*}
		\big\| \Ph - \X \X^T \Ph \big\|_F^2 
		& = &\big\| (\V \Lam^{1/2}) - \X \X^T (\V \Lam^{1/2}) \big\|_F^2 .
	\end{eqnarray*}
\end{lemma}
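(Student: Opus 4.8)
The plan is to reduce each side of the claimed identity to the single scalar $\tr(\M\K)$, where $\M := \I_n - \X\X^T$, by exploiting the two factorizations $\K = \Ph\Ph^T = (\V\Lam^{1/2})(\V\Lam^{1/2})^T$ of the kernel matrix.

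First I would record the elementary fact that, because $\X$ has orthonormal columns, $\M = \I_n - \X\X^T$ is an orthogonal projection matrix, so $\M^T = \M$ and $\M\M = \M$. For the left-hand side this gives $\|\Ph - \X\X^T\Ph\|_F^2 = \|\M\Ph\|_F^2 = \tr(\Ph^T\M^T\M\Ph) = \tr(\M\Ph\Ph^T) = \tr(\M\K)$, where I used the cyclic invariance of the trace together with $\M^T\M = \M$ and $\K = \Ph\Ph^T$. Next I would set $\Ps := \V\Lam^{1/2}$, which is well-defined since $\K$ is SPSD and hence $\Lam$ has nonnegative diagonal entries; note $\Ps\Ps^T = \V\Lam\V^T = \K$. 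The identical computation yields $\|\Ps - \X\X^T\Ps\|_F^2 = \|\M\Ps\|_F^2 = \tr(\M\Ps\Ps^T) = \tr(\M\K)$. Comparing the two computations gives the asserted equality. Specializing $\X$ to a cluster indicator matrix $\X\in\XM_{n,k}$ then recovers the equivalence of the kernel $k$-means objective \eqref{eq:kernel_kmeans} with \eqref{eqn:kernel_kmeans_equivalent} claimed in Section~\ref{sec:intro}.

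I do not expect any genuine obstacle here: the argument is a two-line trace manipulation, and the only points that merit a moment's care are that $\M$ is a true orthogonal projection (this uses orthonormality, not mere linear independence, of the columns of $\X$), and that the number of columns of $\Ph$ is irrelevant because everything is phrased through $\K\in\RB^{n\times n}$. An equivalent route avoids the trace entirely by noting that $\M\Ph$ and $\M\Ps$ have the same Gram matrix up to the orthogonal change of basis implicit in $\Ph^T\M\Ph$ versus $\Ps^T\M\Ps$, but the trace computation above is the most transparent presentation.
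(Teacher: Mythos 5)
Your proof is correct and follows essentially the same route as the paper's: both reduce each side to $\tr\big((\I_n - \X\X^T)\K\big)$ using the idempotence and symmetry of the projection, the cyclicity of the trace, and the two factorizations $\K = \Ph\Ph^T = (\V\Lam^{1/2})(\V\Lam^{1/2})^T$. No gaps.
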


\begin{proof}
	Because $\X$ has orthonormal columns, $\X \X^T$ is an orthogonal projection matrix,
	and $(\I_n - \X \X^T)$ is the orthogonal projection onto the complementary space.
	It follows that, as claimed,
	\begin{align*} 
	&\big\| \Ph - \X \X^T \Ph \big\|_F^2 
	\; = \;  \tr \Big( \Ph^T (\I_n - \X \X^T) \Ph \Big)
	\; = \; \tr \Big( (\I_n - \X \X^T) \K \Big) \\
	& = \;\tr \Big( (\I_n - \X \X^T) \V \Lam^{1/2} \Lam^{1/2} \V (\I_n - \X \X^T) \Big)
	\; = \; \big\| (\V \Lam^{1/2}) - \X \X^T (\V \Lam^{1/2}) \big\|_F^2.
	\end{align*}
	The equalities are justified by, respectively, the definition of the squared Frobenius norm and the idempotence of projections;
	the cyclicity of the trace and the fact that $\K = \Ph \Ph^T$; the cylicity of the trace and the EVD of $\mathbf{K}$; and the definition of the squared Frobenius norm.
\end{proof}

\subsection{Completing the Proof of Theorem~\ref{thm:kkmeans:nystrom2}}
\label{sec:proof:main}

To complete the proof of Theorem~\ref{thm:kkmeans:nystrom2}, we first establish Lemma~\ref{lem:proof:main}, and then use it to prove Theorem~\ref{thm:proof:kkmeans:nystrom}, which is equivalent to Theorem~\ref{thm:kkmeans:nystrom2}.
Let $\Ph$ be a matrix with $n$ rows, and $\K = \Ph \Ph^T \in \RB^{n \times n}$ be the corresponding kernel matrix.

\begin{lemma} \label{lem:proof:main}
	Assume that $\B \in \RB^{n\times s}$ satisfies 
	\begin{eqnarray} \label{lem:proof:main:1}
	\tr \big( \K - \B \B^T \big)
	\; \leq \; (1+\epsilon ) \big\| \K - \K_s \big\|_* 
	\end{eqnarray}
	and that there is an orthogonal projection matrix $\M$ such that 
	\begin{eqnarray} \label{lem:proof:main:2}
	\B \B^T 
	\; = \; \K^{1/2} \M \K^{1/2} .
	\end{eqnarray}
	Use the rows of $\B$ as input to a $\gamma$-approximation algorithm (see Definition~\ref{def:gamma})
	and let $\tilde{\X}_\B$ be the output cluster indicator matrix (see Definition~\ref{def:cluster}), then
	\begin{eqnarray*}
		\big\| \Ph - \tilde\X_\B \tilde\X_\B^T \Ph \big\|_F^2
		\; \leq \; \gamma \big(1+ \epsilon + \tfrac{k}{s} \big) \cdot \min_{\X \in \XM_{n,k}} 
		\big\| \Ph - \X \X^T \Ph \big\|_F^2 .
	\end{eqnarray*}
\end{lemma}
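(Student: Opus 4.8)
The plan is to reduce Lemma~\ref{lem:proof:main} to the linear $k$-means machinery of the previous subsections by passing to the square-root matrix $\A := \K^{1/2}$, which satisfies $\A\A^T = \K = \Ph\Ph^T$. First I would record the elementary identity that, for every matrix $\X$ with orthonormal columns,
\[
\big\| \Ph - \X\X^T\Ph \big\|_F^2
\;=\; \tr\big( (\I_n - \X\X^T)\K \big)
\;=\; \big\| \A - \X\X^T\A \big\|_F^2 ,
\]
which follows from idempotence of $\I_n - \X\X^T$, cyclicity of the trace, and $\A\A^T = \K$ (this is exactly the computation behind Lemma~\ref{lem:kernel_kmeans}). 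Hence it suffices to prove the asserted inequality with $\Ph$ replaced by $\A$.

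Next I would verify that $\B$ plays the role of the rank-$s$ matrix $\tilde\A_s$ in Lemma~\ref{lem:proj:nuclear} relative to $\A$. Condition (i) holds verbatim: hypothesis~\eqref{lem:proof:main:2} gives $\B\B^T = \K^{1/2}\M\K^{1/2} = \A\M\A^T$ for the orthogonal projection $\M$. For condition (ii), since $\K$ is SPSD with EVD $\K = \V\Lam\V^T$, the square root $\A = \K^{1/2} = \V\Lam^{1/2}\V^T$ has the same eigenvectors in the same order, so $\A_s = \V_s\Lam_s^{1/2}\V_s^T$ and therefore $\A_s\A_s^T = \V_s\Lam_s\V_s^T = \K_s$. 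Moreover $\K - \B\B^T = \K^{1/2}(\I_n - \M)\K^{1/2}$ is SPSD because $\I_n - \M$ is an orthogonal projection, so $\|\K - \B\B^T\|_* = \tr(\K - \B\B^T)$; combining this with hypothesis~\eqref{lem:proof:main:1} yields
\[
\big\| \A\A^T - \B\B^T \big\|_*
\;=\; \tr\big( \K - \B\B^T \big)
\;\leq\; (1+\epsilon)\,\big\| \K - \K_s \big\|_*
\;=\; (1+\epsilon)\,\big\| \A\A^T - \A_s\A_s^T \big\|_* ,
\]
which is condition (ii). Lemma~\ref{lem:proj:nuclear} then furnishes a fixed $\alpha \geq 0$ such that for every rank-$k$ orthogonal projection $\Pii$,
\[
\big\| (\I_n - \Pii)\A \big\|_F^2
\;\leq\; \big\| (\I_n - \Pii)\B \big\|_F^2 + \alpha
\;\leq\; \big(1 + \epsilon + \tfrac{k}{s}\big)\big\| (\I_n - \Pii)\A \big\|_F^2 ,
\]
i.e., $\B$ satisfies the rank-$k$ projection-cost preservation property of Definition~\ref{def:pcp} with respect to $\A$, with $\epsilon_1 = 0$ and $\epsilon_2 = \epsilon + \tfrac{k}{s}$, and deterministically (so $\delta = 0$).

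To finish, I would invoke Lemma~\ref{lem:PCP_kmeans}: applying the $\gamma$-approximate linear $k$-means algorithm to the rows of $\B$ produces a cluster indicator matrix $\tilde\X_\B$ with
\[
\big\| \A - \tilde\X_\B\tilde\X_\B^T\A \big\|_F^2
\;\leq\; \gamma\cdot\tfrac{1+\epsilon_2}{1-\epsilon_1}\cdot\min_{\X\in\XM_{n,k}}\big\| \A - \X\X^T\A \big\|_F^2
\;=\; \gamma\big(1 + \epsilon + \tfrac{k}{s}\big)\min_{\X\in\XM_{n,k}}\big\| \A - \X\X^T\A \big\|_F^2 ,
\]
and then I would undo the substitution via the first-step identity, applied both to $\tilde\X_\B$ and to a minimizing $\X \in \XM_{n,k}$, to recover the stated bound for $\Ph$. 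None of the steps is deep; the only point requiring care is the bookkeeping that makes $\B$ fit the hypotheses of Lemma~\ref{lem:proj:nuclear} — in particular checking $\A_s\A_s^T = \K_s$ for $\A = \K^{1/2}$, and checking $\K - \B\B^T \succeq 0$ so that the trace hypothesis~\eqref{lem:proof:main:1} upgrades to the trace-norm bound needed for condition (ii).
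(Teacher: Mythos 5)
Your proposal is correct and follows essentially the same route as the paper's proof: set $\A=\K^{1/2}$, verify the two hypotheses of Lemma~\ref{lem:proj:nuclear} (your explicit checks that $\A_s\A_s^T=\K_s$ and that $\K-\B\B^T\succeq 0$ so the trace hypothesis gives the trace-norm bound are exactly the bookkeeping implicit in the paper's invocation), then apply Lemma~\ref{lem:PCP_kmeans} with $\epsilon_1=0$, $\epsilon_2=\epsilon+\tfrac{k}{s}$, and finally transfer back to $\Ph$ via the kernel-trick identity of Lemma~\ref{lem:kernel_kmeans}. No gaps.
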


\begin{proof}
	Because assumptions \eqref{lem:proof:main:1} and \eqref{lem:proof:main:2} are satisfied,
	Lemma~\ref{lem:proj:nuclear} ensures that $\B$ is a rank $k$ projection-cost preserving sketch of $\K$. That is, there exists 
	a constant $\alpha \geq 0$ independent of $\Pii$ such that
	\begin{eqnarray*}
		\big\| (\I_n - \Pii ) \K^{1/2} \big\|_F^2
		\; \leq \; \big\| (\I_n - \Pii )  \B \big\|_F^2 + \alpha
		\; \leq \; \big(1+ \epsilon + \tfrac{k}{s} \big) \big\| (\I_n - \Pii ) \K^{1/2} \big\|_F^2 
	\end{eqnarray*}
	holds for any rank-$k$ orthogonal projection matrix $\Pii$.
	It follows by Lemma~\ref{lem:PCP_kmeans} that $\tilde\X_\B$ gives an almost optimal linear $k$-means clustering over the rows of $\K^{1/2}$:
	\begin{eqnarray*}
		\big\| \K^{1/2} - \tilde\X_\B \tilde\X_\B^T \K^{1/2} \big\|_F^2
		\; \leq \; \gamma \big(1+ \epsilon + \tfrac{k}{s} \big) \cdot \min_{\X \in \XM_{n,k}} 
		\big\| \K^{1/2} - \X \X^T \K^{1/2} \big\|_F^2 .
	\end{eqnarray*}
	Finally, Lemma~\ref{lem:kernel_kmeans} states that clustering the rows of $\K^{1/2}$ is equivalent to clustering the rows of $\Ph$, so we reach
	the desired conclusion that
	\begin{eqnarray*}
		\big\| \Ph - \tilde\X_\B \tilde\X_\B^T \Ph \big\|_F^2
		\; \leq \; \gamma \big(1+ \epsilon + \tfrac{k}{s} \big) \cdot \min_{\X \in \XM_{n,k}} 
		\big\| \Ph - \X \X^T \Ph \big\|_F^2.
	\end{eqnarray*}
\end{proof}

Finally, we state and prove Theorem~\ref{thm:proof:kkmeans:nystrom}. 
Theorem~\ref{thm:proof:kkmeans:nystrom} shows that approximate kernel $k$-means clustering using the Nystr\"om method exhibits a $1+\epsilon+ \tfrac{k}{s}$ approximation ratio.
Observe that Theorem~\ref{thm:proof:kkmeans:nystrom} is equivalent to Theorem~\ref{thm:kkmeans:nystrom2}, and thus establishing it also establishes Theorem~\ref{thm:kkmeans:nystrom2}.
Recall the Nystr\"om approximation is $\K \approx \C \W^\dag \C^T$,
where $\C = \K \PP$, $\W = \PP^T \K \PP$, and $\PP$ is a sketching matrix.

\begin{theorem} \label{thm:proof:kkmeans:nystrom}
	Choose a sketching sketching matrix $\PP \in \RB^{n\times c}$ and sketch size $c$ consistent with Table~\ref{tab:sketch}.
	Let $\B \in \RB^{n\times s}$ be any matrix satisfying $\B \B^T = (\C \W^\dag \C^T )_s$.
	Let the cluster indicator matrix $\X_\B$ be the output of any $\gamma$-approximate $k$-means clustering algorithm applied to the rows of $\B$. 
	With probability at least $0.9$, 
	\[
	\big\| \Ph - \X_\B \X_\B^T \Ph \big\|_F^2
	\; \leq \; \gamma \big( 1 + \epsilon + \tfrac{k}{s} \big) 
	\min_{\X \in \XM_{n,k}} \big\| \Ph - \X \X^T \Ph \big\|_F^2.
	\]
\end{theorem}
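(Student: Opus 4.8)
The plan is to derive Theorem~\ref{thm:proof:kkmeans:nystrom} as an essentially immediate consequence of Theorem~\ref{thm:nystrom} and Lemma~\ref{lem:proof:main}. Recall that $\B \in \RB^{n\times s}$ is chosen so that $\B \B^T = (\C \W^\dag \C^T )_s$; such a $\B$ exists precisely because the structural half of Theorem~\ref{thm:nystrom} guarantees $(\C \W^\dag \C^T )_s = \K^{1/2} \Q \Q^T \K^{1/2}$ for some $n\times s$ column-orthogonal matrix $\Q$, which in particular exhibits $(\C \W^\dag \C^T )_s$ as SPSD. It therefore suffices to verify the two hypotheses \eqref{lem:proof:main:1} and \eqref{lem:proof:main:2} of Lemma~\ref{lem:proof:main} for this $\B$, working on the single event (of probability at least $0.9$) on which the conclusion of Theorem~\ref{thm:nystrom} holds.

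First I would dispatch hypothesis \eqref{lem:proof:main:2}: take $\M = \Q \Q^T$, which is an orthogonal projection matrix because $\Q$ has orthonormal columns, and observe that $\B \B^T = (\C \W^\dag \C^T )_s = \K^{1/2} \Q \Q^T \K^{1/2} = \K^{1/2} \M \K^{1/2}$, exactly as required. Next I would verify hypothesis \eqref{lem:proof:main:1}. Writing the Nystr\"om residual as $\K - (\C \W^\dag \C^T )_s = \K^{1/2} ( \I_n - \Q \Q^T ) \K^{1/2}$ shows it is SPSD, so its trace equals its trace norm, and hence
\begin{eqnarray*}
\tr \big( \K - \B \B^T \big)
\; = \; \big\| \K - ( \C \W^\dag \C^T )_s \big\|_*
\; \leq \; (1 + \epsilon ) \, \big\| \K - \K_s \big\|_* ,
\end{eqnarray*}
where the inequality is precisely the relative-error trace-norm bound of Theorem~\ref{thm:nystrom}. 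Both hypotheses thus hold on the same probability-$0.9$ event.

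With \eqref{lem:proof:main:1} and \eqref{lem:proof:main:2} in hand, Lemma~\ref{lem:proof:main} immediately yields
\[
\big\| \Ph - \X_\B \X_\B^T \Ph \big\|_F^2
\; \leq \; \gamma \big( 1 + \epsilon + \tfrac{k}{s} \big)
\min_{\X \in \XM_{n,k}} \big\| \Ph - \X \X^T \Ph \big\|_F^2
\]
with probability at least $0.9$, which is the claim; as the text notes, this is equivalent to Theorem~\ref{thm:kkmeans:nystrom2}.

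I do not anticipate a genuine obstacle in this final step: the conceptual content is already absorbed into Theorem~\ref{thm:nystrom} (the relative-error bound for the rank-restricted sketch, together with its structural representation) and into Lemma~\ref{lem:proof:main} (which chains the projection-cost-preservation results, Lemmas~\ref{lem:proj:nuclear} and~\ref{lem:PCP_kmeans}, with the kernel-trick identity of Lemma~\ref{lem:kernel_kmeans}). The only point that warrants care is the SPSD-ness of the residual $\K - (\C \W^\dag \C^T )_s$, which is what licenses passing between $\tr(\cdot)$ and $\|\cdot\|_*$ when matching the output of Theorem~\ref{thm:nystrom} to the trace-form hypothesis of Lemma~\ref{lem:proof:main}; this follows from the structural half of Theorem~\ref{thm:nystrom}. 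Beyond that, the remaining subtlety is merely bookkeeping---ensuring that the same $0.9$-probability good event underlies both the trace-norm bound and the structural decomposition, which is automatic since Theorem~\ref{thm:nystrom} delivers both on one event.
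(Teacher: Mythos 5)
Your proposal is correct and follows essentially the same route as the paper's proof: verify hypotheses \eqref{lem:proof:main:1} and \eqref{lem:proof:main:2} via the trace-norm bound and the structural identity $(\C\W^\dag\C^T)_s = \K^{1/2}\Q\Q^T\K^{1/2}$ from Theorem~\ref{thm:nystrom}, then invoke Lemma~\ref{lem:proof:main}. Your extra remark that the residual is SPSD (so $\tr(\cdot)$ equals $\|\cdot\|_*$) is a small detail the paper leaves implicit, but it is the same argument.
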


\begin{proof}
	Theorem~\ref{thm:nystrom} shows that \eqref{lem:proof:main:1} holds with probability at least $0.9$.
	Theorem~\ref{thm:nystrom} also shows that $\B \B^T = \K^{1/2} \Q \Q^T \K^{1/2}$ 
	where $\Q$ is a $n\times s$ matrix with orthonormal columns, so~\eqref{lem:proof:main:2} holds surely.
	The theorem now follows from Lemma~\ref{lem:proof:main}.
\end{proof}

\section{Proof of Theorem~\ref{thm:power}} \label{sec:proof:power}

Section~\ref{sec:proof:power:1} analyzes rank-restricted Nyst\"om with power method.
Section~\ref{sec:proof:power:2} completes the proof of Theorem~\ref{thm:power}.
Here we re-state the algorithm of Theorem~\ref{thm:power}:
first, set any $c \geq s$ and draw a Gaussian projection matrix $\PP \in \RB^{n\times c}$;
second, run the power iteration $\PP \longleftarrow \K \PP$ for $t$ times;
third, orthogonalize $\PP \in \RB^{n\times c}$ to obtain $\U \in \RB^{n\times c}$;
fourth, compute $\C = \K \U $ and $\W = \U^T \K \U $;
last, find any $\B \in \RB^{n\times s}$ satisfying $\B \B^T = (\C \W^\dag \C^T )_s$.

\subsection{Rank-Restricted Nystr\"om with Power Method} \label{sec:proof:power:1}

Theorem~\ref{thm:nystrom:power} analyzes the rank-restricted Nystr\"om with power method.
The theorem will be used to prove Theorem~\ref{thm:power}.

\begin{theorem} \label{thm:nystrom:power}
	Let $\K \in \RB^{n\times n}$ be an SPSD matrix, $s$ be the target rank, 
	$c $ ($\geq s$) be arbitrary,
	and $\epsilon \in (0,1)$ be an approximation parameter.
	Let $\C \in \RB^{n\times c}$ and $\W \in \RB^{c\times c}$ be computed by the above algorithm with 
	$t = \OM (\frac{ \log (n / \epsilon ) }{ \log ( \sigma_{s} / \sigma_{s+1} ) })$
	power iterations, where $\sigma_i$ is the $i$-th singular value of $\K$.
	If $c = s + \OM (\log \frac{1}{\delta})$, then 
	\begin{eqnarray*}
		\big\| \K - (\C \W^\dag \C^T)_s \big\|_*
		&  \leq & (1+ \epsilon)  \,
		\big\| \K - \K_s \big\|_* 
	\end{eqnarray*}
	with probability at least $1-\delta$.
	If $c=s$, then this bound holds with a constant probability (that depends on $s$).
\end{theorem}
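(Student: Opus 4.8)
The plan is to mirror the proof of Theorem~\ref{thm:nystrom}, replacing the input from Lemma~\ref{lem:property} by the power-method bound Lemma~\ref{lem:proof:power}: the very small sketch size $c = s + \OM(\log\tfrac1\delta)$ is compensated by the geometric decay purchased through $t$ power iterations. Write $\PP\in\RB^{n\times c}$ for the initial Gaussian matrix and $\U = \orth(\K^t\PP)$, so the algorithm forms $\C = \K\U$ and $\W = \U^T\K\U$. Applying Lemma~\ref{lem:proof:nystrom_decompose} with sketching matrix $\U$ gives
\[
\big\|\K - (\C\W^\dag\C^T)_s\big\|_*
\; = \; \min_{\rk(\Z)\le s}\big\|\K^{1/2} - (\K^{1/2}\U)\Z\big\|_F^2 .
\]
Because $\range(\U) = \range(\K^t\PP)$, every column of $\K^{t+1/2}\PP = \K^{1/2}(\K^t\PP)$ lies in $\range(\K^{1/2}\U)$, so any rank-$\le s$ matrix $\K^{t+1/2}\PP\Z$ equals $(\K^{1/2}\U)\Z'$ for some $\Z'$ with $\rk(\Z')\le s$, and hence the right-hand side above is at most $\min_{\rk(\Z)\le s}\|\K^{1/2} - \K^{t+1/2}\PP\Z\|_F^2$. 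Now apply Lemma~\ref{lem:proof:power} with $\A = \K^{1/2}$ (SPSD, with singular values $\sigma_i^{1/2}$) and exponent $2t+1$, using that $(\K^{1/2})^{2t+1} = \K^{t+1/2}$ since $\K$ is SPSD and that the hypothesis ``$\PP^T\V_s$ has full column rank'' holds almost surely for a Gaussian $\PP$ with $c\ge s$; with $\xi = F$ and $\|\K^{1/2} - \K_s^{1/2}\|_F^2 = \sum_{i>s}\sigma_i = \|\K - \K_s\|_*$ this yields
\[
\big\|\K - (\C\W^\dag\C^T)_s\big\|_*
\; \le \; \big\|\K - \K_s\big\|_*
\; + \; \Big(\tfrac{\sigma_{s+1}}{\sigma_s}\Big)^{2t}
\big\|(\K^{1/2} - \K_s^{1/2})\PP(\V_s^T\PP)^\dag\big\|_F^2 .
\]

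It remains to show the residual term is at most $\epsilon\|\K - \K_s\|_*$ with the stated probability. Decompose $\PP$ along $[\V_s,\V_s^\perp]$ and set $\Omega_1 = \V_s^T\PP$, $\Omega_2 = (\V_s^\perp)^T\PP$; these are independent Gaussian blocks of sizes $s\times c$ and $(n-s)\times c$. Then $(\K^{1/2} - \K_s^{1/2})\PP = \V_s^\perp(\Lam^\perp)^{1/2}\Omega_2$ and $(\V_s^T\PP)^\dag = \Omega_1^\dag$, so the residual norm equals $\|(\Lam^\perp)^{1/2}\Omega_2\Omega_1^\dag\|_F^2 \le \|(\Lam^\perp)^{1/2}\Omega_2\|_F^2\,\|\Omega_1^\dag\|_2^2$. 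The first factor is a weighted sum of independent $\chi^2$ variables concentrating around $\|\K - \K_s\|_*$ (up to the Gaussian scaling), so a sub-exponential tail bound gives $\|(\Lam^\perp)^{1/2}\Omega_2\|_F^2 \le \OM(c)\,\|\K - \K_s\|_* \le \poly(n)\,\|\K - \K_s\|_*$ with probability $\ge 1 - \tfrac\delta2$ (using $c\le n$ and $\log\tfrac1\delta\lesssim c$). For the second factor, when $c = s + \OM(\log\tfrac1\delta)$ the sharp non-asymptotic lower tail for the smallest singular value of a nearly-square Gaussian matrix gives $\sigma_{\min}(\Omega_1)\ge 1/\poly(n)$ with probability $\ge 1-\tfrac\delta2$, hence $\|\Omega_1^\dag\|_2^2 = \poly(n)$. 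Multiplying, the residual is at most $(\sigma_{s+1}/\sigma_s)^{2t}\poly(n)\,\|\K - \K_s\|_*$, and choosing $t = \OM\!\big(\tfrac{\log(n/\epsilon)}{\log(\sigma_s/\sigma_{s+1})}\big)$ with a sufficiently large constant forces $(\sigma_{s+1}/\sigma_s)^{2t}\le \epsilon/\poly(n)$, which gives the claim; a union bound over the two events yields probability $\ge 1-\delta$. For $c = s$ the matrix $\Omega_1$ is exactly square, so the near-square estimate is replaced by the square-Gaussian least-singular-value bound of \citep{tao2010random} (valid with probability $1-\OM(s^{-\tau})$ at the cost of an extra $\poly(s)\le\poly(n)$ factor, harmlessly absorbed into $t$), and a constant-probability version of the $\chi^2$ bound then gives the stated probability $0.9 - \OM(s^{-\tau})$.

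I expect the residual bound in the near-critical regime $c = s + \OM(\log\tfrac1\delta)$ to be the main obstacle: the usual Davidson--Szarek concentration for $\sigma_{\min}$ of a Gaussian matrix is worthless when the oversampling $c - s$ is only logarithmic, so one must invoke the sharper lower tail for the smallest singular value of an almost-square Gaussian matrix to obtain a $\poly(n)$ (rather than $\poly(1/\delta)$) bound on $\|\Omega_1^\dag\|_2$, and then carefully verify that this factor, together with the one from concentrating $\|(\Lam^\perp)^{1/2}\Omega_2\|_F^2$, is dominated by the contraction $(\sigma_{s+1}/\sigma_s)^{2t}$ for the claimed number of power iterations; the $c = s$ case is more delicate still, since a square Gaussian has a genuinely heavier lower tail and only near-constant success probability.
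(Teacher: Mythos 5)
Your proposal follows essentially the same route as the paper's proof: Lemma~\ref{lem:proof:nystrom_decompose} reduces the error to a rank-restricted regression problem in $\K^{1/2}$, the range of $\K^{1/2}\U$ is replaced by that of $\K^{t+1/2}\PP$ so that Lemma~\ref{lem:proof:power} applied to $\A=\K^{1/2}$ with exponent $2t+1$ supplies the $(\sigma_{s+1}/\sigma_s)^{2t}$ contraction, the residual is factored into the independent Gaussian blocks $\V_s^T\PP$ and $\V_{-s}^T\PP$ and bounded by a $\poly(n)$ multiple of $\|\K-\K_s\|_*$ via Gaussian singular-value estimates (Tao--Vu in the square case $c=s$), and that $\poly(n)$ factor is absorbed into the choice of $t$. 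The only deviations are minor: you keep $(\Lam^{\perp})^{1/2}$ attached to the tail block and use chi-square concentration where the paper uses $\|\K-\K_s\|_*\,\|\V_{-s}^T\PP\|_2^2$, and in the regime $c=s+\OM(\log\tfrac1\delta)$ you invoke the sharp $(C\epsilon)^{c-s+1}$-type lower tail for $\sigma_{\min}(\V_s^T\PP)$, which is in fact a more careful treatment than the paper's $\sqrt{c}-\sqrt{s}-\zeta$ bound, whose stated form can be vacuous when the oversampling is only logarithmic.
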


\begin{proof}
	Let $\V_{-s} \in \RB^{n\times (n-s)}$ be the orthogonal complement of $\V_s$.
	Let $\PP $ be an $n\times c$ standard Gaussian matrix.
	Then $\V_{-s}^T \PP$ is $(n-s)\times c$ standard Gaussian matrix,
	and $\V_{s}^T \PP$ is $s\times c$ standard Gaussian matrix.

	It is well known that the largest singular value of an $(n-s)\times c$ standard Gaussian matrix
	is at most $ \sqrt{n-s} + \sqrt{c} + \zeta$ with probability $1-\exp (-\zeta^2/2)$.
	See \citep{vershynin2010introduction}.
	
	Consider $c > s$.
	The least singular value of an  $s\times c$ standard Gaussian matrix is at least
	$\sqrt{c} - \sqrt{s} - \zeta$ with probability $1-\exp (-\zeta^2/2)$.
	Combining the above results, we have that
	\begin{eqnarray*}
		\frac{ \sigma_1^2 ( \V_{-s}^T \PP ) }{ \sigma_{s}^2 ( \V_s^T \PP ) }
		\; \leq \; \frac{ \sqrt{n-s} + \sqrt{c} + \sqrt{2 \log (1/\delta)} }{ \sqrt{c} - \sqrt{s} - \sqrt{2 \log (1/\delta)}  }
	\end{eqnarray*}
	holds with probability $1 - 2\delta$.
	
	Consider $c = s$.
	\citet{rudelson2008littlewood,tao2010random} showed that
	the least singular value of an $s\times s$ standard Gaussian matrix $\G$ satisfy
	\[
	\PB \big\{ \sigma_s (\G) \leq \tfrac{\delta_1}{\sqrt{s}} \big\}
	\; = \; \delta_1 + \OM (s^{- \tau}) ,
	\]
	where $\tau > 0$ is a constant.
	It follows that
	\begin{eqnarray*}
		\frac{ \sigma_1^2 ( \V_{-s}^T \PP ) }{ \sigma_{s}^2 ( \V_s^T \PP ) }
		\; \leq \; \frac{ \sqrt{n-s} + \sqrt{s} + \sqrt{2 \log (1/\delta_2)} }{ \delta_1 / \sqrt{s} }
	\end{eqnarray*}
	holds with probability $1 - \delta_1- \delta_2 - \OM (s^{- \tau})$.
	
	For either $c > s$ or $c = s$,
	it follows from Lemma~\ref{lem:nystrom:power:proof}, which will be proved subsequently, that 
	\begin{eqnarray*}
		\big\| \K - ( \C  \W^\dag \C^T )_s \big\|_* 
		& \leq & \big\| \K - \K_s \big\|_*
		+ \tfrac{ \sigma_1^2 ( \V_{-s}^T \PP ) }{ \sigma_{s}^2 ( \V_s^T \PP ) }
		\, \Big(\tfrac{\sigma_{s+1} (\K ) }{\sigma_{s} (\K ) } \Big)^{2t}  
		\,\big\| \K - \K_s \big\|_* \\
		& \leq & \big\| \K - \K_s \big\|_*
		+ \OM \big( n s \big) \, \Big(\tfrac{\sigma_{s+1} (\K ) }{\sigma_{s} (\K ) } \Big)^{2t}  
		\,\big\| \K - \K_s \big\|_* 
	\end{eqnarray*}
	holds with constant probability, by which the theorem follows.
\end{proof}

\begin{lemma} \label{lem:nystrom:power:proof}
	Let $\K \in \RB^{n\times n}$ be any fixed SPSD matrix,
	$s$ and $t$ be any positive integer,
	and $\V_s \in \RB^{n\times s}$ be the top $s$ singular vectors of $\K$.
	Let $\PP \in \RB^{n\times c}$ satisfy that $\PP^T \V_s \in \RB^{c\times s}$ has full column rank.
	Let $\U \in \RB^{n\times c}$ be the orthonormal bases of $\K^t \PP$.
	Let $\C = \K \U$ and $\W = \U^T \K \U$.
	Then
	\begin{align*}
	\big\| \K - ( \C  \W^\dag \C^T )_s \big\|_* 
	\; \leq \; \big\| \K - \K_s \big\|_*
	+ \Big(\tfrac{\sigma_{s+1} (\K ) }{\sigma_{s} (\K ) } \Big)^{2t}  
	\, \tfrac{ \sigma_1^2 ( \V_{-s}^T \PP ) }{ \sigma_{s}^2 ( \V_s^T \PP ) }
	\,\big\| \K - \K_s \big\|_* .
	\end{align*}
	Here $\V_{-s} \in \RB^{n\times (n-s)}$ is the orthogonal complement of $\V_s$.
\end{lemma}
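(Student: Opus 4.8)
The strategy is to combine the structural identity of Lemma~\ref{lem:proof:nystrom_decompose} with the power-scheme estimate of Lemma~\ref{lem:proof:power}, the latter applied not to $\K$ but to its square root $\K^{1/2}$. First I would reduce to a rank-constrained regression problem. Observe that $\C = \K\U$ and $\W = \U^T\K\U$ are precisely the Nystr\"om quantities associated with the sketching matrix $\U$, which has orthonormal columns. Writing the EVD $\K = \V\Si\V^T$ with $\V = [\V_s,\V_{-s}]$ and $\Si = \diag(\Si_s,\Si_{-s})$, we have $\V_s^T\K^t\PP = \Si_s^t(\V_s^T\PP)$, which has rank $s$ because $\PP^T\V_s$ has full column rank and $\Si_s$ is invertible; hence $\K^t\PP$, and therefore $\U$, has at least $s$ columns, so Lemma~\ref{lem:proof:nystrom_decompose} applies with $\U$ playing the role of the generic sketching matrix and gives
\[
\big\|\K - (\C\W^\dag\C^T)_s\big\|_* \;=\; \min_{\rk(\Z)\le s}\big\|\K^{1/2} - (\K^{1/2}\U)\Z\big\|_F^2 .
\]
Since $\U$ is an orthonormal basis of $\range(\K^t\PP)$, we have $\range(\K^{1/2}\U) = \K^{1/2}\range(\K^t\PP) = \range(\K^{t+1/2}\PP)$, so the collection of rank-$\le s$ matrices with column space inside $\range(\K^{1/2}\U)$ coincides with the corresponding collection for $\range(\K^{t+1/2}\PP)$; consequently the minimum above equals $\min_{\rk(\Z)\le s}\|\K^{1/2} - (\K^{t+1/2}\PP)\Z\|_F^2$.

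Next I would invoke Lemma~\ref{lem:proof:power} with the SPSD matrix $\A = \K^{1/2}$ and power exponent $2t+1$, so that $\A^{2t+1}\PP = \K^{t+1/2}\PP$. Here the top-$s$ singular subspace of $\K^{1/2}$ is again $\V_s$, one has $\sigma_i(\K^{1/2})^2 = \sigma_i(\K)$, and $\|\K^{1/2} - \K_s^{1/2}\|_F^2 = \sum_{i>s}\sigma_i(\K) = \|\K-\K_s\|_*$. The lemma's prefactor $(\sigma_{s+1}^2/\sigma_s^2)^{(2t+1)-1}$ therefore equals $(\sigma_{s+1}(\K)/\sigma_s(\K))^{2t}$, which yields
\[
\min_{\rk(\Z)\le s}\big\|\K^{1/2} - (\K^{t+1/2}\PP)\Z\big\|_F^2 \;\le\; \|\K-\K_s\|_* + \Big(\tfrac{\sigma_{s+1}(\K)}{\sigma_s(\K)}\Big)^{2t}\big\|(\K^{1/2}-\K_s^{1/2})\PP(\V_s^T\PP)^\dag\big\|_F^2 .
\]

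Finally I would control the residual term $\|(\K^{1/2}-\K_s^{1/2})\PP(\V_s^T\PP)^\dag\|_F^2$. Using $\K^{1/2} - \K_s^{1/2} = \V_{-s}\Si_{-s}^{1/2}\V_{-s}^T$ and that $\V_{-s}$ has orthonormal columns, this norm equals $\|\Si_{-s}^{1/2}(\V_{-s}^T\PP)(\V_s^T\PP)^\dag\|_F$, which by submultiplicativity is at most $\|\Si_{-s}^{1/2}\|_F\,\|\V_{-s}^T\PP\|_2\,\|(\V_s^T\PP)^\dag\|_2$; since $\|\Si_{-s}^{1/2}\|_F^2 = \tr(\Si_{-s}) = \|\K-\K_s\|_*$, $\|\V_{-s}^T\PP\|_2 = \sigma_1(\V_{-s}^T\PP)$, and $\|(\V_s^T\PP)^\dag\|_2 = 1/\sigma_s(\V_s^T\PP)$ (using that $\V_s^T\PP$ has full row rank $s$), the residual is at most $\|\K-\K_s\|_*\cdot\sigma_1^2(\V_{-s}^T\PP)/\sigma_s^2(\V_s^T\PP)$. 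Chaining the three steps gives exactly the asserted inequality. The one genuinely non-mechanical point is the realization that Lemma~\ref{lem:proof:power} must be invoked with base matrix $\K^{1/2}$ and power $2t+1$ (rather than with $\K$ and power $t$), together with the column-space identification $\range(\K^{1/2}\U) = \range(\K^{t+1/2}\PP)$ that transfers the structural formula from the orthonormalized sketch $\U$ back to the raw power sketch; everything after that is routine bookkeeping with singular values and submultiplicative norm bounds.
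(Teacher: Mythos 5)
Your proposal is correct and follows essentially the same route as the paper: reduce via Lemma~\ref{lem:proof:nystrom_decompose} to the rank-constrained regression $\min_{\rk(\Z)\le s}\|\K^{1/2}-(\K^{1/2}\U)\Z\|_F^2$, identify the column space of $\K^{1/2}\U$ with that of $\K^{t+1/2}\PP$, apply Lemma~\ref{lem:proof:power} with base matrix $\K^{1/2}$ and exponent $2t+1$, and bound the residual by submultiplicativity using $(\K^{1/2}-\K_s^{1/2})\PP=\V_{-s}\Si_{-s}^{1/2}\V_{-s}^T\PP$ and $\|(\V_s^T\PP)^\dag\|_2=1/\sigma_s(\V_s^T\PP)$. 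Your column-space identification $\range(\K^{1/2}\U)=\range(\K^{t+1/2}\PP)$ is in fact a slightly cleaner phrasing of the paper's change-of-basis step involving the matrix $\R$, but the argument is the same.
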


\begin{proof}
	It follows from Lemma~\ref{lem:proof:nystrom_decompose} that
	\begin{eqnarray*} 
		\big\| \K - \big( \C  \W^\dag \C^T \big)_s \big\|_*
		& = & \min_{\rk (\Z) \leq s} \big\|  \K^{1/2} - (\K^{1/2} \U) \Z \big\|_F^2 \\
		& = & \min_{\rk (\Z) \leq s} \big\|  \A - (\A \U) \Z \big\|_F^2
	\end{eqnarray*}
	where we let $\A = \K^{1/2}$. 
	By definition, $\U $ contains the orthonormal bases of $\K^t \PP =\A^{2t} \PP$,
	and thus there exists an $s\times s$ non-singular matrix $\R$ such that
	$\A^{2t} \PP = \U \R$.
	It follows that
	\begin{eqnarray*} 
		\big\| \K - \big( \C  \W^\dag \C^T \big)_s \big\|_*
		& = & \min_{\rk (\Z) \leq s} \big\|  \A - (\A \A^{2t} \PP \R^{-1}) \Z \big\|_F^2 \\
		& = & \min_{\rk (\Y) \leq s} \big\|  \A - (\A^{2t+1} \PP) \Y \big\|_F^2 ,
	\end{eqnarray*}
	where the latter equality follows from that the column spaces of
	$\A^{2t+1} \PP$ and $\A^{2t+1} \PP \R^{-1}$ are the same.
	It follows from Lemma~\ref{lem:proof:power} that
	\begin{align*}
	& \big\| \K - \big( \C  \W^\dag \C^T \big)_s \big\|_*
	\; = \; \min_{\rk (\Y) \leq s} \big\| \A - (\A^{2t+1} \PP) \Y \big\|_F^2 \\
	& \leq \; \big\| \A - \A_s \big\|_F^2
	+ \Big(\tfrac{\sigma_{s+1}^2 (\A) }{\sigma_{s}^2 (\A) } \Big)^{2t}  
	\, \Big\| (\A - \A_s) \PP ( \V_s^T \PP )^\dag   \Big\|_F^2  \\
	& \leq \; \big\| \A - \A_s \big\|_F^2
	+ \Big(\tfrac{\sigma_{s+1}^2 (\A) }{\sigma_{s}^2 (\A) } \Big)^{2t}  
	\, \big\| \A - \A_s \big\|_F^2 \big\| \V_{-s}^T \PP \big\|_2^2 \big\| ( \V_s^T \PP )^\dag \big\|_2^2 ,
	\end{align*}
	where we define $\V_{-s} \in \RB^{n\times (n-s)}$ as the orthogonal complement of $\V_s$.
	By our definition $\A = \K^{1/2}$, it follows that
	\begin{align*}
	& \big\| \K - \big( \C  \W^\dag \C^T \big)_s \big\|_* \\
	& \leq \; \big\| (\K - \K_s)^{1/2} \big\|_F^2
	+ \Big(\tfrac{\sigma_{s+1} (\K ) }{\sigma_{s} (\K ) } \Big)^{2t} 
	\, \tfrac{ \sigma_1^2 ( \V_{-s}^T \PP ) }{ \sigma_{s}^2 ( \V_s^T \PP ) }
	\,\big\| (\K - \K_s)^{1/2}\big\|_F^2 \\
	& = \; \big\| \K - \K_s \big\|_*
	+ \Big(\tfrac{\sigma_{s+1} (\K ) }{\sigma_{s} (\K ) } \Big)^{2t}  
	\, \tfrac{ \sigma_1^2 ( \V_{-s}^T \PP ) }{ \sigma_{s}^2 ( \V_s^T \PP ) }
	\,\big\| \K - \K_s \big\|_* ,
	\end{align*}
	by which the lemma follows.
\end{proof}

\subsection{Completing the Proof of Theorem~\ref{thm:power}} \label{sec:proof:power:2}

Finally, we state and prove Theorem~\ref{thm:proof:kkmeans:power}. 
Observe that Theorem~\ref{thm:proof:kkmeans:power} is equivalent to Theorem~\ref{thm:power}, 
and thus establishing it also establishes Theorem~\ref{thm:power}.

\begin{theorem} \label{thm:proof:kkmeans:power}
	Let $c$, $t$, and $\B \in \RB^{n\times s}$ be defined in the begining of this section.
	Set $t = \OM (\frac{ \log (n / \epsilon ) }{ \log ( \sigma_{s} / \sigma_{s+1} ) })$.
	Let the cluster indicator matrix $\X_\B$ be the output of any $\gamma$-approximate $k$-means clustering algorithm applied to the rows of $\B$. 
	If $c = s + \OM (\log \frac{1}{\delta}) $, then
	\[
	\big\| \Ph - \X_\B \X_\B^T \Ph \big\|_F^2
	\; \leq \; \gamma \big( 1 + \epsilon + \tfrac{k}{s} \big) 
	\min_{\X \in \XM_{n,k}} \big\| \Ph - \X \X^T \Ph \big\|_F^2
	\]
	holds with probability at least $1-\delta$.
	If $c = s$, then the above inequality holds with probability
	$0.9 - \OM (s^{- \tau})$, where $\tau$ is a constant.
\end{theorem}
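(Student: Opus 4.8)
The plan is to prove Theorem~\ref{thm:proof:kkmeans:power} by exactly the two-step argument used for Theorem~\ref{thm:proof:kkmeans:nystrom}, but substituting the power-method trace-norm guarantee (Theorem~\ref{thm:nystrom:power}) for Theorem~\ref{thm:nystrom}. First I would verify the two hypotheses of Lemma~\ref{lem:proof:main} for the matrix $\B$ returned by the algorithm of this section. Recall that $\C = \K \U$ and $\W = \U^T \K \U$, where $\U \in \RB^{n\times c}$ has orthonormal columns (it is an orthonormalization of $\K^t \PP$); hence $(\C, \W)$ is literally a Nyström pair with ``sketching matrix'' $\U$, and the structural result Lemma~\ref{lem:proof:nystrom_decompose} applies verbatim with $\PP$ replaced by $\U$. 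It produces an $n\times s$ matrix $\Q$ with orthonormal columns such that $(\C \W^\dag \C^T)_s = \K^{1/2} \Q \Q^T \K^{1/2}$; since $\B \B^T = (\C \W^\dag \C^T)_s$ by construction, hypothesis~\eqref{lem:proof:main:2} holds with $\M = \Q \Q^T$, surely.

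For hypothesis~\eqref{lem:proof:main:1}, observe that $\K - \B \B^T = \K^{1/2}(\I_n - \Q \Q^T)\K^{1/2}$ is SPSD, so $\tr(\K - \B \B^T) = \| \K - (\C \W^\dag \C^T)_s \|_*$. Apply Theorem~\ref{thm:nystrom:power} with the stated number of power iterations $t = \OM\big(\tfrac{\log(n/\epsilon)}{\log(\sigma_s/\sigma_{s+1})}\big)$: this choice drives $\OM(ns)\,(\sigma_{s+1}/\sigma_s)^{2t} \le \epsilon$, since absorbing the $\OM(ns)$ prefactor only adds a $\log(\OM(ns)) = \OM(\log n)$ term to the numerator, which is already accounted for. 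Therefore $\| \K - (\C \W^\dag \C^T)_s \|_* \le (1+\epsilon)\| \K - \K_s \|_*$, i.e., \eqref{lem:proof:main:1} holds, with the probability bookkeeping inherited from Theorem~\ref{thm:nystrom:power}: at least $1 - \delta$ when $c = s + \OM(\log\tfrac1\delta)$, and $0.9 - \OM(s^{-\tau})$ when $c = s$.

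With both hypotheses established, Lemma~\ref{lem:proof:main} immediately yields
\[
\big\| \Ph - \X_\B \X_\B^T \Ph \big\|_F^2 \;\le\; \gamma\big(1 + \epsilon + \tfrac{k}{s}\big)\, \min_{\X \in \XM_{n,k}} \big\| \Ph - \X \X^T \Ph \big\|_F^2
\]
on the event where \eqref{lem:proof:main:1} holds, which is exactly the claim; since Theorem~\ref{thm:proof:kkmeans:power} is equivalent to Theorem~\ref{thm:power}, this also establishes Theorem~\ref{thm:power}. I expect the only mildly delicate points to be (i) confirming that forming $\U$ from $\K^t \PP$ does not disturb the structural identity $\big((\K\U)(\U^T\K\U)^\dag(\K\U)^T\big)_s = \K^{1/2}\Q\Q^T\K^{1/2}$ — this is immediate because Lemma~\ref{lem:proof:nystrom_decompose} is a purely linear-algebraic statement valid for an \emph{arbitrary} matrix in the role of the sketch — and (ii) checking the calibration of $t$, namely that $\log(\OM(ns))$ is dominated so that the displayed choice of $t$ suffices to push the product below $\epsilon$. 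Both are routine; there is no new probabilistic content beyond what Theorem~\ref{thm:nystrom:power} already supplies, so the randomness enters only through the Gaussian matrix $\PP$ used in the power iteration.
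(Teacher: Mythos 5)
Your proposal is correct and follows essentially the same route as the paper: verify the two hypotheses of Lemma~\ref{lem:proof:main} --- the structural identity $\B\B^T = \K^{1/2}\Q\Q^T\K^{1/2}$ (deterministically, via the structural result underlying Theorem~\ref{thm:nystrom}) and the trace-norm bound \eqref{lem:proof:main:1} via Theorem~\ref{thm:nystrom:power} with the stated $t$ --- and then invoke Lemma~\ref{lem:proof:main}, with the failure probability inherited from Theorem~\ref{thm:nystrom:power}. Your explicit appeal to Lemma~\ref{lem:proof:nystrom_decompose} with $\U$ in the role of the sketch is, if anything, slightly more careful than the paper's citation, since that lemma is purely linear-algebraic and valid for an arbitrary matrix in that role.
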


\begin{proof}
	Since $\B \B^T = (\C \W^\dag \C^T)_s$ by definition,
	Theorem~\ref{thm:nystrom:power} ensures that \eqref{lem:proof:main:1} holds with some constant probability.
	Theorem~\ref{thm:nystrom} shows that $\B \B^T = \K^{1/2} \Q \Q^T \K^{1/2}$ 
	where $\Q$ is a $n\times s$ matrix with orthonormal columns, so~\eqref{lem:proof:main:2} holds surely.
	The theorem now follows from Lemma~\ref{lem:proof:main}.
\end{proof}

\section{Proof of Lemma~\ref{lem:proj:nuclear}} \label{sec:proof:proj:nuclear}

To prove Lemma~\ref{lem:proj:nuclear}, we first establish a key lemma, Lemma~\ref{lem:proj:nuclear2}.
This lemma will make use of the following two assumptions.

\begin{assumption} \label{assumption:proj:nuclear1}
	Assume that $\tilde{\A}_s$ satisfies
	$ \tr \big( \A \A^T - \tilde\A_s \tilde{\A}_s^T \big)
	\leq (1+\epsilon ) \big\| \A \A^T - \A_s \A_s^T \big\|_*$.
\end{assumption}

\begin{assumption} \label{assumption:proj:nuclear2}
	Assume there exists such an orthogonal projection matrix $\M$ that
	$\tilde\A_s \tilde{\A}_s^T = \A \M \A^T$.
\end{assumption}

\begin{lemma}\label{lem:proj:nuclear2}
	Let $\A \in \RB^{n\times d}$ be any fixed matrix.
	Fix an error parameter $\epsilon \in (0, 1)$.
	Let the rank $s$ matrix $\tilde{\A}_s$ satisfy 
	Assumptions~\ref{assumption:proj:nuclear1} and \ref{assumption:proj:nuclear2}.
	Then for any rank $k$ orthogonal projection matrix $\Pii \in \RB^{n\times n}$,
	\begin{align*} 
	& \tr \Big( \Pii \big(\A \A^T - \tilde\A_s \tilde\A_s^T \big) \Pii \Big)
	\; \leq \; \big( \epsilon + \tfrac{k}{s} \big)
	\big\| \A - \Pii \A \big\|_F^2 .
	\end{align*}
\end{lemma}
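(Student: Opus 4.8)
The plan is to reduce the assertion to a spectral inequality for the SPSD ``error'' matrix $\E := \A\A^T - \tilde\A_s\tilde\A_s^T$ and then to establish that inequality by pairing a Weyl eigenvalue bound with Assumption~\ref{assumption:proj:nuclear1}. First I would note that Assumption~\ref{assumption:proj:nuclear2} forces $\E = \A(\I_d - \M)\A^T \succeq \0$, so $\E$ is SPSD and $\tr(\E) = \|\E\|_*$. For any rank-$k$ orthogonal projection $\Pii$ we have $\tr(\Pii\E\Pii) = \tr(\Pii^2\E) = \tr(\Pii\E)$, and writing $\Pii = \sum_{j=1}^k \q_j \q_j^T$ with $\{\q_j\}$ orthonormal, the Ky Fan maximum principle for the SPSD matrix $\E$ gives $\tr(\Pii\E) = \sum_{j=1}^k \q_j^T \E \q_j \le \sum_{i=1}^k \lambda_i(\E)$. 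The same principle applied to $\A\A^T$, together with the matrix Pythagorean theorem, gives $\|(\I_n - \Pii)\A\|_F^2 = \|\A\|_F^2 - \|\Pii\A\|_F^2 \ge \sum_{i>k}\sigma_i^2(\A)$. Hence it suffices to prove the projection-free inequality
\[
\sum_{i=1}^k \lambda_i(\E) \;\le\; \Bigl(\epsilon + \tfrac{k}{s}\Bigr)\sum_{i>k}\sigma_i^2(\A).
\]

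Next I would control the two ends of the spectrum of $\E$. Since $\A\A^T = \E + \tilde\A_s\tilde\A_s^T$ and $\tilde\A_s\tilde\A_s^T$ is SPSD of rank at most $s$ (so its $(s{+}1)$-st eigenvalue vanishes), Weyl's inequality $\lambda_{i+s}(\X+\Y) \le \lambda_i(\X) + \lambda_{s+1}(\Y)$ with $\X = \E$, $\Y = \tilde\A_s\tilde\A_s^T$ gives $\sigma_{i+s}^2(\A) = \lambda_{i+s}(\A\A^T) \le \lambda_i(\E)$ for all $i \ge 1$; summing over $i > k$ yields $\sum_{i>k}\lambda_i(\E) \ge \sum_{i>k+s}\sigma_i^2(\A)$. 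For the other end, $\A\A^T - \A_s\A_s^T$ is SPSD with eigenvalues $\sigma_{s+1}^2(\A) \ge \sigma_{s+2}^2(\A) \ge \cdots$, so $\|\A\A^T - \A_s\A_s^T\|_* = \sum_{i>s}\sigma_i^2(\A)$, and Assumption~\ref{assumption:proj:nuclear1} reads $\tr(\E) \le (1+\epsilon)\sum_{i>s}\sigma_i^2(\A)$. Subtracting the tail bound from the trace bound,
\begin{align*}
\sum_{i=1}^k \lambda_i(\E)
& \;=\; \tr(\E) - \sum_{i>k}\lambda_i(\E) \\
& \;\le\; (1+\epsilon)\sum_{i>s}\sigma_i^2(\A) - \sum_{i>s+k}\sigma_i^2(\A)
\;=\; \sum_{i=s+1}^{s+k}\sigma_i^2(\A) \;+\; \epsilon\sum_{i>s}\sigma_i^2(\A).
\end{align*}

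Finally I would close the argument by an averaging step, in the regime $k \le s$ that is the only one used in the paper (when $s < k$ the dimension-reduction step is never invoked). The $k$ numbers $\sigma_{s+1}^2(\A) \ge \cdots \ge \sigma_{s+k}^2(\A)$ are exactly the $k$ smallest among the $s$ numbers $\sigma_{k+1}^2(\A) \ge \cdots \ge \sigma_{k+s}^2(\A)$, so each is at most their average $\tfrac1s\sum_{i=k+1}^{k+s}\sigma_i^2(\A)$, giving $\sum_{i=s+1}^{s+k}\sigma_i^2(\A) \le \tfrac{k}{s}\sum_{i=k+1}^{k+s}\sigma_i^2(\A) \le \tfrac{k}{s}\sum_{i>k}\sigma_i^2(\A)$; moreover $\sum_{i>s}\sigma_i^2(\A) \le \sum_{i>k}\sigma_i^2(\A)$. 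Substituting into the previous display gives $\sum_{i=1}^k\lambda_i(\E) \le (\epsilon + \tfrac ks)\sum_{i>k}\sigma_i^2(\A) \le (\epsilon + \tfrac ks)\|(\I_n - \Pii)\A\|_F^2$, which is the claim. I expect the main obstacle to be the middle paragraph: spotting that the Weyl lower bound $\lambda_i(\E) \ge \sigma_{i+s}^2(\A)$ is precisely the ingredient needed to cancel the ``small'' eigenvalues of $\E$ against the spectral tail of $\A$ in Assumption~\ref{assumption:proj:nuclear1}, so that what survives is the length-$k$ block $\sigma_{s+1}^2(\A),\ldots,\sigma_{s+k}^2(\A)$, which the averaging observation then bounds by a $\tfrac ks$-fraction of $\|(\I_n - \Pii)\A\|_F^2$; the remaining manipulations with Ky Fan and the matrix Pythagorean theorem are routine.
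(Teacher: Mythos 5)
Your proof is correct, and it takes a genuinely different route through the key middle step. Writing $\E := \A\A^T - \tilde\A_s\tilde\A_s^T$, the paper expands $\tr(\Pii\E\Pii) = \tr(\E) - \tr\big((\I_n-\Pii)\E(\I_n-\Pii)\big)$, uses the explicit structure $\tilde\A_s\tilde\A_s^T = \A\M\A^T$ from Assumption~\ref{assumption:proj:nuclear2} to identify the subtracted term as $\|(\I_n-\Pii)\A(\I-\M)\|_F^2$, and lower-bounds it by $\|\A-\A_{s+k}\|_F^2$ via rank subadditivity ($\Pii\A$ has rank at most $k$ and $(\I_n-\Pii)\A\M$ has rank at most $s$). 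You instead bound $\tr(\Pii\E) \le \sum_{i\le k}\lambda_i(\E)$ by the Ky Fan principle and lower-bound the discarded tail by Weyl, $\lambda_i(\E) \ge \sigma_{i+s}^2(\A)$, using only that $\tilde\A_s\tilde\A_s^T$ has rank at most $s$ and that $\E \succeq \0$. Both arguments then land on exactly the same intermediate quantity $\epsilon\|\A-\A_s\|_F^2 + \sum_{i=s+1}^{s+k}\sigma_i^2(\A)$ and finish with the same averaging step and the bound $\|\A-\A_k\|_F^2 \le \|\A-\Pii\A\|_F^2$. What your route buys is a cleaner separation of $\Pii$ from $\E$ and a slightly weaker hypothesis: you never use the projection structure $\A\M\A^T$ itself, only $\0 \preceq \tilde\A_s\tilde\A_s^T \preceq \A\A^T$ together with $\rk(\tilde\A_s)\le s$ and Assumption~\ref{assumption:proj:nuclear1}. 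What the paper's route buys is that it stays at the level of Frobenius-norm identities and Eckart--Young, with no eigenvalue machinery (Ky Fan, Weyl) required.

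Two small remarks. First, your phrase ``each is at most their average'' is not literally true (the largest of the $k$ smallest values can exceed the overall average); but the inequality you actually use, namely that the \emph{sum} of the $k$ smallest of the $s$ values $\sigma_{k+1}^2(\A)\ge\cdots\ge\sigma_{k+s}^2(\A)$ is at most $\tfrac{k}{s}$ times their total, is true (average of the smallest $k$ is at most the average of all $s$) and is exactly the step the paper also invokes. Second, this averaging step requires $k\le s$ in both your proof and the paper's; that assumption is implicit throughout the paper (cf.\ Table~\ref{tab:notation}), so your restriction to that regime is consistent.
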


\begin{proof}
	It holds that
	\begin{align*}
	& \tr \Big( (\I_n - \Pii ) \big( \A \A^T - \tilde\A_s \tilde\A_s^T \big) \Big) \\
	& = \; \tr \Big(  \big( \A \A^T - \tilde\A_s \tilde\A_s^T \big)
	- \Pii \big(\A \A^T - \tilde\A_s \tilde\A_s^T \big)    \Big)  \\
	& = \; \tr \big( \A \A^T - \tilde\A_s \tilde\A_s^T \big)
	- \tr \big( \Pii \big(\A \A^T - \tilde\A_s \tilde\A_s^T \big) \Pii \big) \\
	& \leq \; (1+\epsilon ) \big\| \A \A^T - \A_s \A_s^T \big\|_*
	- \tr \big( \Pii \big(\A \A^T - \tilde\A_s \tilde\A_s^T \big) \Pii \big) ,
	\end{align*}
	where the inequality follows from Assumption~\ref{assumption:proj:nuclear1}.
	It can be equivalently written as
	\begin{align*} 
	& \tr \Big( \Pii \big(\A \A^T - \tilde\A_s \tilde\A_s^T \big) \Pii \Big) \nonumber \\
	& \leq \; (1+\epsilon ) \big\| \A \A^T - \A_s \A_s^T \big\|_*
	-  \tr \Big( (\I_n - \Pii ) \big( \A \A^T - \tilde\A_s \tilde\A_s^T \big) (\I_n - \Pii ) \Big) .
	\end{align*}
	Let $\M$ be an orthogonal projection matrix defined in Assumption~\ref{assumption:proj:nuclear2};
	by this assumption, it holds that $\tilde\A_s \tilde\A_s^T = \A \M \A^T$
	and $\rk (\M \A) = s$.
	It follows that
	\begin{align*}
	&\tr \Big( (\I_n - \Pii ) \big( \A \A^T - \tilde\A_s \tilde\A_s^T \big) (\I_n - \Pii ) \Big)
	\; = \; \tr \Big(  (\I_n - \Pii ) \A (\I - \M ) \A^T (\I_n - \Pii )  \Big) \\
	& = \;  \big\| (\I_n - \Pii ) \A (\I - \M ) \big\|_F^2 
	\; = \; \Big\| \A - \underbrace{\Pii \A}_{\textrm{rank } k} 
	- \underbrace{(\I_n - \Pii ) \A \M}_{\textrm{rank } s} \Big\|_F^2 \\
	& \geq \; \min_{\rk (\Y ) \leq s+k}
	\big\| \A - \Y \big\|_F^2
	\; = \; \big\| \A  - \A_{s+k} \big\|_F^2 ,
	\end{align*}
	where the inequality follows from that matrix rank is subadditive function.
	It follows that
	\begin{align*} 
	& \tr \Big( \Pii \big(\A \A^T - \tilde\A_s \tilde\A_s^T \big) \Pii \Big) \nonumber \\
	& \leq \; (1+\epsilon ) \big\| \A \A^T - \A_s \A_s^T \big\|_*
	-  \big\| \A  - \A_{s+k} \big\|_F^2 \\
	& = \; (1+\epsilon ) \big\| \A - \A_s  \big\|_F^2
	-  \big\| \A  - \A_{s+k} \big\|_F^2 \\
	& = \; \epsilon \big\| \A - \A_s  \big\|_F^2
	+ \sum_{i=s+1}^n \sigma_i^2(\A ) - \sum_{i=s+k+1}^n \sigma_i^2 (\A ) \\
	& = \; \epsilon \big\| \A - \A_s \big\|_F^2
	+ \sum_{i=s+1}^{s+k} \sigma_i^2(\A ) \\
	& \leq \; \epsilon \big\| \A - \A_s \big\|_F^2
	+ \frac{k}{s} \sum_{i=k+1}^{s+k} \sigma_i^2(\A )  ,
	\end{align*}
	where the last inequality follows by the decrease of the singular values.
	Finally, we obtain that
	\begin{align*} 
	& \tr \Big( \Pii \big(\A \A^T - \tilde\A_s \tilde\A_s^T \big) \Pii \Big)
	\; \leq \; \epsilon \big\| \A - \A_k \big\|_F^2
	+ \tfrac{k}{s} \big\| \A - \A_k \big\|_F^2 
	\; \leq \; \big( \epsilon + \tfrac{k}{s} \big)
	\big\| \A - \Pii \A \big\|_F^2 ,
	\end{align*}
	by which the lemma follows.
\end{proof}

\noindent
Given Lemma~\ref{lem:proj:nuclear2}, we now provide the proof of Lemma~\ref{lem:proj:nuclear}.
\\

\begin{proof}
	For any orthogonal projection matrix $\M$, it holds that $\M = \M \M$. It follows that
	\begin{align*}
	& \big\| (\I_n - \Pii ) \A \big\|_F^2 - \big\| (\I_n - \Pii )  \tilde\A_s \big\|_F^2
	\; = \;\tr \big( (\I_n - \Pii ) \A \A^T \big)
	- \tr \big( (\I_n - \Pii ) \tilde\A_s \tilde\A_s^T   \big) \\
	& = \; \tr \big( (\I_n - \Pii ) (\A \A^T - \tilde\A_s \tilde\A_s^T )  \big)
	\; = \; \tr \big( \A \A^T - \tilde\A_s \tilde\A_s^T  \big)
	- \tr \big( \Pii  (\A \A^T - \tilde\A_s \tilde\A_s^T )  \big) \\
	& \; = \; \alpha - \tr \big( \Pii  (\A \A^T - \tilde\A_s \tilde\A_s^T )  \big) ,
	\end{align*}
	where the last equality follows by letting 
	$\alpha = \tr \big( \A \A^T - \tilde\A_s \tilde\A_s^T  \big) \geq 0$ 
	which is independent of $\Pii$.
	The above equality can be equivalently written as
	\begin{eqnarray} \label{thm:projection_cost_rsvd:1}
	\big\| (\I_n - \Pii )  \tilde\A_s \big\|_F^2 + \alpha
	& = & \big\| (\I_n - \Pii ) \A \big\|_F^2
	+ \tr \big( \Pii  (\A \A^T - \tilde\A_s \tilde\A_s^T )  \Pii \big)  .
	\end{eqnarray}
	Under Assumption~\ref{assumption:proj:nuclear2} that $\tilde\A_s \tilde\A_s^T \preceq \A \A^T$,
	it follows from \eqref{thm:projection_cost_rsvd:1} that
	\begin{align*}
	&\big\| (\I_n - \Pii ) \A \big\|_F^2 \nonumber \\
	& \leq \; \big\| (\I_n - \Pii )  \tilde\A_s \big\|_F^2 + \alpha \\
	& = \; \big\| (\I_n - \Pii ) \A \big\|_F^2
	+ \tr \big( \Pii  (\A \A^T - \tilde\A_s \tilde\A_s^T )  \Pii \big)  .
	\end{align*}
	Under Assumptions~\ref{assumption:proj:nuclear1} and \ref{assumption:proj:nuclear2},
	we can apply Lemma~\ref{lem:proj:nuclear2} to bound the right-hand side:
	\begin{align*} 
	& \tr \Big( \Pii \big(\A \A^T - \tilde\A_s \tilde\A_s^T \big) \Pii \Big)
	\; \leq \; \big( \epsilon + \tfrac{k}{s} \big)
	\big\| \A - \Pii \A \big\|_F^2 ,
	\end{align*}
	by which the lemma follows.
\end{proof}

\section{Proof of Lemma~\ref{lem:PCP_kmeans}} \label{sec:proof:PCP_kmeans}

Because $\B $ enjoys the projection-cost preservation property,
there exists a constant $\alpha \geq 0$ such that
for any rank $k$ orthogonal projection matrices $\Pii_1$ and $\Pii_2$, the two inequalities
\begin{align}
& (1-\epsilon_1 ) \big\| \A - \Pii_1 \A \big\|_F^2
\; \leq \; \big\| \B - \Pii_1 \B \big\|_F^2 + \alpha , 
\label{eq:lem:PCP_kmeans:1} \\
& \big\| \B - \Pii_2 \B \big\|_F^2 + \alpha
\; \leq \; (1+\epsilon_2 ) \big\| \A - \Pii_2 \A \big\|_F^2 
\label{eq:lem:PCP_kmeans:2}
\end{align}
hold simultaneously with probability at least $1 - 2 \delta$.
Let $\Pii_1 = \tilde\X_\B \tilde\X_\B^T$
and $\X_\A^\star = \argmin_{\X \in \XM_{n,k}} \| \A - \X \X^T \A \|_F^2$.
It follows from \eqref{eq:lem:PCP_kmeans:1} and 
the definition of $\gamma$-approximate $k$-means algorithm that
\begin{align*} 
& (1-\epsilon_1) \big\| \A - \tilde\X_\B \tilde\X_\B^T \A \big\|_F^2
\; \leq \; \big\| \B - \tilde\X_\B \tilde\X_\B^T \B \big\|_F^2 + \alpha \\
& \leq \; \gamma \cdot \min_{\X \in \XM_{n,k}} \big\| \B - \X \X^T \B \big\|_F^2  + \alpha
\; \leq \; \gamma \big\| \B - \X_\A^\star {\X_\A^\star}^T \B \big\|_F^2  + \alpha .
\end{align*}
Let $\Pii_2 = \X_\A^\star {\X_\A^\star}^T$.
It follows from \eqref{eq:lem:PCP_kmeans:2} that
\[
\big\| \B - \X_\A^\star {\X_\A^\star}^T \B \big\|_F^2 + \alpha
\; \leq \; (1+\epsilon_2) \big\| \A - \X_\A^\star {\X_\A^\star}^T \A \big\|_F^2 .
\]
Combining the above results, we have that
\begin{align*}
&(1-\epsilon_1) \big\| \A - \tilde\X_\B \tilde\X_\B^T \A \big\|_F^2
\; \leq \; \gamma \big\| \B - \X_\A^\star {\X_\A^\star}^T \B \big\|_F^2  + \alpha \\
& \leq \; \gamma \big( \big\| \B - \X_\A^\star {\X_\A^\star}^T \B \big\|_F^2  + \alpha \big)
\; \leq \; \gamma (1+\epsilon_2 ) \big\| \A - \X_\A^\star {\X_\A^\star}^T \A \big\|_F^2 ,
\end{align*}
where the second inequality follows from that $\gamma \geq 1$ and $\alpha \geq 0$.
It follows that
\begin{align*}
& \big\| \A - \tilde\X_\B \tilde\X_\B^T \A \big\|_F^2
\; \leq \;  \gamma \tfrac{1+\epsilon_2 }{1-\epsilon_1} \big\| \A - \X_\A^\star {\X_\A^\star}^T \A \big\|_F^2 
\; = \;  \gamma \tfrac{1+\epsilon_2 }{1-\epsilon_1} 
\min_{\X \in \XM_{n,k}} \big\| \A - \X {\X}^T \A \big\|_F^2
\end{align*}
This concludes our proof.

\vskip 0.2in

\bibliography{matrix}

\end{document}